\documentclass{article}

\usepackage{microtype}
\usepackage{graphicx}
\usepackage{subcaption}
\usepackage{booktabs} 

\usepackage{hyperref}

\usepackage[accepted]{icml2026}

\usepackage{amsmath}
\usepackage{amssymb}
\usepackage{mathtools}
\usepackage{amsthm}
\usepackage{subcaption}
\usepackage{slashed}
\usepackage[table]{xcolor} 

\floatname{algorithm}{Algorithm}         
\usepackage{algorithm}
\usepackage{algorithmic}
\usepackage{amsmath}            
\renewcommand{\algorithmicrequire}{\textbf{Input:}}  
\renewcommand{\algorithmicensure}{\textbf{Output:}}  
\usepackage[capitalize,noabbrev]{cleveref}

\theoremstyle{plain}
\newtheorem{theorem}{Theorem}[section]

\newtheorem{lemma}{Lemma}[section] 

\theoremstyle{definition}

\newtheorem{remark}{Remark}[section]

\newif\ifshowchanges
\showchangesfalse  
\ifshowchanges
  \newcommand{\rev}[1]{\textcolor{blue}{#1}}
  \newcommand{\revtab}[1]{\begingroup\color{blue}#1\endgroup}
\else
  \newcommand{\rev}[1]{#1}
  \newcommand{\revtab}[1]{#1}
\fi
\usepackage[textsize=tiny]{todonotes}

\icmltitlerunning{Q-Guided Alignment for Return-Conditioned Supervised Learning}

\usepackage{enumitem}
\begin{document}
\newcommand{\methodname}{\textsc{Q-align DT}}
\twocolumn[
  \icmltitle{Return-to-Go Is More Than a Number: Q-Guided Alignment for Return-Conditioned Supervised Learning}



  \icmlsetsymbol{equal}{*}

  \begin{icmlauthorlist}
    \icmlauthor{Yuxiao Yang}{aaa}
    \icmlauthor{Weitong Zhang}{aaa}
  \end{icmlauthorlist}

  \icmlaffiliation{aaa}{University of North Carolina at Chapel Hill}

  \icmlcorrespondingauthor{Yuxiao Yang}{yxyang@unc.edu}
  \icmlcorrespondingauthor{Weitong Zhang}{weitongz@unc.edu}

  \icmlkeywords{Machine Learning, ICML}

  \vskip 0.3in
]



\printAffiliationsAndNotice{}  

\begin{abstract}
Conditioned Sequence Models (CSMs) learn policies by treating return-to-go (RTG) as a control signal.
However, existing CSMs often treat the RTGs as simple numerical inputs rather than aligning them with the performance of their policies.
In this paper, we propose \methodname, a framework that enforces this alignment by ensuring the $Q$-value of the output policy is consistent with the input RTG. 
By leveraging a $Q$ function to provide dense guidance to CSMs and further fine-tuning it using an \textit{RTG-perturbation} technique with the CSM, our method ensures that higher RTGs are consistently mapped to trajectories with higher expected returns.
Theoretically, we show that \methodname{} can efficiently learn the desired policy and output a near-optimal one when the RTG is sufficiently high.
Empirically, we demonstrate through extensive experiments that \methodname{} achieves superior controllability and performance across the D4RL benchmark. Remarkably, our model effectively learns a structured family of policies that maintains precise alignment and generalizes to tasks like velocity-tracking where prior methods fail. 
\end{abstract}

\section{Introduction}
Offline reinforcement learning (Offline RL) aims to learn an effective and robust policy that can be deployed without interacting with the environment, relying solely on pre-collected datasets \citep{Offline-RL}.
Recently, transformer architectures \citep{attention}, which have shown remarkable success in natural language processing \citep{Bert,gpt3,roberta} and computer vision \citep{attention-cv,mae,vit,swin}, have also been adopted in RL due to their powerful sequence modeling capabilities \citep{AD,dpt,AMAGO}.
Among these advances, Conditional Sequence Models (CSMs) \citep{DT,tt} provide a new perspective by treating policy learning as a supervised sequence modeling problem conditioned on the desired performance. 
In particular, Decision Transformer (DT) introduces a Return-to-Go (RTG) token which enables the model to generate trajectories that achieve expected returns rather than imitating the behavioral policy.


However, how the actual return obtained by a CSM \textit{aligns} with the target input RTG is a fundamental yet often 
overlooked property. Precise alignment enables a single model to represent a diverse family of policies and is essential for controllable robot behaviors with varying velocities~\citep{RADT}. 
Therefore, we would like to ask: 
\begin{center}
    \emph{How well can CSMs actually align with target RTGs?}
\end{center}
Unfortunately, recent studies~\citep{DC, RADT} report that existing CSMs often exhibit significant insensitivity to RTG and fail to achieve proper alignment. As empirically demonstrated in~\cref{fig:cmp}, we hypothesize that this failure stems from a lack of structural awareness: a robust CSM should capture the structure between different RTG targets and their corresponding behaviors instead of merely treating the RTGs as simple tokens.
Specifically, a higher RTG should consistently correspond to trajectories with higher expected cumulative returns, ensuring this \textit{partial order} between target RTGs and realized performance. 

In offline RL, enforcing such a partial order is challenging since it can be infeasible to construct sufficient trajectories within the fixed dataset following the partial order. 
To address this challenge, in this paper, we propose \methodname, which introduces an auxiliary $Q$-function to provide dense guidance. 
With a novel \emph{RTG-to-behavior} objective alongside an \emph{RTG-perturbation} technique, our method encourages the model to output actions that precisely reflect the relative differences in desired RTGs. We further integrate this RTG perturbation into the $Q$-function updates during co-training, ensuring that the critic and the policy co-evolve toward a consistent, reward-sensitive behavior.

Building on our algorithm, we provide a theoretical analysis of the alignment properties of CSMs and conduct extensive experiments to evaluate our trained models. We find that \methodname~learns a family of RTG-conditioned policies that actively respond to target shifts, rather than merely relying on static reward associations found in the training data. Moreover, we demonstrate that our model can be adapted to distinct tasks (e.g., \texttt{HalfCheetah-Vel}) while maintaining competitive performance and alignment, indicating its potential for generalization across diverse behaviors.

Overall, our contributions are threefold:

\begin{itemize}[leftmargin=*, topsep=0pt,itemsep=0pt,partopsep=0pt,parsep=0pt]
\item   We propose \methodname{}, which introduces a new RTG-to-behavior alignment objective to enforce consistency between the input RTG and policy behavior, significantly improving RTG-conditioned alignment.  
It further employs a co-training framework with RTG perturbations that provide a high-quality action space for $Q$-function learning, enabling bidirectional improvements.

\item Theoretically, we show that \methodname{} improves alignment by restricting the policy class, and that the alignment objective ensures, under high-RTG conditioning, equivalence to maximizing the $Q$-function. 
Combined with RTG perturbations, this leads to convergence to a near-optimal in-distribution policy under mild assumptions.

\item Extensive experiments show that \methodname{} consistently achieves competitive performance across a wide range of offline RL tasks while significantly improving RTG-conditioned alignment. Remarkably and somewhat surprisingly, we report that \methodname{} generalizes effectively to the challenging \texttt{HalfCheetah-Vel} task and attains competitive performance by only controlling the RTG signal with zero-shot transfer.

\end{itemize}
\textbf{Code.}
Our code is available at \url{https://github.com/yangyuxiao-sjtu/Q-Align-DT}.

\begin{figure*}[ht]
\centering
\begin{subfigure}[b]{0.32\textwidth}
\centering
\includegraphics[width=\textwidth]{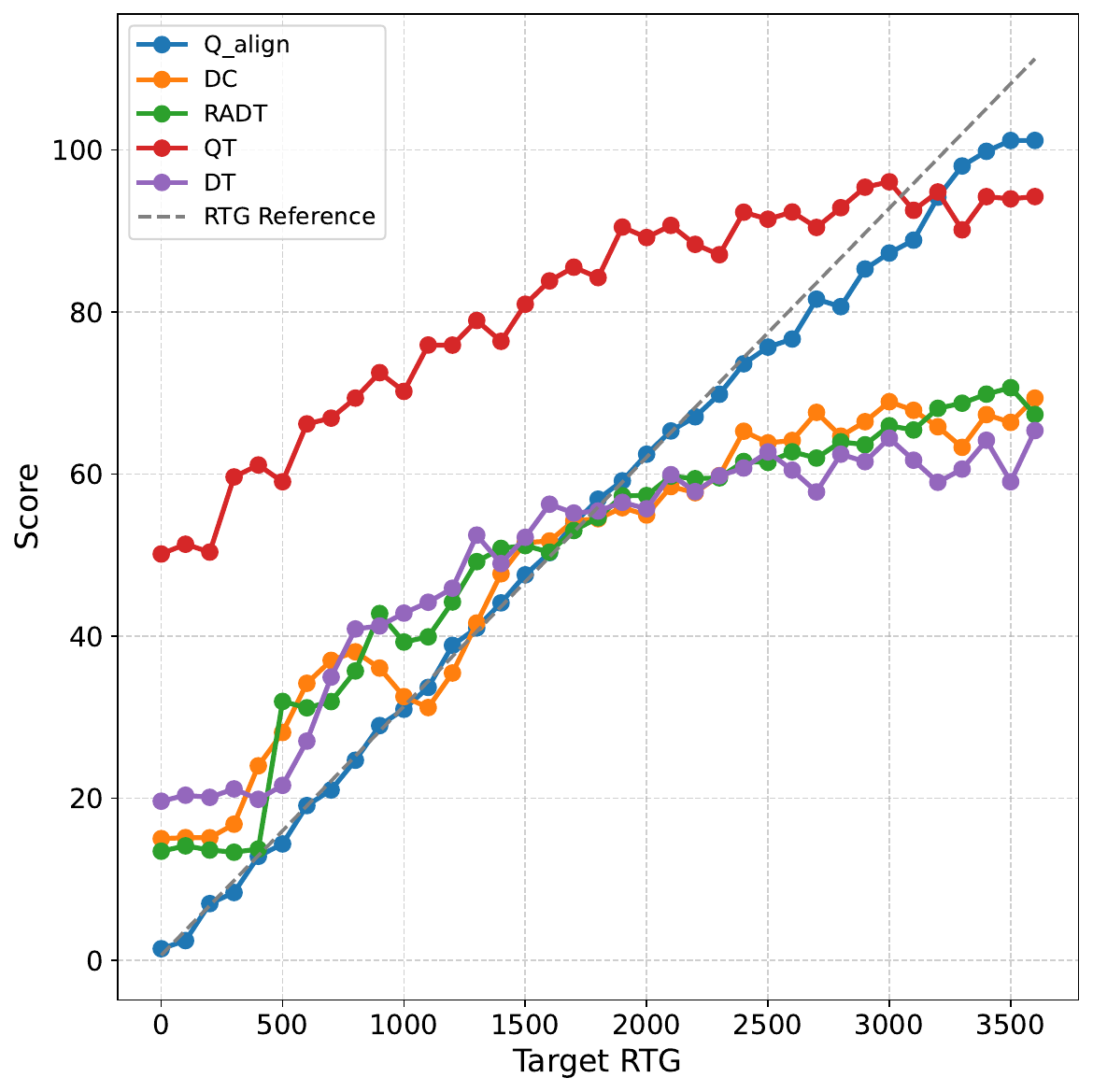}
\vspace{-1em}
\caption{hopper-medium}
\label{fig:sub1}
\end{subfigure}
\hfill
\begin{subfigure}[b]{0.32\textwidth}
\centering
\includegraphics[width=\textwidth]{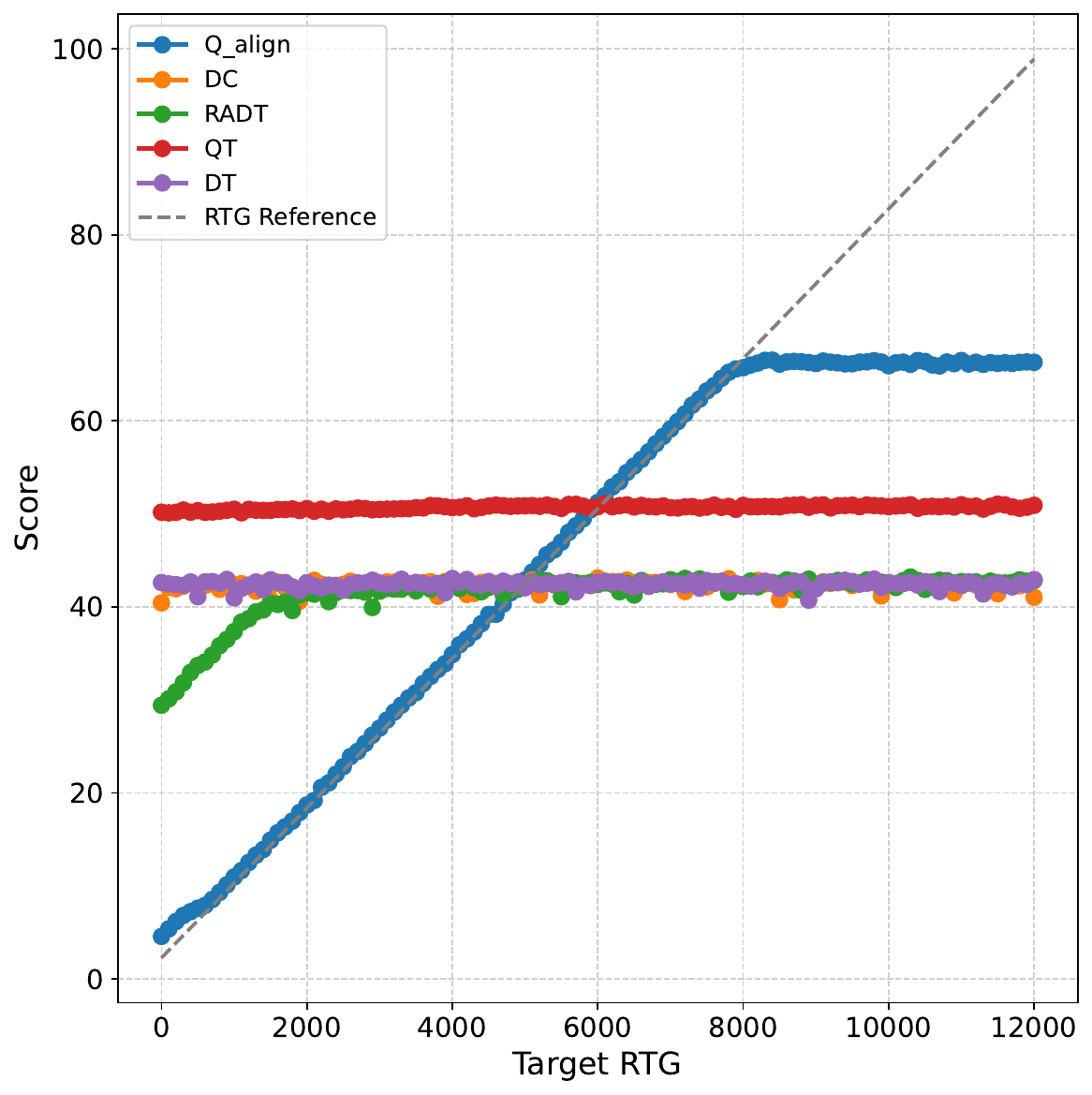}
\vspace{-1em}
\caption{halfcheetah-medium}
\label{fig:sub2}
\end{subfigure}
\hfill
\begin{subfigure}[b]{0.32\textwidth}
\centering
\includegraphics[width=\textwidth]{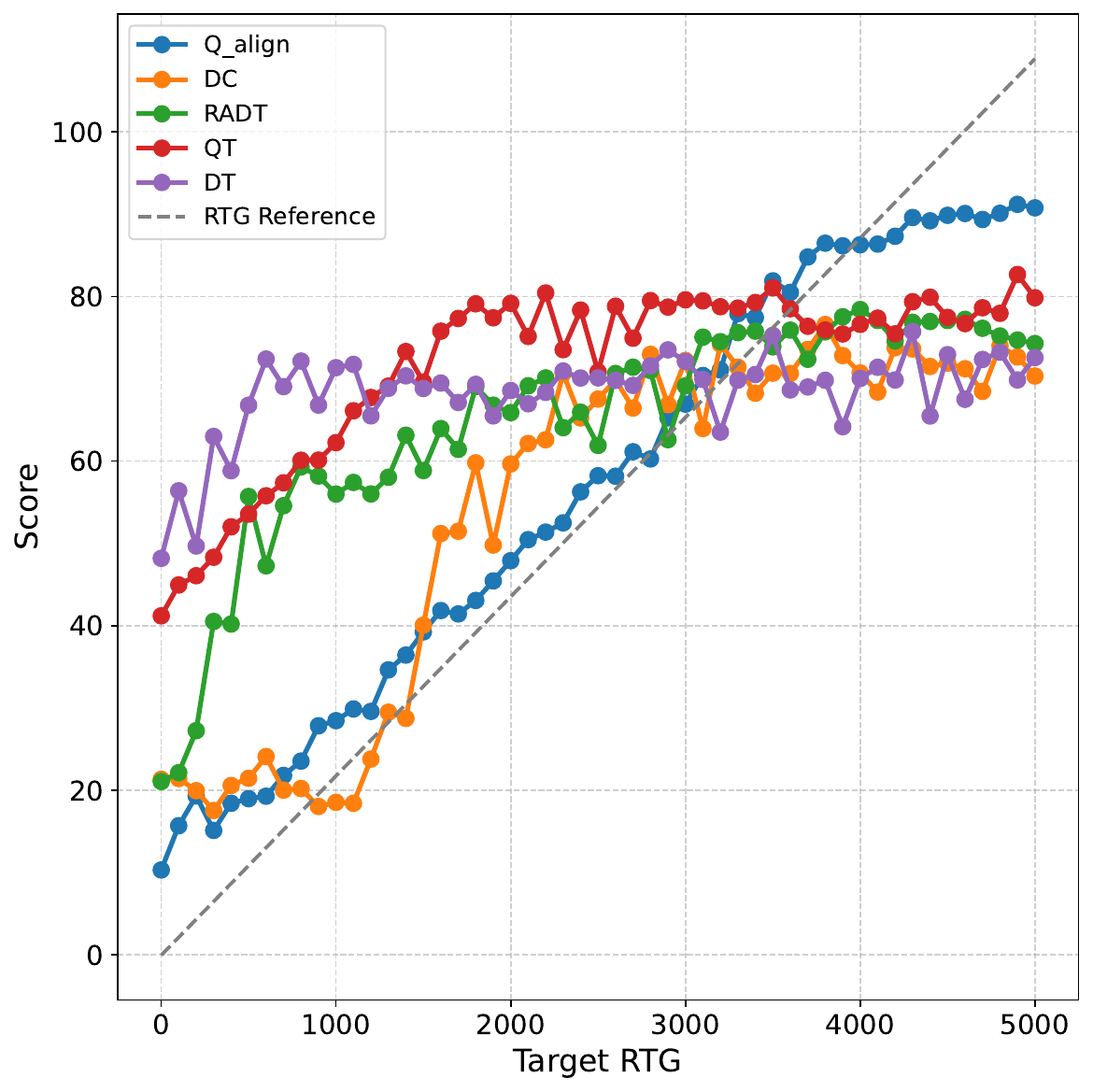}
\vspace{-1em}
\caption{walker2d-medium}
\label{fig:sub3}
\end{subfigure}
\vspace{-0.5em}
\caption{Performance of \methodname~(Q\_align) and other baseline models on D4RL tasks. Target RTGs are set with an interval of 100 targeting for cumulative rewards. We sample 30 trajectories for each target RTG and report the mean performance for each method.}

    \label{fig:cmp}
\end{figure*}

\section{Related Works}
\subsection{Offline Reinforcement Learning}
Reinforcement Learning aims to train an agent to solve tasks through direct interaction with the environment, but such interaction is often expensive or impractical in domains such as robotics and healthcare. To address this limitation, Offline Reinforcement Learning learns policies from a fixed dataset collected by a behavior policy \citep{offlinerl-tutorial,siegel2020doingworkedbehavioralmodelling,jaques2019wayoffpolicybatchdeep,pmlr-v119-agarwal20c,tree-based}.

Despite its promise, naively applying online RL algorithms in an offline setting often leads to severe performance degradation due to out-of-distribution actions and extrapolation errors \citep{Offline-RL,kumar2019stabilizing,offlinerl-tutorial}. Existing work mitigates these issues through a range of techniques, including $Q$-value regularization \citep{brac,CQL,critic-regularized-regression} and behavior cloning–based objectives \citep{TD3+BC}.

More recently, the fixed-dataset nature of Offline RL has motivated the adoption of Transformer-based architectures \citep{DT,edt,Q-Transformer}, which enable efficient and fully parallelized supervised training over trajectories.

\subsection{Conditioned Sequence Models}
Conditioned Sequence Models (CSMs) \citep{DT, tt} cast RL in the supervised learning paradigm by predicting actions from historical state-action pairs $(s_i, a_i)$ and return-to-go (RTG) tokens $\text{rtg}_i$.
In particular, the RTG is derived from the dataset as cumulative rewards during training, and is provided as a user-specified conditioning during inference.
Although recent theoretical results suggest CSMs can recover target returns under ideal assumptions
\citep{dt-theory, provable-ICL, gdt}, empirical studies \citep{DC, RADT} show that
the information carried by the RTG is often under-utilized by CSMs, leading to poor alignment between the target behavior and desired RTGs.

To address this issue, RADT \citep{RADT} enhances RTG sensitivity through architectural modifications, at the cost of substantial computational and parameter overhead, by introducing an additional attention layer in each transformer block.

\subsection{$Q$-Learning in Conditioned Sequence Models}
$Q$-functions are widely used to improve CSMs. Early methods like QDT \citep{qdt} and CGDT \citep{CGDT} leverage pretrained $Q$-functions for RTG relabeling and dataset-bias mitigation, while later approaches such as QT \citep{QT}, QCS \citep{QCS}, and TD3-ODT \citep{RLvitamin} backpropagate $Q$-value gradients through predicted actions.

While $Q$-gradient methods achieve competitive peak performance, they often push the policy toward maximum-value actions regardless of the target RTG, degrading alignment as it collapses to relatively high-return regions within the data distribution (\cref{fig:cmp}). 
Furthermore, although existing methods leverage $Q$-functions to improve CSMs, little attention has been paid to how CSMs can, in turn, benefit $Q$-function training through their alignment capabilities. 

\subsection{Multi-Task and Meta-Reinforcement Learning}
Meta-Reinforcement Learning (Meta-RL) \citep{MetaRLTutoral} aims to quickly adapt an agent to new tasks with similar underlying structures \citep{MAML,RL^2}. A more challenging setting, Offline Meta-RL, considers scenarios where the agent must learn from a fixed dataset and is expected to generalize to unseen test tasks \citep{MACAW,NEURIPS2021_24802454}. Owing to the in-context learning capabilities of Transformer architectures, recent works have increasingly deployed Transformers for such tasks \citep{prompt-dt,dpt}. 


\section{Preliminaries}

We consider a Markov Decision Process (MDP) defined by the tuple $(\mathcal{S}, \mathcal{A}, P, R, \gamma)$, where $\mathcal{S}$ and $\mathcal{A}$ represent the state and action spaces, $P(s_{t+1} | s_t, a_t)$ denotes the transition probability, $R(s, a)$ is the reward function, and $\gamma \in (0, 1]$ is the discount factor. Following the Decision Transformer framework \citep{DT}, we represent the input as a sequence of reward-to-go (RTG), state, and action tokens:
\begin{align*}
\boldsymbol\tau = (\text{rtg}_0, s_0, a_0, \ldots, \text{rtg}_H, s_H, a_H),
\end{align*}
where $\text{rtg}_t = \sum_{i=t}^H r_i$ represents the cumulative future reward at time $t$. This RTG signal serves as the target return to condition the policy. To ensure computational efficiency, we employ a $k$-step context window, where the truncated sequence at time $t$ is defined as:
\begin{align*}
\boldsymbol\tau_t = (\text{rtg}_{t-k+1}, s_{t-k+1}, a_{t-k+1}, \ldots, \text{rtg}_t, s_t, a_t).
\end{align*}

The RTG token serves as the primary mechanism for conditioning the model's behavior. 
Ideally, a DT-based model should represent a family of policies indexed by the target RTG \citep{dt-theory}, expressed as:
\begin{align}
    \Pi_{\mathrm{DT}} = \{\pi_{z} \mid z \in \mathbb{R}_+\},
\end{align}
In practice, the RTG tokens are usually controlled during inference to inform the desired behavior. To reflect this, we define a \textit{modified RTG sequence} $\boldsymbol\tau_t^{g}$ for any scalar $g \in \mathbb R$ by shifting all RTG tokens in the context window by $g$:
\begin{align}
\boldsymbol\tau_t^{g} = (\text{rtg}_{t-k+1}{+}g, s_{t-k+1}, a_{t-k+1}, \ldots, \text{rtg}_t{+}g, s_t, a_t). \label{eq:modified-rtg}
\end{align}
\section{Methods}
\noindent \textbf{Motivation.} To investigate the interpretability of target RTG, we benchmark several variants of the Decision Transformer \citep{DT, QT, RADT,DC} and analyze their behavior under various RTG conditioning values. As illustrated in \cref{fig:cmp}, we consistently observe a significant gap between the target RTG and the actual rollout performance. 

This misalignment is especially evident in the \textit{HalfCheetah} environment, where the model exhibits almost no sensitivity to variations in the requested RTG.
Such findings suggest that the RTG conditioning remains marginal in the model's decision process, directly motivating the development of \methodname{} to explicitly enforce RTG-behavior alignment.

\subsection{Training Model with RTG-to-Behavior Alignment }

From the above discussion, we aim to train a CSM variant that internalizes the \textbf{partial ordering between return-to-go (RTG) and action quality}. Ideally, this learning objective can be formulated as a constrained optimization problem:
\begin{equation}
\min_{\theta} L_{\text{SL}}(\theta), \quad \text{s.t. } \frac{\partial Q_{\psi}(s,\pi_{\theta}(s,\text{RTG}))}{\partial \text{RTG} }\ge 0,
\label{eq:constrained_obj}
\end{equation}
where $L_{\text{SL}}(\theta)$ is the standard supervised learning loss ensuring the policy remains anchored to the offline dataset, while the constraint 
ensures the $Q$-function of the predicted action is monotonically increasing with respect to the input RTGs.

To preserve this monotonicity under the variations of RTGs, directly calculating the gradient of critic $Q$ can be time-consuming due to the backpropagation through the $Q$ function and decision transformer. In order to improve the time efficiency, we consider the zero-th order estimation of the gradient following \citet{ES,SignSGD}. In particular, for any small enough perturbation $\delta$,
\begin{align*}
\frac{\partial Q_\psi(s, \pi_\theta(s, \text{RTG}))}{\partial \text{RTG}}
\approx 
\frac{
Q_\psi(s, \hat a^\delta) 
- Q_\psi(s, \hat a )
}{\delta},
\end{align*}
where $\hat a^\delta =\pi_\theta(s,\text{RTG}+\delta)$ and $\hat a = \pi_\theta(s,\text{RTG})$. To further mitigate the effect of the magnitude of $\delta$ and allow large perturbations $\delta$, we convert~\eqref{eq:constrained_obj} into the following objective:
\begin{align}
\min_{\theta} L_{\text{SL}}(\theta),~\text{s.t.}~\mathrm{sgn}(\delta)\big(Q_\psi(s, \hat a^\delta) - Q_\psi(s, \hat a )\big) \ge 0, \label{eq:revised-constrained-obj}
\end{align}

To this end, we solve~\eqref{eq:revised-constrained-obj} by introducing the Lagrange multiplier named as \textbf{alignment loss} $L_{\text{Align}}$.
In particular, given an input sequence $\boldsymbol \tau_t$ and its perturbed version $\boldsymbol \tau_t^\delta$ generated by adding a sequence-level noise $\delta \sim \mathcal{N}(0, \sigma_e^2)$, the alignment loss $L_{\text{Align}}$ is defined by:
\begin{equation}
L_{\text{Align}} = \textstyle{\sum_{i=t-k+1}^t} ~\ {I}_{\mathcal{C}} \cdot \big| Q_{\psi}(s_i, \hat{a}_i^\delta) - Q_{\psi}^\perp(s_i, \hat{a}_i) \big|,
\label{alignment-eq}
\end{equation}
where $\hat{a}_i$ and $\hat{a}_i^\delta$ are the $i$-th predicted actions conditioned on $\boldsymbol\tau_t$ and modified RTG sequence $\boldsymbol\tau_t^\delta$ defined by~\eqref{eq:modified-rtg}; $Q^\perp_{\psi}$ denotes the stop-gradient operator which is widely used in $Q$-based algorithms~\cite{RL-Intro}  treating the original $Q$-value as a fixed reference to stabilize training;  indicator function $\mathbb{I}_{\mathcal{C}}$ detects the constraint violation in~\eqref{eq:revised-constrained-obj}:
\begin{equation*}
\mathbb{I}_{\mathcal{C}} = 
\begin{cases} 
1, & \text{if } \text{sgn}(\delta) \big( Q_{\psi}(s_i, \hat{a}_i^\delta) - Q_{\psi}^{\perp}(s_i, \hat{a}_i) \big) < 0 \\
0, & \text{otherwise}
\end{cases}.
\end{equation*}
\rev{For simplicity, let $Q^\delta_i = Q_\psi(s_i,\hat a_i^\delta)$ and
$Q_i = Q^\perp_\psi(s_i,\hat a_i)$. The indicator activates when the RTG
perturbation and the induced critic change have inconsistent directions,
i.e., $\mathrm{sgn}(\delta)(Q^\delta_i-Q_i)<0$. This results in a directional
ranking penalty that corrects violations of the RTG--value monotonicity while
leaving already ordered pairs unchanged. In the context of offline RL, where
$Q$-functions trained on static datasets are prone to scale bias and local
inaccuracies~\citep{kumar2019stabilizing}, this yields a \textit{conservative
alignment} objective by enforcing relative ordering rather than exploiting
potentially inaccurate value magnitudes that could push the policy toward
out-of-distribution actions~\citep{Offline-RL}.}


Together with the joint state-action prediction with loss $L_{\text{SL}}(\theta)
= \textstyle{\sum_{i=t-k+1}^{t}}
\| s_i - \hat{s}_i \|^{2}
+ \| a_i - \hat{a}_i \|^{2}$ following the decision transformer's causal structure and input RTGs, this alignment objective encourages the model to internalize the RTG-action-$Q$ mapping and regularize the policy to respect a partial RTG ordering. To this end, the overall training objective can be summarized as
\begin{align}
\mathcal{L}_{\text{total}}(\theta) = L_{\text{SL}}(\theta) + \lambda_e \, L_{\text{Align}}(\theta),
\label{eq:ActorLoss}
\end{align}
where $\lambda_e$ controls the relative weight of the alignment constraint. This formulation allows \methodname{} to learn a coherent family of policies that do not only imitate the dataset, but are also systematically steerable via RTG conditioning.

\subsection{Training the $Q$ Function with \textit{RTG Perturbation}}

In \methodname{}, a $Q$-function is required to compute the alignment loss. 
If the critic is either pretrained solely on the behavior policy or naively co-trained using actions generated by the policy conditioned on $\boldsymbol \tau_t$, it tends to merely reflect the offline data distribution and remains anchored to suboptimal returns. 
When the critic remains stationary, the alignment loss defined in \eqref{eq:ActorLoss} becomes self-limiting, as it penalizes the policy for attempting to exceed the RTGs present in the static dataset. 
In practice, this mismatch not only limits alignment in high-RTG regions, but can also degrade alignment across the entire RTG spectrum (see~\cref{tab:ablation-fixq}).

To ensure the critic accurately reflects the policy's behavior across the RTG spectrum, we train the critic to satisfy Bellman consistency with respect to the current policy's response under RTG perturbations. Specifically, the critic $Q_\psi$ is optimized to minimize:
\begin{align}\textstyle{
L_q(\psi_{1, 2}) = \sum_{i=t-k+1}^{t-1} \sum_{m=1}^2 
\bigl(Q_{\psi_m}(s_i, a_i) - y_i' \bigr)^2 ,
\label{eq:Loss-Q}
}\end{align}
with the target value $y_i'$ being defined as:
\begin{align*}\textstyle{
y_i' = r_i + \gamma \min_{m=1,2} 
Q_{\psi_m'}^\perp\!(s_{i+1},\, \hat{a}_{i+1}^{\prime, \Delta\mathrm{RTG}}), 
}\end{align*}
where $\hat{a}_{i+1}^{\prime, \Delta\mathrm{RTG}}$ denotes the action predicted by the target policy $\pi_{\theta'}$ at the $(i{+}1)$-th step conditioned on the perturbed trajectory sequence $\boldsymbol \tau_t^{\Delta\mathrm{RTG}}$ and $Q_{\psi'}$ is the target critic. The offset $\Delta \mathrm{RTG}$ is a positive scalar that serves as a fixed trajectory-level perturbation.
Conditioning on the RTG-perturbed trajectory $\boldsymbol \tau_t^{\Delta \mathrm{RTG}}$, the policy $\pi_{\theta'}$ induces a higher-return action support for the critic $Q_\psi$, encouraging the critic to reflect higher-return behaviors within the RTG-conditioned policy family. 
As shown in our analysis  in Sec.~\ref{Analysis_perturb}, $\Delta \mathrm{RTG}$ in the co-training mechanism effectively acts as a control parameter for the critic learning dynamics. 

In particular, increasing $\Delta \mathrm{RTG}$ biases the RTG-conditioned policy toward higher-return actions, generating more informative targets for critic learning. Through the alignment loss, these improved value estimates are fed back to the actor, enabling a positive actor–critic feedback loop that facilitates policy improvement while remaining grounded in the support of the offline data, rather than merely fitting static, suboptimal return averages.

Combined with the \methodname{} and the \emph{RTG-perturbation} in co-training the $Q$ function with the decision transformer, we present the algorithm as \cref{alg:main} in Appendix~\ref{app:alg}.

\section{Theoretical Analysis}
In this section, we analyze the alignment error of CSMs and the policy behavior of \methodname~under data support constraints. We defer the detailed proof to Appendix~\ref{app:thm}.
\subsection{Why \methodname{} Reduces Alignment Error}
\label{Analysis_align_error}
Our analysis builds on the following finite sample analysis for DT presented in~\cite{dt-theory}:
\begin{lemma}[Theorem 1 and Corollary 3, \citealt{dt-theory}]
For a sample size $N$ and a finite policy class $\Pi$, and an MDP with horizon $H$, let $\hat{\pi} \in \Pi$ be the empirical risk minimizer that minimizes the training loss. With probability at least $1-\delta_p$, the total alignment error satisfies
\begin{align*} \text{RTG}_\text{tgt}-\mathbb{E}_{\boldsymbol \tau\sim\hat\pi_{\text{RTG}_\text{tgt}}}[\text{RTG}_\text{real}] \le \mathcal O\bigg(H^2 \frac{\log(|\Pi|/\delta_p)^{1/4}}{N^{1/4}}\bigg). 
\end{align*} 
\end{lemma}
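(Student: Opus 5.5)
This lemma is imported from \citet{dt-theory} (Theorem 1 and Corollary 3), so the plan is to reconstruct their argument in three links: a uniform generalization bound for the empirical risk minimizer, a conversion from likelihood error to a total-variation bound on action distributions, and a performance-difference argument. The concentration step produces the $\log(|\Pi|/\delta_p)$ term, and the square roots taken in the later steps turn the exponent into $1/4$.

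\textbf{Step 1 (generalization).} Model return-conditioned supervised learning as maximum likelihood (equivalently, log-loss risk minimization) over $\Pi$, with the training distribution over $(s,\text{rtg})$ induced by the behavior data. Hoeffding's inequality plus a union bound over the finite class $\Pi$ gives, with probability at least $1-\delta_p$, that the ERM $\hat\pi$ satisfies
\[
\mathbb{E}_{(s,\text{rtg})\sim\beta}\,\mathrm{KL}\big(\pi^\star(\cdot\mid s,\text{rtg})\,\|\,\hat\pi(\cdot\mid s,\text{rtg})\big)\lesssim \sqrt{\log(|\Pi|/\delta_p)/N}.
\]
This requires realizability of the return-conditioned behavior policy $\pi^\star$ in $\Pi$ and bounded log-likelihoods.

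\textbf{Step 2 (total variation).} Pinsker's inequality together with Jensen's inequality converts the KL bound into an expected total-variation bound of order $(\log(|\Pi|/\delta_p)/N)^{1/4}$.

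\textbf{Step 3 (performance difference).} Compare the return of $\hat\pi$ conditioned on $\text{RTG}_\text{tgt}$ with that of $\pi^\star$. Under the assumptions of \citet{dt-theory} (near-determinism of the MDP and the return-coverage conditions), $\pi^\star$ conditioned on $\text{RTG}_\text{tgt}$ attains $\text{RTG}_\text{tgt}$ in expectation. A simulation or telescoping argument over the $H$ steps then bounds the return gap by $H$ times the maximal return $H$ times the per-step TV distance, giving the factor $H^2$.

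\textbf{Main obstacle.} The difficult step is the distribution shift in Step 3. Step 1 only controls the error under the data distribution $\beta$, but the rollout visits $(s,\text{rtg})$ pairs generated by $\hat\pi$ itself, and the conditioning on $\text{RTG}_\text{tgt}$ may be off-support. I would address this as \citet{dt-theory} do: assume coverage, meaning that $\beta(\text{rtg}=\text{RTG}_\text{tgt}\mid s)$ is bounded below by a constant. Under that assumption, a change of measure transfers the data-distribution bound to the rollout distribution at the cost of a constant factor, and the $H^2$ rate follows.
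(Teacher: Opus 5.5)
Your three steps (log-loss ERM with a union bound, then Pinsker plus Jensen, then a performance-difference argument) are the argument behind Corollary 3 of \citet{dt-theory}. The paper does not re-derive that chain. It restates Theorem 1 and Corollary 3 as \cref{thm:dt1} and \cref{thm:dt2} and adds them. Write $J(\pi)$ for the expected realized return and $\pi_f$ for the infinite-data policy (your $\pi^\star$). The paper splits $\text{RTG}_\text{tgt}-J(\hat\pi)=[\text{RTG}_\text{tgt}-J(\pi_f)]+[J(\pi_f)-J(\hat\pi)]$. The first bracket is at most $\epsilon(1/\alpha_f+2)H^2$. The second is $O\big(\frac{C_f}{\alpha_f}H^2\sqrt{c}\,(\log(|\Pi|/\delta_p)/N)^{1/4}+\sqrt{\epsilon_\text{error}}\big)$.

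\textbf{First gap: the first bracket.} Step 3 asserts that under \emph{near}-determinism, $\pi^\star$ conditioned on $\text{RTG}_\text{tgt}$ attains $\text{RTG}_\text{tgt}$ in expectation. That holds only for $\epsilon=0$. For $\epsilon>0$, Theorem 1 gives only the bound $\epsilon(1/\alpha_f+2)H^2$, and the infinite-data policy can genuinely miss the target under stochastic transitions. This term does not decay with $N$, so it cannot be absorbed into the $N^{-1/4}$ rate. You must either assume exact determinism, or carry the first bracket as the appendix bound does. The main-text statement tacitly assumes determinism, just as it tacitly sets $\epsilon_\text{error}=0$, which matches your realizability assumption.

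\textbf{Second gap: the distribution shift.} Your fix does not work in the form you state it. You locate the shift in the rollouts of $\hat\pi$, but no assumption on $\beta$ controls the occupancy of $\hat\pi$. Return coverage only compares $P_\beta(s)$ with $P_\beta(s,g=f(s))$; it says nothing about which states a rollout reaches. The performance-difference identity has to be taken under the comparator's occupancy:
$J(\pi_f)-J(\hat\pi)=\sum_t\mathbb{E}_{s\sim d^{\pi_f}_t}\big[\mathbb{E}_{a\sim\pi_f}Q^{\hat\pi}_t(s,a)-\mathbb{E}_{a\sim\hat\pi}Q^{\hat\pi}_t(s,a)\big]$.
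Then the only density ratio needed is between $d^{\pi_f}_t(s)$ and the training measure on pairs $(s,g=f(s))$. At non-initial states the conditioning value is the decremented $f(s)$, not $\text{RTG}_\text{tgt}$ as you wrote. \citet{dt-theory} bound this ratio by $C_f/\alpha_f$ using a separate occupancy-mismatch assumption, $P_{\pi_f}(s)/P_\beta(s)\le C_f$. Your proposal omits it, and it is the source of the $C_f/\alpha_f$ prefactor in \cref{thm:dt2}.

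Under the exact determinism you need anyway, your claim that coverage suffices can be rescued. For a Markov $\beta$, the trajectory law of $\pi_f$ is $P_\beta(\cdot\mid s_1,g_1=f(s_1))$. Consistency of $f$ gives $\{g_1=f(s_1)\}=\{g_t=f(s_t)\}$ along any trajectory. Hence $d^{\pi_f}_t(s)\le P_\beta(s_t=s,g_t=f(s))/\alpha_f$ follows from initial-state coverage alone. That argument must be made explicitly, and it controls the occupancy of $\pi_f$, not of $\hat\pi$.
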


Crucially, the bound shows that alignment error scales with $(\log |\Pi|)^{1/4}$. In unconstrained sequence modeling, the policy class $\Pi$ can be sufficiently large and therefore lead to significant alignment gap. In contrast, \methodname{} imposes an order-preserving bias that encourages $Q(s, \pi(s, g))$ to be monotonically increasing with respect to RTG $g$. As we will present in the following theorem, ensuring this monotonicity will effectively reduce the policy class $\Pi$.
\begin{theorem}[Policy Space Reduction in Discrete Case]
\label{thm:mono-order}
Consider a discrete state and action space $S, A$, given a set of RTG values $G=\{g_1 < \dots < g_{|G|}\}$. Let $\Pi_{\rm free}$ be the class of all deterministic policies $\pi: S \times G \to A$. Assume that for each state $s$, the function $Q(s,\cdot): A \to \mathcal{V}_s \subset \mathbb{R}$ induces an ordering over actions.
A policy $\pi$ is admissible in the constrained class $\Pi_{\rm mono}$ if it preserves this order across RTG levels. 
Then the log-complexity satisfies:
\begin{align*}
\log|\Pi_{\rm mono}| \le |S| |G| \log \frac{(|G| + |A| - 1)}{|G|} + |S| |G|.
\end{align*}
Furthermore, assuming each $Q$-value is shared by at most $C$ actions and the resolution of the value space $|\mathcal{V}_s|$ is of the same order as the RTG resolution $|G|$ (i.e., $|\mathcal{V}_s| = \Theta(|G|)$), we have $\log|\Pi_{\rm mono}| = \tilde O(|S| |G|)$.
\end{theorem}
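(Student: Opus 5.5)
The plan is to reduce the count to a stars-and-bars argument carried out independently at each state. Because the monotonicity constraint in $\Pi_{\rm mono}$ couples only the actions $\pi(s,g_1),\dots,\pi(s,g_{|G|})$ at a fixed $s$ and never links different states, the constrained class factorizes as $|\Pi_{\rm mono}| = \prod_{s\in S} N_s$. Here $N_s$ is the number of maps $g\mapsto \pi(s,g)$ for which $Q(s,\pi(s,g))$ is nondecreasing in $g$. Taking logarithms, it suffices to bound $\log N_s \le |G|\log\frac{|G|+|A|-1}{|G|} + |G|$ for every $s$ and then sum over the $|S|$ states.

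To count $N_s$, sort the actions by their $Q(s,\cdot)$ value into a total order $a_{(1)}\preceq\cdots\preceq a_{(|A|)}$, breaking ties arbitrarily, so that the order is consistent with $Q$. I interpret ``preserves the order'' as requiring the rank of $\pi(s,g_j)$ to be nondecreasing in $j$. This is the reading under which the stated bound holds, and I would state it explicitly as a modeling convention. Under this reading, a monotone policy at $s$ is exactly a nondecreasing sequence of length $|G|$ with values in $\{1,\dots,|A|\}$, so
\[
N_s=\binom{|G|+|A|-1}{|G|}.
\]
If instead only the $Q$-values are required to be monotone, with ties allowing free choice within a value class, I would handle it the same way. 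One first counts nondecreasing sequences of values in $\mathcal V_s$, which gives $\binom{|G|+|\mathcal V_s|-1}{|G|}$, and then multiplies by at most $C^{|G|}$ choices of action within each value class. This is the form needed for the ``furthermore'' part.

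The estimate itself is the standard inequality $\binom{n}{k}\le (en/k)^k$ applied with $n=|G|+|A|-1$ and $k=|G|$. It gives
\[
\log N_s \le |G|\log\frac{|G|+|A|-1}{|G|}+|G|,
\]
and summing over $s$ yields the first claim. For the second claim I would use the value-level count:
\[
\log N_s \le |G|\log\frac{e(|G|+|\mathcal V_s|-1)}{|G|}+|G|\log C .
\]
Substituting $|\mathcal V_s|=\Theta(|G|)$ makes the ratio inside the logarithm $O(1)$, so $\log N_s=O(|G|(1+\log C))$. Summing over states gives $\log|\Pi_{\rm mono}|=\tilde O(|S||G|)$, which compares with $\log|\Pi_{\rm free}|=|S||G|\log|A|$.

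I expect the main obstacle to be the definitional step rather than the computation. The statement's phrase ``preserves this order'' is ambiguous when $Q$ has ties, and the first bound, which involves $|A|$, and the second bound, which involves $|\mathcal V_s|$ and $C$, correspond to slightly different counts. I would make both readings explicit and prove each bound under its corresponding reading. For the second, I would verify that the factor $C^{|G|}$ correctly dominates the number of ways to choose actions within tied value classes.
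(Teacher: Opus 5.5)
Your proof is correct, and for the first bound it is the paper's proof. The paper also fixes an arbitrary tie-breaking rule to get a total order $\preceq_s$ compatible with $Q(s,\cdot)$. It counts nondecreasing sequences at each state by the multiset coefficient $\binom{|G|+|A|-1}{|G|}$, multiplies over states, and applies $\binom{n}{k}\le (en/k)^k$.

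For the ``furthermore'' part the routes differ slightly. The paper stays within the same tie-broken reading and observes that, since each $Q$-value is shared by at most $C$ actions, $|A|\le C|\mathcal{V}_s| = \Theta(|G|)$. The ratio $(|G|+|A|-1)/|G|$ in the first bound is therefore already $O(1)$, treating $C$ as constant. So the two bounds do not correspond to different counts, and the obstacle you anticipated is not really there: the second claim follows from the first in one line, with no change of reading.

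Your value-level count $\binom{|G|+|\mathcal{V}_s|-1}{|G|}\,C^{|G|}$ is also valid and gives the same $O\bigl(|S||G|(1+\log C)\bigr)$. Its advantage is that it bounds the larger, tie-permissive class, in which only $Q(s,\pi(s,g))$ must be nondecreasing. The first bound can genuinely fail for that class. For example, with all actions tied, $|G|\ge 3$ and $|A|$ large, there are $|A|^{|G|}$ admissible maps per state, which exceeds $\bigl(e(|G|+|A|-1)/|G|\bigr)^{|G|}$.

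So the paper's route buys brevity, and yours buys a bound that holds under either reading of ``preserves this order.''
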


Compared to the unconstrained case where $\log |\Pi_{\rm free}| = O(|S||G| \log |A|)$, Theorem~\ref{thm:mono-order} illustrates that \methodname{} utilizes the directional information of the $Q$ function to effectively eliminate the $\log |A|$ dependence in the complexity term. This significant reduction in the hypothesis space provides a theoretical intuition for why \methodname{} can significantly diminish alignment errors observed in practice.

\subsection{Policy Optimality under High-RTG Conditioning}
\label{Analysis_perturb}
As the alignment loss $L_{\text{Align}}$ enables the partial ordering between the RTG and action quality, we would like to examine how the learned policy behaves when conditioned on extremely large RTG values.
This setting is particularly important in practice, as inputting sufficiently large RTGs is a standard evaluation protocol for CSMs.

To better understand this regime, the following theorem presents an asymptotic equivalent behavior of \methodname{} and QT~\citep{QT}, where QT explicitly optimizes for action-value maximization under fixed RTG.

\begin{theorem}[Equivalence in High-RTG Regime]
For a fixed RTG $R$, let $\pi_\theta(s, R)$ be the policy learned by \methodname{} and $\pi_{\text{QT}}(s, R)$ be the policy that minimizes the following QT objective~\citep{QT} using a fixed $Q$ function:
\begin{align*}
\mathcal{L}_{\text{QT}}(\pi, R)
= \!\!\!\!\!\underset{(s,a) \sim \mathcal{D}}{\mathbb{E}}
[\|\pi(s,R) - a\|^2 - \eta Q(s, \pi(s,R))].
\end{align*}
Assuming the dataset $\mathcal{D}$ has full support over the action space, then as $R \to \infty$, \methodname{} recovers the behavior of an idealized QT that pursues maximum value:
\begin{align*}
    \lim_{R \to \infty} \pi_{\theta}(s, R) =  \lim_{R \to \infty}\pi_{\text{QT}}(s,R) = \arg\max_{a \in \mathcal{A}} Q(s,a).
\end{align*}
\label{Equivalence}
\end{theorem}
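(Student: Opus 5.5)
The argument splits into two limits that share a common endpoint, $a^*(s)=\arg\max_{a\in\mathcal A}Q(s,a)$. Throughout, the critic $Q$ is held fixed, and we treat the QT weight $\eta$ as scaling with the conditioning, i.e.\ $\eta=\eta(R)\to\infty$ as $R\to\infty$. This is how the ``idealized QT that pursues maximum value'' should be read. For \methodname{}, the role of $R$ will be played by the monotonicity constraint~\eqref{eq:revised-constrained-obj}.

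\textbf{QT side.} For fixed $s$, the minimizer of $\mathcal L_{\text{QT}}(\pi,R)$ is pointwise. Dividing the per-state objective by $\eta$ gives
\[
\tfrac1\eta\,\mathbb E_{a\sim\mathcal D(\cdot\mid s)}\|\pi(s,R)-a\|^2-Q(s,\pi(s,R)).
\]
Suppose $\mathcal A$ is compact, $Q(s,\cdot)$ is continuous, and $a^*(s)$ is unique. Because $\mathcal D$ has full support over $\mathcal A$, the quadratic term is bounded uniformly in $\pi(s,R)$. A standard $\Gamma$-convergence (or argmin-continuity) argument then applies: the minimizers of $\epsilon f - Q$ converge to $\arg\max Q$ as $\epsilon\to0$. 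This gives $\lim_{R\to\infty}\pi_{\text{QT}}(s,R)=a^*(s)$. If the maximizer is not unique, the statement should be read as convergence to the set of maximizers.

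\textbf{\methodname{} side.} A policy feasible for~\eqref{eq:revised-constrained-obj} satisfies $Q(s,\pi_\theta(s,R'))\ge Q(s,\pi_\theta(s,R))$ for all $R'>R$. Hence $R\mapsto Q(s,\pi_\theta(s,R))$ is nondecreasing and bounded by $\max_a Q(s,a)$, so its limit $Q_\infty(s)$ exists. The key step is to show that $Q_\infty(s)=\max_a Q(s,a)$. We would argue this from the supervised term. Under full support, for every action $a$ there are dataset samples with arbitrarily high associated RTGs. The natural order-consistent labelling assigns RTG to actions by $Q$ rank, so the SL minimizer subject to the monotone constraint maps large RTGs to the top of the $Q$-order on the support. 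This reuses the order structure from Theorem~\ref{thm:mono-order}, where admissible policies are monotone maps from $G$ into the $Q$-order of $A$. Concretely, suppose for contradiction that $Q_\infty(s)<\max_a Q-\epsilon$. Then one can build a feasible policy that agrees with $\pi_\theta$ for $R$ below some threshold and switches to $a^*(s)$ above it. This policy remains monotone. Its SL loss is no larger, because the high-RTG data are realized by near-optimal actions, and its alignment loss is strictly smaller on the perturbed pairs. That contradicts optimality. Combining this with continuity of $Q$ and uniqueness of $a^*$, $Q(s,\pi_\theta(s,R))\to\max Q$ implies $\pi_\theta(s,R)\to a^*(s)$.

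\textbf{Main obstacle.} The hard part is the step showing that the RTG-conditioned SL fit actually pushes high-RTG outputs toward the top of the $Q$-order, rather than stalling at an intermediate monotone plateau. Monotonicity alone only gives convergence of $Q$, not convergence to the maximum. To close this gap we would add an explicit, mild assumption: the dataset's conditional action distribution given RTG $\ge R$ concentrates on $\arg\max Q$ as $R\to\infty$. This is an idealization in the same spirit as the full-support assumption. The RTG-perturbation in $Q$-training provides the intuition for it, since it keeps the critic consistent with the high-RTG branch. With this assumption, the exchange argument above becomes routine.
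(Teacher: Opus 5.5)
Your \methodname{} half has a genuine gap, and the patch you propose does not close it. The exchange argument cannot produce a contradiction. Under your own constrained reading, both $\pi_\theta$ and the switched policy are feasible, so neither pays any alignment penalty. A strict decrease would therefore have to come from the supervised term, which is why you end up needing the extra assumption that data with RTG $\ge R$ concentrates on $\arg\max Q$. That assumption is unusable here. With bounded rewards and finite horizon, every RTG in $\mathcal{D}$ is at most the maximal dataset return, so for large $R$ there are no samples with RTG $\ge R$ and the assumption is vacuous. If it did hold, plain DT would already output $a^*(s)$ at high RTG, and the result would say nothing about the alignment loss. Note also that full support means $p_{\mathcal{D}}(a\mid s)>0$ for all $a$; it does not give each action arbitrarily high RTGs, so your supervised-term reasoning rests on a misreading.

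The missing idea is the one the paper uses for both halves. Once $R$ exceeds every return in $\mathcal{D}$, the RTG-conditioned data $\mathcal{D}_R(s)$ is empty, so the regression term imposes nothing at that conditioning value. What remains of the \methodname{} loss at $R$ is a ranking term, $\mathbb{E}_{\mathrm{RTG}\sim\mathcal{D}(s)}\mathbb{I}\bigl(Q(s,\pi(s,\mathrm{RTG}))>Q(s,\pi(s,R))\bigr)$. It compares the output at $R$ with the outputs at in-data RTGs, which under the idealized supervised fit equal the dataset actions $a'$. This term is zero iff $Q(s,\pi(s,R))\ge Q(s,a')$ for every $a'$ with $p_{\mathcal{D}}(a'\mid s)>0$, and full support turns that into $\pi(s,R)\in\arg\max_a Q(s,a)$. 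Your monotonicity observation already delivers this directly: $Q(s,\pi(s,R))\ge Q(s,\pi(s,g))=Q(s,a')$ for every in-data RTG $g<R$. So no exchange argument, compactness, or limit is needed, and the argmax is attained at every $R$ above the data's return range. The same observation handles QT without letting $\eta$ grow. With $\mathcal{D}_R(s)=\emptyset$ the QT loss is exactly $-\eta Q(s,\pi(s,R))$, minimized by the argmax for any fixed $\eta>0$. Your $\eta(R)\to\infty$ and $\Gamma$-convergence reading reaches the same endpoint, but it alters an objective the statement fixes. It also uses a different idealization from your \methodname{} half, whereas the single mechanism of a vanishing RTG-conditioned regression term is what makes the two limits comparable.
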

\begin{remark}
\cref{Equivalence} highlights the robust extrapolation of \methodname{} under extreme conditioning. When prompted with an abnormally large RTG, the alignment objective prevents the policy from collapsing into erratic behavior or fragile interpolations. Instead, \methodname{} reverts to a value-maximizing mode similar to QT, utilizing the guidance of $Q_\psi$ to select high-value actions.

\end{remark}

Then the next theorem further characterizes the effectiveness of the \textit{RTG-perturbation} techniques in learning the $Q$ function used in \cref{Equivalence}. Especially, the following theorem shows different asymptotic behaviors under different choices of $\Delta\mathrm{RTG}$ used in \textit{RTG-perturbation}.

\begin{theorem}[Impact of $\Delta\mathrm{RTG}$]
Let $\{Q_m, \pi_m\}_m$ be the sequences produced by \methodname{}. 
Consider the critic update using a perturbed conditioning signal 
$\tilde{R} = \mathrm{RTG} + \Delta\mathrm{RTG}$.
Then the following statements hold:

\noindent \textbf{(No Perturbation).} 
If $\Delta\mathrm{RTG}{=}0$, the updates remain within the behavior policy’s action support, leading to convergence of $\{Q_m\}_m$ to a value function close to $Q_\beta$, corresponding to conservative evaluation within the behavior support.

\noindent \textbf{(Large Perturbation).} 
If $\Delta\mathrm{RTG}$ is sufficiently large such that $\tilde{R}$ exceeds all returns in $\mathcal{D}$, then $\{Q_m\}_m$ converges to the optimal action-value function $Q^*$ and the induced policies $\{\pi_m( s, \tilde{R})\}_m$ converge to the corresponding optimal policy $\pi^*$, both restricted to the support of $\mathcal{D}$.

\label{optimal}
\end{theorem}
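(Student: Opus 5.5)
The plan is to model the co-training loop as an operator iteration on $Q$ and study the evaluation operator
\[
(\mathcal T_{\Delta} Q)(s,a) = r(s,a) + \gamma\,\mathbb E_{s'}\bigl[Q\bigl(s',\pi_Q(s',\tilde R)\bigr)\bigr],
\]
where $\pi_Q(\cdot,\tilde R)$ is the policy obtained by minimizing the actor objective~\eqref{eq:ActorLoss} against the current critic, conditioned on $\tilde R=\mathrm{RTG}+\Delta\mathrm{RTG}$. Throughout, everything is restricted to state--action pairs in the support of $\mathcal D$. On that support, $\mathcal T_\Delta$ is a $\gamma$-contraction in $\|\cdot\|_\infty$ whenever $\pi_Q$ is either a fixed policy or the argmax of $Q$. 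So the whole proof reduces to identifying which policy $\pi_Q(\cdot,\tilde R)$ is in each regime and then invoking Banach's fixed-point theorem. For $\gamma=1$ I would work with finite horizon $H$ and use backward induction instead. I would treat the target networks and double-$Q$ minimum as exact, i.e.\ $Q_{\psi'}=Q_\psi$ at convergence of each inner fit.

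\textbf{No perturbation.} With $\Delta\mathrm{RTG}=0$, the conditioning RTGs $\mathrm{RTG}_t$ are exactly those observed in $\mathcal D$. The minimizer of $L_{\text{SL}}$ then reproduces the dataset action distribution conditioned on $(s,\mathrm{rtg})$. Marginalized over the dataset RTGs, this is the behavior policy $\pi_\beta$.
\begin{itemize}[leftmargin=*]
\item I would show that $L_{\text{Align}}$ only reorders actions among RTGs already realized in the data. Its penalty is active only for ordering violations, so at an exact fit to in-distribution pairs the policy stays within $\mathrm{supp}\,\pi_\beta(\cdot\mid s)$ and matches it up to an error controlled by $\lambda_e$.
\item The target then uses $\pi\approx\pi_\beta$, so $\mathcal T_0\approx \mathcal T^{\pi_\beta}$.
\item The fixed point is within $\epsilon/(1-\gamma)$ of $Q_\beta$, where $\epsilon$ is the per-step policy mismatch. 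This is the standard simulation-lemma perturbation bound.
\end{itemize}

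\textbf{Large perturbation.} If $\tilde R$ exceeds every return in $\mathcal D$, I invoke \cref{Equivalence}. The limiting policy satisfies $\pi(s,\tilde R)=\arg\max_{a\in\mathrm{supp}\mathcal D(s)}Q(s,a)$, because the alignment monotonicity forces the highest RTG onto the top of the $Q$-ordering, restricted to data support. The target then becomes $r+\gamma\,\mathbb E\max_{a'\in\mathrm{supp}}Q(s',a')$, which is exactly the in-support Bellman optimality operator $\mathcal T^*_{\mathcal D}$.
\begin{itemize}[leftmargin=*]
\item Contraction gives $Q_m\to Q^*_{\mathcal D}$ geometrically.
\item The greedy policies $\pi_m(s,\tilde R)=\arg\max Q_m(s,\cdot)$ converge to $\pi^*_{\mathcal D}$, given a positive action gap (or finiteness of the support) so that the argmax stabilizes once $\|Q_m-Q^*_{\mathcal D}\|_\infty$ drops below half the gap.
\end{itemize}

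\textbf{Main obstacle.} The main obstacle is the large-perturbation step. \cref{Equivalence} is a limit statement as $R\to\infty$, but here $\tilde R$ is finite, only exceeding the dataset returns. I would first try to show that once $\tilde R>\max_{\mathcal D}\mathrm{RTG}$, the ordering constraint already pins the output at the top $Q$-ranked in-support action: no larger data RTG exists to occupy that slot, and monotonicity gives $Q(s,\pi(s,\tilde R))\ge Q(s,\pi(s,g))$ for all data $g$. The remaining difficulty is that the actor is re-fit as $Q_m$ changes, so the iteration is not a fixed operator. I would handle this by bounding the per-step greedy error $\epsilon_m$ and using the approximate-value-iteration bound
\[
\|Q_m-Q^*_{\mathcal D}\|_\infty\le \gamma^m\|Q_0-Q^*_{\mathcal D}\|_\infty+\frac{\sup_m\epsilon_m}{1-\gamma},
\]
which yields convergence whenever $\epsilon_m\to0$.
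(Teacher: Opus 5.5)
Your proposal is correct at the level of idealization the paper itself uses, but it takes a different route.

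\textbf{The paper's route.} It models each critic update as a \emph{full} evaluation of the induced policy: its induction writes $Q_{m+1}=r+\gamma\,\mathbb E[Q_{m+1}(s',\tilde\pi_m(s'))]$, so this is policy iteration. An auxiliary lemma proves $Q_{m+1}\ge Q_m$ by backward induction on the horizon; this is the policy-improvement theorem, with greediness supplied by the lemma behind \cref{Equivalence}. The Monotone Convergence Theorem then gives a pointwise limit $\bar Q$. A contradiction argument shows that $\bar Q$ satisfies the in-support Bellman optimality equation, hence $\bar Q=Q^*$. The $\Delta\mathrm{RTG}=0$ case is only argued verbally.

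\textbf{Your route.} You treat each round as a single backup against a frozen target. The greedy actor then turns the critic step into $\mathcal T^*_{\mathcal D}$, and Banach's theorem finishes; this is value iteration.

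\textbf{Shared ingredient.} Both proofs rest on the aligned actor being greedy above all dataset returns. The paper's lemma is already stated for a finite $R^*$ exceeding every dataset return, so your ``main obstacle'' is handled there. As in the paper, it needs the full-coverage assumption. The ranking loss only forces $Q(s,\pi(s,\tilde R))\ge\max_{a\in\mathrm{supp}\,\mathcal D(s)}Q(s,a)$, so without coverage it would also be minimized by out-of-support actions.

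\textbf{What each route buys.} Your route gives:
\begin{itemize}
\item a geometric rate;
\item validity from any bounded $Q_0$, whereas the paper's induction assumes $Q_m$ is the value of some policy;
\item tolerance of inexact actor fits;
\item a precise policy-convergence step via the action gap;
\item an actual quantitative statement for $\Delta\mathrm{RTG}=0$.
\end{itemize}
The paper's route gives the monotonicity $Q_{m+1}\ge Q_m$, which is the formal content of its ``positive actor--critic feedback loop'' narrative.

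\textbf{Two small repairs.}
\begin{itemize}
\item With $\sup_m\epsilon_m$, your error-propagation bound only confines $Q_m$ to a ball of radius $\sup_m\epsilon_m/(1-\gamma)$. To conclude convergence when $\epsilon_m\to 0$, use $\|Q_m-Q^*_{\mathcal D}\|_\infty\le\gamma^m\|Q_0-Q^*_{\mathcal D}\|_\infty+\sum_{k<m}\gamma^{m-1-k}\epsilon_k$.
\item At $\Delta\mathrm{RTG}=0$ the actor still depends on $Q$ through $L_{\text{Align}}$. So $\mathcal T_0$ is neither a fixed-policy nor a greedy operator, and your contraction claim does not apply to it directly. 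Argue instead that $\|Q_{m+1}-Q_\beta\|_\infty\le\gamma\|Q_m-Q_\beta\|_\infty+\epsilon$, where $\epsilon$ bounds $\|\mathcal T_0Q-\mathcal T^{\pi_\beta}Q\|_\infty$ along the iterates. This gives $\limsup_m\|Q_m-Q_\beta\|_\infty\le\epsilon/(1-\gamma)$, which is enough for the informal ``close to $Q_\beta$'' claim.
\end{itemize}
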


\begin{remark}
\cref{optimal} characterizes the effect of $\Delta\mathrm{RTG}$ on the behavior of the critic update: at the extremes when $\Delta\mathrm{RTG}$ approaches 0, \textit{RTG-perturbation} recovers a SARSA-style policy evaluation, while a sufficiently large $\Delta\mathrm{RTG}$ approximates standard $Q$-learning \citep{RL-Intro}. For intermediate values, the update interpolates between these behaviors, conceptually similar to the expectile parameter in IQL \citep{IQL}. This implies that $\Delta\mathrm{RTG}$ can be used to guide the learning of $Q$ by providing a more structured and higher-quality action space. In practice, while a larger $\Delta\mathrm{RTG}$ encourages the discovery of optimal policies, it can also increase optimization difficulty and potential instability; we therefore treat $\Delta\mathrm{RTG}$ as a tunable hyperparameter to balance policy improvement and training stability. We further explore the impact of different choices of $\Delta\mathrm{RTG}$ 
in Sec.~\ref{aba-rtg}.

\end{remark}

\begin{table*}[ht]
\revtab{
\centering
\caption{
Performance (D4RL normalized score $\uparrow$) of \methodname{} and other state-of-the-art baselines on Gym domains.
Results are averaged over five random seeds, and we report the mean $\pm$ standard error. Boldface numbers denote the highest or comparable scores among the algorithms.
}

\resizebox{\textwidth}{!}{
\begin{tabular}{lccccccccccc}
\toprule
Dataset & IQL & TD3+BC & DT & CGDT & LSDT & DC & DM & RADT & QT & QCS & \methodname  \\
\midrule
halfcheetah-medium-replay & 44.1 & 44.6 & 36.6 & 40.4 & 42.9 & 41.3 & 39.6 & 41.3 & 48.9 & 54.1 & \textbf{57.1} $\pm$ 0.74 \\
hopper-medium-replay      & 92.1 & 60.9 & 82.7 & 93.4 & 93.9 & 94.2 & 95.4 & 95.7 & \textbf{102} & 100.4 &\textbf{102.2}  $\pm$ 0.64\\
walker2d-medium-replay    & 73.7 & 81.8 & 79.4 & 78.1 & 74.7 & 76.6 & 85.5 & 75.9 & 98.5 & 94.1 & \textbf{101.3}  $\pm$ 0.73 \\
\midrule
halfcheetah-medium        & 47.4 & 48.3 & 42.6 & 43 & 43.6 & 43 & 43.5 & -- & 51.4 & 59 &\textbf{ 65.3} $\pm$ 0.63 \\
hopper-medium            & 63.8 & 59.3 & 67.6 & 96.9 & 87.2 & 92.5 & 98.1 & -- & 96.9 & 96.4 & \textbf{102.1}  $\pm$ 0.74 \\
walker2d-medium           & 79.9 & 83.7 & 74 & 79.1 & 81 & 79.2 & 83.8 & -- & 88.8 & 88.2 & \textbf{94.7 } $\pm$ 0.67 \\
\midrule
halfcheetah-medium-expert & 86.7 & 90.7 & 86.8 & 93.6 & 93.2 & 93 & 93.9 & 93.1 & 96.1 & 93.3 &\textbf{98.8}  $\pm$ 0.68 \\
hopper-medium-expert      & 91.5 & 98 & 107.6 & 107.6 & 111.7 & 110.4 & 111.8 & 110.4 & \textbf{113.4} & 110.2 &\textbf{114.0}  $\pm$ 0.18 \\
walker2d-medium-expert    & 109.6 & 110.1 & 108.1 & 109.3 & 109.8 & 109.6 & 112.7 & 109.7 & 112.6 & 116.6 & \textbf{121.4 } $\pm$0.52  \\
\midrule

Sum & 688.8 & 677.4 & 685.4 & 741.4 & 738 & 739.8 & 764.3 & -- & 808.6 & 812.3 & \textbf{856.9}\\
\bottomrule
\end{tabular}
}
\label{tab:gym-results1}
}
\end{table*}

\begin{table*}[ht]
\revtab{
\caption{Performance (D4RL normalized score $\uparrow$) of \methodname~and baseline methods on AntMaze. Results averaged over five seeds. Boldface numbers denote the highest or comparable scores among the algorithms.}
\resizebox{\textwidth}{!}{%
\begin{tabular}{lcccccccccc}
\toprule
Dataset & IQL & TD3+BC& CQL & DT&CGDT &RVS& DC & QT & QCS &  \methodname \\
\midrule
antmaze-umaze  & 87.5 &78.6 &  74   & 65.6&71 &64.4& 85.0 & 90.7& 92.7 & \textbf{96.4}$\pm 1.12$ \\
antmaze-umaze-diverse & 62.2  & 71.4  & \textbf{84}   & 51.2 &71&70.1 &78.5 & \textbf{83.7}   & 72.3& 73.2 $\pm 6.44$  \\
antmaze-medium-play   & 71.2  & 10.6  & 61.2 & 4.3 &-- &4.5& 33.2   & 78.6 & 81.6   & \textbf{85.6} $\pm 4.74$  \\
antmaze-medium-diverse & 70  & 3 & 53.7 & 1.2&--&7.7  & 27.5   & 62.3 & \textbf{79.5} & 76.2 $\pm 6.62$  \\
\bottomrule
\end{tabular}
}
\label{tab:gym-results2}
}
\end{table*}

\section{Experiments}
We evaluate the \methodname{} mainly on D4RL benchmark \cite{d4rl} including the \texttt{Gym} (Hopper, HalfCheetah, Walker2d) and \texttt{AntMaze} (umaze, medium) tasks.

\subsection{Implementation Details}

We compare \methodname{} with a diverse set of baselines, including both value-based methods
and CSM methods.
For value-based approaches, we consider IQL~\citep{IQL}, TD3+BC~\citep{TD3+BC}, and CQL~\citep{CQL}.
For CSM-based methods, we include DT~\citep{DT}, DC~\citep{DC}, RVS~\citep{rvs}, CGDT~\citep{CGDT}, LSDT~\citep{LSDT},
DM~\citep{DM}, RADT~\citep{RADT}, QT~\citep{QT}, and QCS~\citep{QCS}.

For training stability, we pretrain the $Q$-function on the offline dataset. To decouple the performance of RTG-to-behavior alignment from the effects of critic pretraining, we employ a standard Double $Q$-learning update \citep{doubleq} across most environments, with the exception of \texttt{antmaze}, where IQL \citep{IQL} is adopted due to its superior performance in sparse-reward tasks (see Appendix~\ref{Pretrain} for details).

Motivated by prior observations that vanilla attention underutilizes RTG tokens \citep{DC,RADT}, we empirically introduce a lightweight convolutional projection for attention inputs. As shown in \cref{tab:ablation-conv}, this does not affect peak performance but improves alignment behavior; architectural details and ablations are provided in Appendix~\ref{app:model_structure}.

\subsection{Metric}
We follow standard offline RL evaluation and report D4RL normalized scores averaged over five seeds, with 50 rollouts per seed (100 for AntMaze).

To evaluate \emph{alignment} error, we measure how accurately a policy tracks a prescribed RTG target.
For each environment, we sweep target RTGs from the minimum to maximum D4RL return in increments of $100$, execute $30$ rollouts per target, and record the resulting normalized scores.
Let $\text{score}_{\mathrm{tgt},j}$ and $\text{score}_{\mathrm{real},j}$ denote the $j$-th target and achieved scores. For $n$ target RTGs, we evaluate the alignment error by the root mean squared deviation
\begin{equation}
M = \sqrt{\tfrac{1}{n} \textstyle{\sum_{j=1}^{n}}
(\text{score}_{\mathrm{real}, j}
- \text{score}_{\mathrm{tgt}, j} )^{2}},
\end{equation}
where lower $M$ indicates better RTG-to-behavior alignment.

\subsection{Main Results}
We first present the performance of \methodname{} on Gym domains and AntMaze tasks in \cref{tab:gym-results1,tab:gym-results2}. The results show that \methodname{} consistently achieves state-of-the-art performance, and in many cases surpasses existing methods across the datasets. This demonstrates that, although our method is primarily motivated by alignment, it can attain high scores when conditioned on high RTG, as is typical for CSMs. \looseness=-1

We also report alignment metrics in \cref{tab:alignment}. 
\methodname{} achieves strong alignment across the evaluated Gym domains, validating the effectiveness of our proposed approach. Notably, in HalfCheetah, where many CSMs tend to produce similar behaviors for different RTGs, our method reliably aligns with the target RTG. 

\begin{table*}[ht]
\centering
\revtab{
\caption{
Alignment performance (Root Mean Squared Error $\downarrow$) of \methodname{} and other state-of-the-art baselines on Gym domains.
Results are averaged over five random seeds. 
}
\resizebox{\textwidth}{!}{%

\begin{tabular}{lcccccc}
\toprule
Dataset & DC & RADT & DT &QT&QCS &\methodname  \\
\midrule

walker2d-medium-replay    & 14.05 $\pm$0.11 & 14.27$\pm$ 2.09& 18.67 $\pm$ 1.08&35.93 $\pm 1.9$& 20.55 $\pm$ 3.26&\textbf{6.39} $\pm$ 0.51  \\
hopper-medium-replay      & 12.65$\pm$0.3 & \textbf{6.49} $\pm$0.16 & 11.38 $\pm$ 1.31 &39.69 $\pm $ 7.4&27.49 $\pm$ 3.09&10.1 $\pm$ 1.2   \\
halfcheetah-medium-replay & 28.67 $\pm$0.34& 50.15$\pm$ 1.45& 30.93 $\pm$0.59& 28.25$\pm$ 0.35&28.12$\pm$0.52 &\textbf{17.8} $\pm$ 0.53  \\
\midrule

walker2d-medium           & 16.72 $\pm$ 3.77& 18.68 $\pm$4.46& 30.69 $\pm$ 1.99 &38.25 $\pm$ 7.99&40.85 $\pm$2.95 &\textbf{6.25} $\pm$ 0.43   \\
hopper-medium             &16.03$\pm$2.45 & 14.99$\pm$0.74 & 16.08$\pm$ 0.38&31.42$\pm$ 1.92 & 32.19$\pm$ 0.51&\textbf{8.39} $\pm$ 2.3  \\
halfcheetah-medium        & 28.89$\pm$ 0.1& 25.23$\pm$1.21& 29.22$\pm$0.28 &28.07$\pm 0.32$ &28.88$\pm$ 0.1&\textbf{12.36} $\pm$ 0.7 \\
\midrule

walker2d-medium-expert    & 13.62$\pm$ 1.59& 10.15 $\pm$0.69& 18.88 $\pm$1.07&63.71 $\pm$ 1.09& 48.37 $\pm$ 0.44&\textbf{6.84} $\pm$ 0.83   \\
hopper-medium-expert      & 12.22 $\pm$ 0.8& \textbf{8.18}$\pm 0.81$  & 16.92 $\pm$ 1.89&21.39 $\pm$ 0.1&24.91 $\pm$0.41 &11.7 $\pm$ 2.05  \\
halfcheetah-medium-expert & 17.88 $\pm$ 0.17& 17.55 $\pm$ 0.1& 19.79$\pm$1.74 &21.24 $\pm$ 0.96& 43.95 $\pm$1.1 &\textbf{1.24} $\pm$0.19    \\
\midrule

Mean & 17.86 & 18.58 & 21.39 &34.22&32.81 &\textbf{9.01 } \\
\bottomrule
\end{tabular}
}
\label{tab:alignment}
}
\end{table*}

\subsection{Ablation Study}
\label{aba-rtg}
In this section, we present key ablation studies for \methodname{}. 
Additional experiments, including hyperparameter variations, alternative indicator functions, architectural components, and training procedures, are reported in Appendix~\ref{more-aba}.

\noindent \textbf{Selection of $\Delta \mathrm{RTG}$ in \textit{RTG-perturbation}.}
To evaluate the impact of different $\Delta \mathrm{RTG}$ values, we conduct experiments on 
\rev{\texttt{halfcheetah-medium} with normalized $\Delta\mathrm{RTG} \in \{0, 1, 3, 5, 10\}$ (i.e., actual RTG normalized by $1000$).}
We report both the D4RL score and the alignment error in \cref{fig:rtg-offset}. 
Consistent with our analysis, increasing $\Delta\mathrm{RTG}$ generally improves overall performance, 
while the alignment error increases mildly.

\begin{figure}[ht]
\centering
\includegraphics[width=\linewidth]{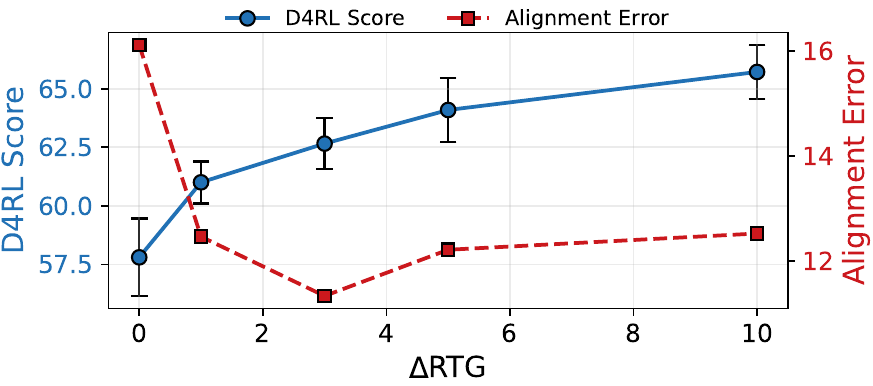}
\caption{Effect of normalized RTG offset $\Delta\text{RTG}$ (actual RTG divided by 1000) on policy performance (D4RL score) and alignment error in \texttt{halfcheetah-medium}.}
\label{fig:rtg-offset}
\end{figure}

Interestingly, alignment error increases as the RTG offset approaches zero, with most degradation occurring in the low-RTG regime. This effect is closely related to early-timestep collapse and is exacerbated when the $Q$-function is fixed after pretraining (Appendix~\ref{app:alignment-analysis}). We hypothesize that this occurs because the critic remains close to $Q_\beta$. In low-RTG regions, this leads to weak and noisy guidance due to the highly multimodal nature of low-return behaviors and their limited value separation. Consequently, RTG perturbations are important not only for high-RTG performance, but also for maintaining reliable alignment across the RTG spectrum.

On the other hand, larger RTG offsets target higher-return behaviors but also increase distributional shift, potentially destabilizing $Q$-function learning. To mitigate this, we restrict $\Delta\mathrm{RTG}$ to a moderate range; detailed values and experimental settings are reported in Appendix~\ref{Hyperparameters}.

\paragraph{Effect of the Convolution Layer.}
We ablate the 1D convolution layer in our architecture by reverting the model to the standard Decision Transformer. As shown in \cref{tab:ablation-conv}, while the convolution layer has a marginal impact on the best achievable performance, it substantially reduces the alignment error. A possible explanation is that the convolution operation enables the attention mechanism to better extract and propagate information from the RTG token, thereby providing more informative alignment signals.
\begin{table}[ht]
\caption{
Effect of the convolution layer.
Align. RMSE ($\downarrow$) denotes the alignment error, Perf. ($\uparrow$) denotes the overall performance (D4RL normalized score).
Results averaged over three seeds.
}
\centering
\resizebox{.95\linewidth}{!}{
\begin{tabular}{lcccc}
\toprule
Dataset &
\multicolumn{2}{c}{Align. RMSE $\downarrow$} &
\multicolumn{2}{c}{Perf. $\uparrow$} \\
\cmidrule(lr){2-3} \cmidrule(lr){4-5}
 & Ours & w/o Conv. & Ours & w/o Conv. \\
\midrule
Medium-Replay & 11.31 & 13.37 & 86.45 & 86.88 \\
Medium  & 9.19  & 14.22 & 87.70 & 87.21 \\
Medium-Expert & 6.80  & 10.70 & 111.37 & 110.94 \\
\bottomrule
\end{tabular}
}
\label{tab:ablation-conv}
\end{table}

\section{Behavioral Analysis and Generalization }

While D4RL rewards are usually scalar, these rewards often aggregate multiple components. For instance, \texttt{halfcheetah} uses the difference between the forward speed \emph{forward\_reward} and \emph{ctrl\_cost} as the reward. 
Therefore, it's natural to ask whether the model can meaningfully adjust its behavior conditioned on different target RTGs even when trained only on scalar returns.

To answer this, we evaluate rollouts across a range of target RTGs. Higher RTGs consistently induce faster locomotion, while lower RTGs lead to more cautious movement (\cref{fig:mean-velocity}). A finer-grained analysis of instantaneous velocity along representative
trajectories (\cref{fig:diff-traj}) shows that the agent adaptively adjusts gait and speed rather than following a single policy to satisfy the specified reward constraints. These results suggest that
\methodname{} has captured a semantically meaningful mapping from RTG tokens
to the underlying action space. Additional \texttt{ant} experiments are
provided in Appendix~\ref{Ant-Analysis} with similar observations.

\begin{figure}[ht]
\centering
\includegraphics[width=\linewidth]{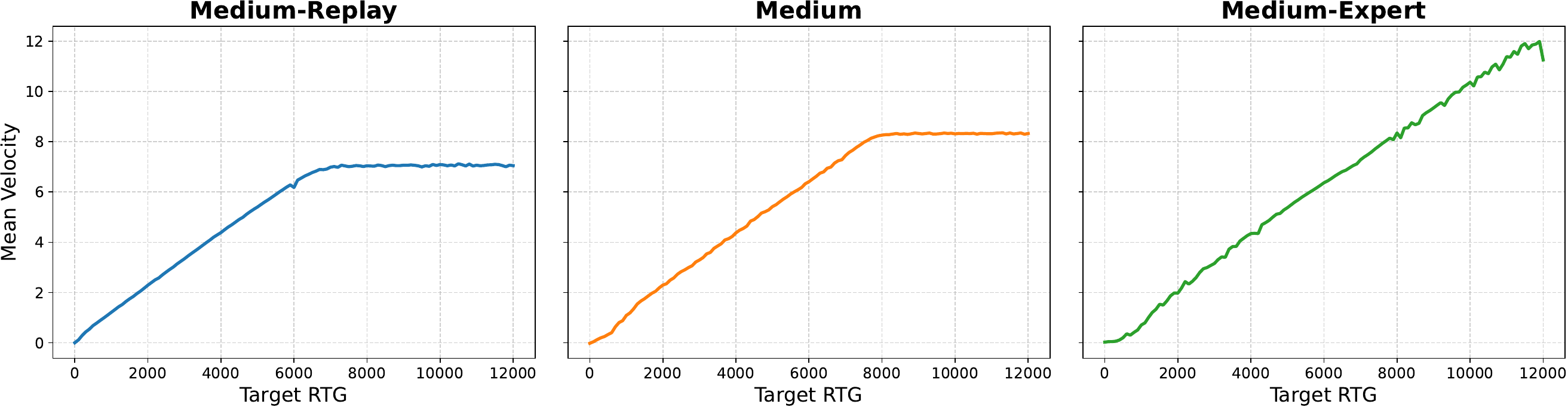}
\caption{Mean forward velocity of the agent conditioned on varying target RTGs across different datasets on \texttt{halfcheetah}.}
\label{fig:mean-velocity}
\end{figure}

\begin{figure}[htbp]
\centering
\includegraphics[width=\linewidth]{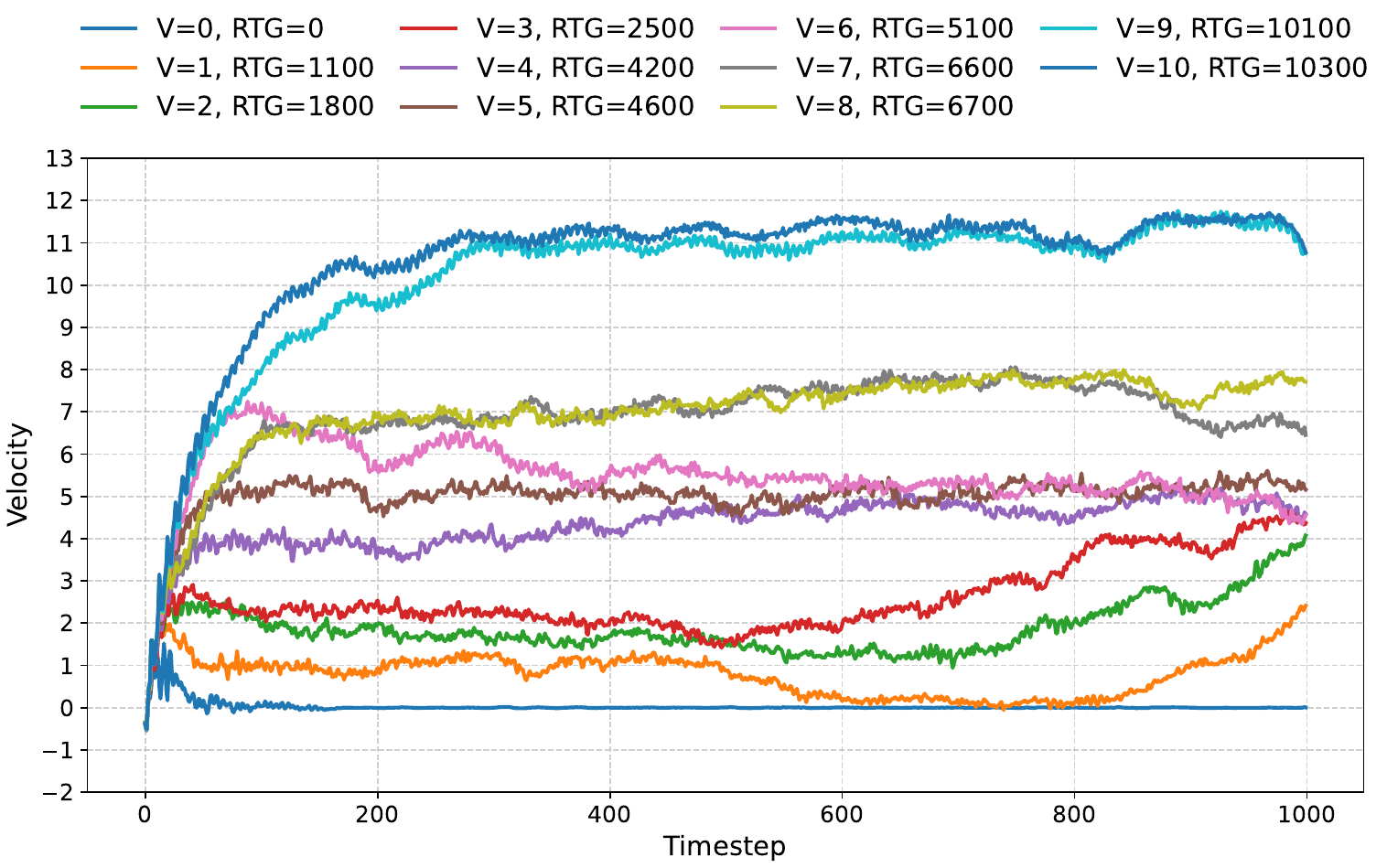}
\caption{Agent velocity over timesteps for different target RTGs on \texttt{halfcheetah-medium-expert}, averaged over 30 runs.}
\vspace{-1em}
\label{fig:diff-traj}
\end{figure}

\noindent \textbf{Zero-shot transfer to Meta-RL tasks.} With this observation, we further challenge the model's alignment robustness through a zero-shot transfer to \texttt{halfcheetah-vel} \citep{MACAW}. We compare our approach against two distinct categories of baselines: (i) \textbf{In-domain Meta-RL methods}, including Prompt-DT \citep{prompt-dt}, MACAW \citep{MACAW}, and PEARL \citep{PEARL}, which are trained on the \texttt{halfcheetah-vel} task family across multiple tasks; and (ii) \textbf{Cross-domain CSMs}, which, like \methodname{}, are trained exclusively on the \texttt{halfcheetah-medium-expert} dataset and generalize to the velocity-tracking task directly.

\begin{table}[!h]
\centering
\caption{
\textbf{Performance on HalfCheetah-vel.}
Meta-RL methods are trained on \texttt{HalfCheetah-vel},
while offline RL methods are trained on \texttt{HalfCheetah-medium-expert}
and evaluated via zero-shot transfer.
For Prompt-DT, $K^*{=}5$ follows the original setting with 5-step expert prompts used at both training and inference, while $K^*{=}0$ removes prompts entirely to enable a fair comparison with prompt-free CSMs.
}

\label{tab:halfcheetah-vel}
\resizebox{.95\linewidth}{!}{
\begin{tabular}{l c}
\toprule
\textbf{Method} & \textbf{Average Episode Return} \\
\midrule
\multicolumn{2}{c}{\cellcolor{gray!20}\textit{Trained on HalfCheetah-vel}} \\

\rowcolor{gray!20} MACAW \citep{MACAW} & -121.6 \\
\rowcolor{gray!20} PEARL \citep{PEARL} & -273.6 \\
\rowcolor{gray!20} Prompt-DT \citep{prompt-dt} ($K^*=0$) & -188.4 \\
\rowcolor{gray!20} Prompt-DT ($K^*=5$) & -38.6 \\

\midrule

\multicolumn{2}{c}{\textit{Trained on HalfCheetah-medium-expert}} \\

RADT \citep{RADT} & -754.7 \\
DC \citep{DC} & -756.7 \\
DT \citep{DT}& -764.6 \\
QCS \citep{QCS}& -1298.7\\
QT \citep{QT} & -824.6 \\
\methodname{} & \textbf{-142.3} \\

\bottomrule
\end{tabular}
}
\end{table}

To account for differences in reward formulation, observation dimensions, and
episode lengths, we apply a unified adjustment protocol to all
cross-domain CSMs (details in Appendix~\ref{Halfcheetah-vel-Transfer}).
Specifically, for each target velocity, the agent's behavior is controlled
solely by the input RTG, which is determined using simple linear interpolation.

Surprisingly, we found that \methodname{} exhibits a strong zero-shot transferability compared to other CSMs and achieves competitive performance even against in-domain Meta-RL baselines as shown in \cref{tab:halfcheetah-vel}. This demonstrates its superior ability to represent and generalize multiple RTG-conditioned policies across environments with significant discrepancies. Furthermore, we evaluate the model on higher target velocities that exceed the original range of the \texttt{halfcheetah-vel} benchmark in \cref{tab:halfcheetah-vel-rtg-high}. These results highlight the controllability of \methodname{} and its ability to generalize to a wide range of target velocities.

\section{Conclusion}
In this work, we introduce \methodname{}, a framework designed to align conditioned sequence models with different return-to-go (RTG) targets.
We analyze the underlying mechanisms of \methodname{} and show that actions from alignment-consistent CSMs can provide informative signals for $Q$-function learning.
Extensive experiments demonstrate that our approach learns a coherent family of policies and achieves state-of-the-art performance in both return maximization and RTG alignment across a wide range of environments.
\section*{Acknowledgements}
We thank the anonymous reviewers for their helpful comments. This research was supported by
WZ’s startup funding provided by the School of Data Science and Society at UNC Chapel Hill.

\section*{Impact Statement}

This paper presents work whose primary goal is to advance the field of machine learning, specifically in offline reinforcement learning and conditioned sequence models.
Our contributions focus on improving the alignment and controllability of learned policies using fixed datasets, without introducing new data sources or deployment mechanisms.

The techniques studied in this work are general algorithmic methods and are evaluated exclusively in standard simulated benchmark environments.
As such, we do not anticipate direct negative societal impacts arising uniquely from this research beyond those already well established for reinforcement learning methods in general.
Potential applications of improved policy alignment include enabling safer and more controllable decision-making systems, subject to appropriate domain-specific considerations.

\rev{
\section*{Limitations and Future Work}
This paper studies the RTG mismatch issue in return-conditioned offline RL and
proposes Q-guided alignment as a practical mitigation. Our experiments focus
on standard Gym and AntMaze benchmarks, and extending the evaluation to
broader domains such as visual control and robotic manipulation is an
important direction for future work. Our theoretical analysis uses simplifying
assumptions to isolate the effect of RTG--behavior alignment; extending the
analysis to more general settings remains open. Finally, while \methodname{}
shows empirical benefits in sparse-reward environments such as AntMaze, its
stable performance in these settings currently relies on additional
initialization or preprocessing for obtaining a reliable critic signal.
Developing a more robust sparse-reward variant is an interesting direction for
future work.
}


\bibliography{example_paper}
\bibliographystyle{icml2026}
\clearpage
\appendix
\onecolumn  
\renewcommand{\thetable}{\Alph{section}.\arabic{table}}
\renewcommand{\thefigure}{\Alph{section}.\arabic{figure}}

\section{Algorithm}
\label{app:alg}
In this section, we provide the formal pseudocode for \methodname. The training procedure is designed to ensure that the policy $\pi_\theta$ not only clones the behavioral data but also adheres to directional value consistency through the alignment loss.
\begin{algorithm}[H]
\caption{Training Procedure}
\begin{algorithmic}[1]
\label{alg:main}
\REQUIRE Dataset $\mathcal{D}$, sequence horizon $k$, training epochs $N$, batch size $B$, RTG offset $\Delta \text{RTG}$, noise variance $\sigma_e^2$, target update rate $\alpha$, pretrained critic networks $Q_{\psi_1}, Q_{\psi_2}$, target critic networks $Q_{\psi_1'}, Q_{\psi_2'}$
\ENSURE Trained actor parameters $\theta$

\STATE Initialize actor $\pi_{\theta}$ and target actor $\pi_{\theta'}$

\FOR{$\text{epoch} = 1$ to $N$}
    \STATE Sample batch of trajectories $\boldsymbol \tau \sim \mathcal{D}$
    \STATE Sample sub-trajectory $\boldsymbol \tau_t$ of length $k$ with random timestep $t$
    
    \text{// Critic update}
    \STATE Sample target actions $\hat a^{\prime, \Delta \mathrm{RTG}} \sim \pi_{\theta'}(\boldsymbol \tau_t^{\Delta \text{RTG}})$
    \STATE Update critic networks using \eqref{eq:Loss-Q}
    
     \text{// Actor update}
    \STATE Predict actions $\hat a \sim \pi_{\theta}(\boldsymbol \tau_t)$
    \STATE Sample noise $\delta \sim \mathcal{N}(0, \sigma_e^2)$
    \STATE Predict noisy actions $\hat a^\delta \sim \pi_{\theta}(\boldsymbol \tau_t^\delta)$
    \STATE Update actor using \eqref{eq:ActorLoss}
    
    \STATE Update target networks (actor and critics) with rate $\alpha$
\ENDFOR

\STATE \textbf{Return} $\theta$

\end{algorithmic}
\end{algorithm}

\section{Mathematical Details}
\label{app:thm}

\subsection{Proof of \cref{thm:mono-order}}
In order to prove \cref{thm:mono-order}, we first give a formal statement and analysis of results from \citet{dt-theory}.

\begin{theorem}
\label{thm:dt1}
Consider an MDP, a behavior policy $\beta$, and a conditioning function $f$ that is consistent with the reward dynamics, i.e.,
\begin{align*}
f(s) = f(s') + r(s,a)
\end{align*}
for any transition from $s$ to $s'$ under action $a$. Let
\begin{align*}
g(\boldsymbol  \tau) = \sum_{i=0}^H r_i
\end{align*}
denote the cumulative reward of a trajectory $\boldsymbol  \tau$. Assume:
\begin{enumerate}
    \item \textbf{Return Coverage:} the policy $\beta$ covers all initial states $s_1$, i.e.,
    \begin{align*}
        P(g(\boldsymbol  \tau) = (f(s_1) \mid s_1) \ge \alpha_f, \quad \forall s_1.
    \end{align*}
    \item \textbf{Near Determinism:} the environment satisfies
    \begin{align*}
        P(r \neq R(s,a) \mid s,a) \le \epsilon, \quad
        P(s' \neq T(s,a) \mid s,a) \le \epsilon
    \end{align*}
    for some reward function $r$ and transition function $T$.
\end{enumerate}
Then the expected return of the conditioned policy trained on infinite data $\pi_f$ satisfies
\begin{align*}
\mathbb{E}_{s_1}[f(s_1)] - \mathbb{E}_{\boldsymbol  \tau \sim \pi_f}[g(\boldsymbol  \tau)] \le \epsilon \left(\frac{1}{\alpha_f} + 2 \right) H^2,
\end{align*}
where $H$ is the horizon.
\end{theorem}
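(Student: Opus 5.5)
The plan is to compare two trajectory distributions. The first is the \emph{ideal} conditional distribution $P_\beta(\cdot \mid s_1, g(\boldsymbol\tau)=f(s_1))$: behavior trajectories that start at $s_1$ and realize exactly the return $f(s_1)$. The second is the distribution $P_{\pi_f}(\cdot\mid s_1)$ induced by rolling out the infinite-data RCSL policy $\pi_f(a\mid s)=P_\beta(a\mid s,\,g_{\ge h}=f(s))$ in the true environment. I assume, as in \citet{dt-theory}, that rewards lie in $[0,1]$, so every trajectory return lies in $[0,H]$. Under the ideal distribution the return equals $f(s_1)$ almost surely. Hence
\begin{align*}
f(s_1)-\mathbb{E}_{\pi_f}[g(\boldsymbol\tau)] \le H\cdot \mathrm{TV}\bigl(P_\beta(\cdot\mid s_1,g=f(s_1)),\,P_{\pi_f}(\cdot\mid s_1)\bigr),
\end{align*}
and it suffices to show the total variation is at most $\epsilon(1/\alpha_f+2)H$, then average over $s_1$.

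I will bound the total variation with a telescoping (simulation-lemma) argument over the $H$ steps. Both processes share the same action rule in the following sense: the conditional of $a_h$ given $s_h$ and the return-to-go event $\{g_{\ge h}=f(s_h)\}$ under the ideal distribution is exactly $\pi_f(\cdot\mid s_h)$. This uses the consistency $f(s)=f(s')+r(s,a)$: conditioning on the total return $f(s_1)$ together with the realized prefix is the same as conditioning on the return-to-go $f(s_h)$. So the per-step discrepancy comes only from the transition and reward. The ideal process uses the \emph{conditioned} dynamics $P_\beta(r,s'\mid s,a,g_{\ge h}=f(s))$, while $\pi_f$ uses the true dynamics $P(r,s'\mid s,a)$. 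Summing the per-step TV errors over $H$ steps gives the extra factor $H$, for $H^2$ overall.

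For the per-step error I will use Bayes' rule and near-determinism. Let $E$ be the event $\{g_{\ge h}=f(s)\}$. The true dynamics put mass at least $1-2\epsilon$ on the deterministic outcome $(R(s,a),T(s,a))$, which accounts for the $2\epsilon$ term. The conditioned dynamics can reweight the remaining $\le 2\epsilon$ of non-deterministic mass by at most a factor $1/P_\beta(E\mid s,a)$. I therefore need a lower bound of the form $\alpha_f$ on the conditioning probability along the ideal path. Return coverage gives $P_\beta(E\mid s_1)\ge\alpha_f$ at the start, and a rescaling argument should transfer this down the trajectory. Specifically, I would weight each step's error by the ideal-process probability of reaching $(s_h,a_h)$; because the ideal process is itself conditioned on $E$ at $s_1$, these weights multiply out so that the bad mass is charged to $P(\text{deviation})/P(E\mid s_1)\le \epsilon/\alpha_f$ per step. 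This yields a per-step TV of at most $\epsilon/\alpha_f+2\epsilon$, and summing over $h\le H$ gives $\epsilon(1/\alpha_f+2)H$.

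The main obstacle is this last step: controlling the Bayes reweighting at intermediate states, where $P_\beta(E\mid s_h,a_h)$ could be small. Making the charging-to-$s_1$ argument precise, so that the whole-path weighting cancels the intermediate denominators rather than requiring pointwise coverage at every state, is where I expect the care to go. The remaining steps are the standard simulation lemma.
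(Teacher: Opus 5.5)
The paper gives no argument of its own here: it restates Theorem~1 of \citet{dt-theory} and defers to that source. Your proposal reconstructs the proof instead, and the route is sound.

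You compare the $\pi_f$ rollout with $P_\beta(\cdot\mid s_1,g=f(s_1))$. You observe that the two share action kernels on the support of the latter, by consistency plus the Markov property. You then apply the chain-rule bound on total variation, taking prefix expectations under the ideal process (so the behavior of $\pi_f$ off that support never matters), and bound the gap between conditioned and true dynamics.

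The step you flag as the main obstacle closes exactly. Let $\rho_h=(s_1,a_1,r_1,\dots,s_h,a_h)$ and $\phi_h(s,a)=P_\beta(g_{\ge h}=f(s)\mid s,a)$. Then
\begin{align*}
P_\beta\bigl(\rho_h\mid s_1,g=f(s_1)\bigr)=\frac{P_\beta(\rho_h\mid s_1)\,\phi_h(s_h,a_h)}{P_\beta(g=f(s_1)\mid s_1)}.
\end{align*}
Let $D$ be the deviation event. The conditioned dynamics place at most $P(D\mid s_h,a_h)/\phi_h(s_h,a_h)$ mass off $(R(s_h,a_h),T(s_h,a_h))$. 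The factor $\phi_h$ cancels, so at every step the ideal expectation of that off-mass is at most $\sup_{s,a}P(D\mid s,a)/\alpha_f$, with no coverage needed at intermediate states.

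Two hypotheses are implicit in this. The identity above, and the identification of the ideal action kernel with $\pi_f$, need consistency with the \emph{realized} reward $r_t$ (as DT enforces it, by decrementing the RTG with the observed reward), not merely with $R(s,a)$. Otherwise the two action kernels differ on reward-deviating paths. What your version buys over the citation is that this hypothesis, together with $r\in[0,1]$ (which you do state), becomes explicit. The citation buys brevity, which is all the paper needs, since the result is only used to motivate the $\log|\Pi|$ dependence.

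The one place your bookkeeping does not check out is the constant. Under the split near-determinism assumption as written, $P(D\mid s,a)$ can be as large as $2\epsilon$, as you note yourself. So the charge is $2\epsilon/\alpha_f$, not $\epsilon/\alpha_f$. Your argument then ends at $2\epsilon(1/\alpha_f+1)H^2$, which does not imply the stated $\epsilon(1/\alpha_f+2)H^2$.

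The stated bound is recovered if near-determinism is read jointly, $P(r\neq R(s,a)\text{ or }s'\neq T(s,a)\mid s,a)\le\epsilon$, which is how the cited theorem states it. Both kernels put all but their off-mass on the same outcome, so the per-step total variation is at most $P(D\mid s,a)$ plus the conditioned off-mass. The ideal expectation of that sum is at most $\epsilon+\epsilon/\alpha_f$, giving $\epsilon(1/\alpha_f+1)H^2$, which lies within the claimed bound.
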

The result follows directly from Theorem 1 of \citet{dt-theory}. \qed

With \cref{thm:dt1}, we can analyze why previous CSMs struggle to align with target RTG values that deviate from the training distribution. Following \citet{RLvitamin}, we consider the case where $f(s_1) = \text{RTG}_{\text{tgt}}$.
According to \cref{thm:dt1}, the alignment error is bounded by:
\begin{align*}
\text{RTG}_{\text{tgt}} - \mathbb{E}_{\boldsymbol  \tau \sim \pi(\cdot|s,\text{RTG}_{\text{tgt}})}[\text{RTG}_{\text{real}}] 
\le \epsilon \left(\frac{1}{\alpha_f} + 2 \right) H^2,
\end{align*}
While \cref{thm:dt1} provides an analysis of DT's behavior in the infinite-data regime, we are more interested in the finite-data setting. In particular, we have:

\begin{theorem}
\label{thm:dt2}
Under the assumptions of \cref{thm:dt1}, let the training sample size be $N$, and further assume that 
\begin{itemize}
    \item $\frac{P_{\pi_f}(s)}{P_{\pi_\beta}(s)} \le C_f$ for all $s$.
    \item The policy class $\Pi$ is finite.
    \item $|\log \pi(a|s,g) - \log \pi(a'|s',g')| < c$ for all $(a,s,g,a',s',g')$ and all $\pi \in \Pi$.
    \item The approximation error of $\Pi$ is bounded by $\epsilon_\text{error}$.
\end{itemize}
Then, with probability at least $1-\delta_p$,
\begin{align*}
\mathbb{E}_{\boldsymbol  \tau \sim \pi(\cdot|s,\text{RTG}_{\text{tgt}})}[\text{RTG}_{\text{real}}] 
- \mathbb{E}_{\boldsymbol  \tau \sim \hat \pi(\cdot|s,\text{RTG}_{\text{tgt}})}[\text{RTG}_{\text{real}}] 
\le O\Bigg(\frac{C_f}{\alpha_f} H^2 \Big(\sqrt{c} \Big(\frac{\log |\Pi| / \delta_p}{N}\Big)^{1/4}\Big) + \sqrt{\epsilon_\text{error}}\Bigg).
\end{align*}
\end{theorem}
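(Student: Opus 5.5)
The plan is to obtain \cref{thm:dt2} as a finite-sample analogue of \cref{thm:dt1}, following the route of Theorem 1 and Corollary 3 of \citet{dt-theory}. We decompose the gap between $\pi=\pi_f$, the infinite-data conditioned policy, and the empirical risk minimizer $\hat\pi$ into two parts. The first is a \emph{performance-difference} part, which converts a per-state discrepancy between the two policies into a return gap. The second is a \emph{statistical} part, which bounds that per-state discrepancy by the excess log-likelihood risk of $\hat\pi$.

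The first step is a simulation-lemma argument. We write the difference of expected returns as a telescoping sum over timesteps $h=1,\dots,H$. Each term is bounded by $H\cdot\mathbb{E}_{s\sim P_{\pi_f}}\big[\mathrm{TV}(\pi_f(\cdot|s,g),\hat\pi(\cdot|s,g))\big]$, since the per-step rewards are bounded and $\sum_h$ contributes another factor of $H$. Together these give the $H^2$ factor. The conditioning $g$ is the consistent RTG $f(s)$.

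The second step changes measure from $P_{\pi_f}$ to the data distribution $P_{\pi_\beta}$, using the density-ratio assumption $C_f$. The $1/\alpha_f$ factor enters through return coverage: conditioning on the event $g(\boldsymbol\tau)=f(s_1)$, which has probability at least $\alpha_f$, inflates expectations by at most $1/\alpha_f$, exactly as in the proof of \cref{thm:dt1}. The third step applies Pinsker's inequality to pass from TV to KL, and Jensen's inequality to move the square root outside the expectation. The resulting bound is $\frac{C_f}{\alpha_f}H^2\sqrt{\mathbb{E}_{\beta}[\mathrm{KL}(\pi_f\|\hat\pi)]}$.

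The last step bounds the expected KL, which equals the excess population log-loss of $\hat\pi$, by an approximation term and an estimation term. The approximation term is at most $\epsilon_{\text{error}}$. For the estimation term, the log-ratio boundedness assumption makes each loss summand bounded in a range $c$, so Hoeffding's inequality plus a union bound over the finite class $\Pi$ gives uniform deviation $O\big(c\sqrt{\log(|\Pi|/\delta_p)/N}\big)$ with probability $1-\delta_p$. ERM then has excess risk at most twice this quantity plus $\epsilon_{\text{error}}$. Taking the square root gives $\sqrt{c}\,(\log(|\Pi|/\delta_p)/N)^{1/4}+\sqrt{\epsilon_{\text{error}}}$, using $\sqrt{x+y}\le\sqrt x+\sqrt y$.

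I expect the main obstacle to be the measure-change step. One must check that the states visited by $\hat\pi$ under conditioning also remain controlled. The standard fix is to apply the performance-difference lemma with the expectation taken under $\pi_f$'s state distribution, so that only $C_f$ is needed. Obtaining $\sqrt{c}$ rather than $c$ may require a Bernstein-type bound that exploits the variance of the log-ratio being controlled by KL times $c$. I would try the Hoeffding route first and absorb the constants into $O(\cdot)$ if needed.
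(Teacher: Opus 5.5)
The paper gives no argument of its own here; it defers entirely to Corollary~3 of \citet{dt-theory}. Your chain is the standard argument behind that bound: performance difference under $P_{\pi_f}$ for the $H^2$, density-ratio change to the data for $C_f$, coverage for $1/\alpha_f$, Pinsker and Jensen, then finite-class log-loss concentration. Your closing worry about needing a Bernstein bound to get $\sqrt{c}$ is unfounded. Hoeffding gives excess log-loss $O\big(c\sqrt{\log(|\Pi|/\delta_p)/N}\big)+\epsilon_{\text{error}}$, and its square root is already $\sqrt{c}\,\big(\log(|\Pi|/\delta_p)/N\big)^{1/4}+\sqrt{\epsilon_{\text{error}}}$, as you computed one paragraph earlier.

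The step that does not go through as you justify it is the $1/\alpha_f$ factor. After the $C_f$ change of measure you hold $\mathbb{E}_{s\sim P_{\pi_\beta}}[h(s,f(s))]$ with $h(s,g)=\mathrm{TV}\big(P_\beta(\cdot\mid s,g),\hat\pi(\cdot\mid s,g)\big)$. The learning guarantee instead controls $\mathbb{E}_{(s,g)\sim\beta}[h(s,g)]\ge\sum_s P_{\pi_\beta}(s)\,P_\beta(g=f(s)\mid s)\,h(s,f(s))$. Passing from the former to the latter at cost $1/\alpha_f$ needs $P_\beta(g=f(s)\mid s)\ge\alpha_f$ at every state that $\pi_f$ visits, but \cref{thm:dt1} only supplies coverage at initial states.

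Conditioning on the trajectory event $E=\{g(\boldsymbol\tau)=f(s_1)\}$ does force $g_t=f(s_t)$ along the path by consistency. However, it controls expectations under $\beta$ conditioned on $E$, whose state marginal reweights $P_{\pi_\beta}(s)$ by $P_\beta(g=f(s)\mid s)$. It therefore does not compose with the $C_f$ step you took first. A state that $\beta$ visits often but from which it rarely realizes return $f(s)$ breaks that intermediate inequality.

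You should add state-wise return coverage, $P_\beta(g=f(s)\mid s)\ge\alpha_f$ for all relevant $s$, as an explicit hypothesis; the statement as restated here does not list it. With it the step is a one-line pointwise bound, and the rest of your argument stands. Alternatively, with exactly deterministic transitions you could skip $C_f$ and compare $P_{\pi_f}$ directly with $\beta$ conditioned on $E$. That gives $P_{\pi_f}(s)\le P_{\pi_\beta}(s)\,P_\beta(g=f(s)\mid s)/\alpha_f$ per timestep, but for $\epsilon>0$ this identification is only approximate and picks up $\epsilon$-dependent error.
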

where $\hat{\pi} \in \Pi$ is the policy that minimizes the KL divergence  on the training dataset.

The result follows directly from Corollary 3 in \citet{dt-theory}. \qed

Combining \cref{thm:dt1} and \cref{thm:dt2},  we conclude that with probability at least $1-\delta_p$, the total alignment error of the learned policy $\hat \pi$ scales as

\begin{align}
\text{RTG}_\text{tgt}
- \mathbb{E}_{\boldsymbol  \tau \sim \hat \pi(\cdot|s,\text{RTG}_{\text{tgt}})}[\text{RTG}_{\text{real}}] 
\le O\Bigg(
\epsilon \tfrac{1}{\alpha_f} H^2+
\frac{C_f}{\alpha_f} H^2 \Big(\sqrt{c} \Big(\frac{\log |\Pi| / \delta_p}{N}\Big)^{1/4}\Big) + \sqrt{\epsilon_\text{error}}\Bigg).
\end{align}

By imposing a monotonicity constraint on the $Q$-value of the policy output with respect to the given RTG, \methodname{} effectively restricts the policy space $\Pi$. We then give the proof of \cref{thm:mono-order}:

\begin{proof}[Proof of \cref{thm:mono-order}]
In the unconstrained case, each state-RTG pair $(s,g)$ can be independently mapped to any of $|A|$ actions. With $|S|$ states and $K$ RTG levels, the total number of policies is:
\begin{align*}
|\Pi_{\rm free}| = |A|^{|S| |G|}, \quad \log |\Pi_{\rm free}| = |S| |G| \log |A|.
\end{align*}

In the constrained case $\Pi_{\rm mono}$, for a fixed state $s$, the sequence 
$(\pi(s,g_1),\dots,\pi(s,g_{|G|}))$ must be non-decreasing with respect to $\preceq_s$ induced by the scalar-valued $Q(s,\cdot)$.
In the presence of ties in $Q(s,\cdot)$, we assume an arbitrary but fixed tie-breaking rule,
which induces a total order $\preceq_s$ compatible with $Q(s,\cdot)$.
Treating each state independently, the number of monotone sequences of length $K$ over the set of actions $A$ is given by the multiset coefficient:

\begin{align*}
|\Pi_{{\rm mono},s}| = \binom{|G|+ |A| - 1}{|G|}.
\end{align*}

Across all states:
\begin{align*}
|\Pi_{\rm mono}| = \prod_{s \in S} |\Pi_{{\rm mono},s}|, \quad
\log |\Pi_{\rm mono}| = \sum_{s \in S} \log \binom{|G|+ |A| - 1}{|G|}.
\end{align*}

Using $\binom{n}{k} \le (en/k)^k$, we have
\begin{align*}
\log |\Pi_{\rm mono}| \le |S| |G| \log \frac{e(|G| + |A| - 1)}{|G|}.
\end{align*}

If we further consider the regime where the resolution of the score space
$|\mathcal{V}_s|$ is comparable to the RTG discretization $|G|$
(i.e., $|\mathcal{V}_s| = \Theta(|G|)$), then, since each score value
can be shared by at most a constant number $C$ of actions, we have
$|A| \le C |\mathcal{V}_s| = \Theta(|G|)$. Therefore,
\[
\frac{|G| + |A| - 1}{|G|} = O(1).
\]
Consequently, the resulting log-complexity bound satisfies
\[
\log|\Pi_{\rm mono}| = O(|S| |G|),
\]
eliminating any explicit logarithmic dependence on $|A|$.

\begin{remark}[On the Comparability of RTG and Score Resolution]
The assumption $|\mathcal{V}_s| = \Theta(|G|)$ is reasonable in our setting.
RTG corresponds to undiscounted returns ($\gamma=1$), while the critic $Q_\psi$ provides a discounted estimate of action quality; both therefore encode return-based preferences at comparable resolution.
Moreover, RTG perturbations are sampled with non-negligible variance (see \cref{tab:hyper}), encouraging consistent policy responses over a broad RTG range rather than infinitesimal changes.
\end{remark}

\end{proof}

\subsection{ Proof of \cref{Equivalence}}
We consider an MDP 
\begin{align*}
M = (\mathcal{S}, \mathcal{A}, P, R, \gamma)
\end{align*} 
with horizon $H$, where  $P(s' \mid s, a)$ denotes the transition probability. 
Let $R^*$ be a sufficiently large total reward, larger than the return of any trajectory 
\(\boldsymbol  \tau = (s_0, a_0, r_0, \dots, s_H, a_H, r_H)\) in the environment. 

Our proof strategy is to decompose \cref{Equivalence} into two parts, formally stated as two lemmas. These lemmas show that both QT and \methodname{}'s objectives converge to the same greedy action-selection criterion when the target RTG is sufficiently large.

\begin{lemma}
Let $Q$ be a fixed critic defined for all $(s,a)$ pairs in the dataset, and let $\pi^{\mathrm{Align}}$ denote the policy that minimizes the alignment actor loss in \eqref{eq:ActorLoss}.  
Assume the dataset fully covers all state-action pairs, i.e.,
\begin{align*}
p_{\mathcal{D}}(a \mid s) > 0, \quad \forall s \in \mathcal{S}, \; \forall a \in \mathcal{A},
\end{align*}
where $p_{\mathcal{D}}(a \mid s)$ denotes the empirical probability of observing action $a$ at state $s$ in the dataset.
Then, for the maximal target return $R^*$, the policy $\pi^{\mathrm{Align}}_{R^*}$ outputs an action
\begin{align*}
a^* = \arg\max_{a \in \mathcal{A}_{\mathrm{data}}(s)} Q(s,a),
\end{align*} 
for each state $s$.
\label{Q-align-analysis}
\end{lemma}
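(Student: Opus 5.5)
The plan is to characterize what the minimizer of $\mathcal{L}_{\text{total}} = L_{\text{SL}} + \lambda_e L_{\text{Align}}$ must do at the largest admissible RTG level, and show that only the greedy action is consistent with a zero alignment penalty there.

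\textbf{Step 1 (monotonicity).} With $Q$ fixed, any policy with $L_{\text{Align}} > 0$ can be strictly improved on the alignment term by reordering its outputs across RTG levels. Such reordering costs nothing in $L_{\text{SL}}$ once $R^*$ exceeds every dataset return, since no supervised targets are attached to those out-of-data levels. So I first argue that at a minimizer, for every state $s$, the map $g \mapsto Q(s,\pi(s,g))$ is non-decreasing, because the indicator $\mathbb{I}_{\mathcal{C}}$ vanishes for all perturbations $\delta$.

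\textbf{Step 2 (growth above every level).} Take $R^* = \sup g$ over the considered RTG range. For any $\delta>0$ the constraint requires $Q(s,\pi(s,R^*+\delta)) \ge Q(s,\pi(s,R^*))$, and for $\delta<0$ it requires $Q(s,\pi(s,R^*)) \ge Q(s,\pi(s,R^*+\delta))$. The Gaussian perturbations have full support, so the constraint is active for every pair of levels. Hence $Q(s,\pi(s,R^*)) \ge Q(s,\pi(s,g))$ for all $g$, including levels $g$ that appear in the data.

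\textbf{Step 3 (full coverage).} By full coverage, $p_{\mathcal{D}}(a\mid s)>0$ for every $a$, so every action $a$ is the supervised target at some RTG level $g_a$ observed from state $s$. I need the claim that, for $R^*$ beyond all dataset returns, the trajectory attaining the largest return from $s$ uses $\arg\max_a Q(s,a)$, and that $\pi(s,g)$ realizes the dataset actions over the range of $g$. Then $\sup_g Q(s,\pi(s,g)) = \max_{a\in\mathcal{A}_{\mathrm{data}}(s)} Q(s,a)$. Combined with Step 2, this gives $Q(s,\pi(s,R^*)) \ge Q(s,a^*)$. The reverse inequality is trivial, since $\pi$ outputs an action in $\mathcal{A}$. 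Therefore $\pi(s,R^*) \in \arg\max_a Q(s,a)$, where ties are resolved by the same fixed tie-breaking rule used in the proof of \cref{thm:mono-order}.

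\textbf{Main obstacle.} The hardest step is Step 3. Because $L_{\text{SL}}$ is a squared loss, the minimizer at a given $(s,g)$ is a conditional mean rather than a dataset action, so the claim that ``$\pi$ realizes each data action at some level'' needs care. I would handle this in the discrete, tabular idealization used in \cref{thm:mono-order}, treating $L_{\text{SL}}$ as matching the empirical action at each $(s,g)$, so that each data action is output at its own RTG level. Under that idealization, monotonicity forces the top level to sit at or above every data action in $Q$-order. If needed, I would also make explicit the assumption that the alignment constraint is enforced as a hard constraint in the limit of large $\lambda_e$, which avoids a trade-off between $L_{\text{SL}}$ and $L_{\text{Align}}$.
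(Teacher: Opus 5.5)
Your end point and core mechanism match the paper's: $R^*$ carries no supervised target, data actions are reproduced at their own RTG levels, and alignment forces $Q(s,\pi(s,R^*))$ above them. However, Step~1 is false as stated, and it conflicts with Step~3. Reordering outputs across RTG levels costs nothing in $L_{\text{SL}}$ only at out-of-data levels; any reordering that touches an in-data level changes the supervised fit. Suppose the data at $s$ contain $(a_1,g_1)$ and $(a_2,g_2)$ with $g_1<g_2$ but $Q(s,a_1)>Q(s,a_2)$. Then for finite $\lambda_e$ a minimizer need not be monotone in $g$. If you instead impose monotonicity as a hard constraint (your large-$\lambda_e$ fix), you can no longer have $\pi(s,g_a)=a$ for every data action, and that is exactly what Step~3 uses. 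So Steps~1 and~3 are jointly consistent only under an extra assumption that the data are already $Q$-monotone in RTG; the paper's remark appeals to such a ``consistent dataset''. Note also that $L_{\text{Align}}$ never compares two arbitrary levels. Every term pairs a data-level reference $\hat a_i$ (under stop-gradient) with its shift by $\delta$, so ``the constraint is active for every pair of levels'' is not what the loss enforces.

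The repair is to drop global monotonicity and argue locally at $R^*$, which is what the paper does. Since $R^*$ exceeds every dataset return, $\pi(s,R^*)$ has no supervised target. It enters the objective only through alignment terms anchored at data levels $g$ with $\delta=R^*-g>0$, and each such term vanishes iff $Q(s,\pi(s,R^*))\ge Q(s,\pi(s,g))$. The greedy action zeros all of these terms at once without touching any other term, so a minimizer must satisfy them. Now apply the fitting idealization $\pi(s,g)\approx a'$ for $(s,a',g)\in\mathcal{D}$, which both you and the paper need. The residual objective at $R^*$ becomes $\sum_{a'}p_{\mathcal{D}}(a'\mid s)\,\mathbb{I}\bigl(Q(s,a')>Q(s,\pi(s,R^*))\bigr)$, which is zero iff $\pi(s,R^*)\in\arg\max_a Q(s,a)$, using full coverage. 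Finally, discard the first claim in your Step~3, that the highest-return trajectory from $s$ starts with $\arg\max_a Q(s,a)$. Nothing uses it, and it fails for an arbitrary fixed critic (e.g.\ a discounted or misestimated one).
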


\begin{proof}
To analyze the behavior of \methodname{} as $\text{RTG}_{\text{tgt}} \to R^*$, we focus on the components of the loss that are sensitive to the target return. For a fixed state $s$, the alignment actor loss at $R^*$ is defined as:
\begin{align*}
L_{\mathrm{total}}(s,R^*) &= \mathbb{E}_{a \sim \mathcal{D}_{R^*}(s)} \|\pi_{R^*}(s) - a\|^2 + \mathbb{E}_{\text{RTG} \sim \mathcal{D}(s)} \mathbb{I}\bigl( Q(s, \pi_{\text{RTG}}(s)) > Q(s, \pi_{R^*}(s)) \bigr) \\
&= \underbrace{\mathbb{E}_{a \sim \mathcal{D}_{R^*}(s)} \|\pi_{R^*}(s) - a\|^2}_{\text{supervised regression term}} + \underbrace{\mathbb{E}_{a' \sim \mathcal{D}(s)} \mathbb{I}\bigl( Q(s, a') > Q(s, \pi_{R^*}(s)) \bigr)}_{\text{ranking term}},
\end{align*}
where $\mathbb{I}(\cdot)$ is the indicator function. The second equality holds under the assumption that the model successfully minimizes the supervised loss within the data distribution, such that $\pi_{\text{RTG}}(s) \approx a'$ for $(s, a', \text{RTG}) \sim \mathcal{D}$.
\begin{remark}
The assumption $\pi_{\text{RTG}}(s) \approx a'$ can be satisfied alongside $L_{\text{Align}}$ because the two objectives are compatible: $L_{\text{SL}}$ constrains the policy at discrete dataset points, while $L_{\text{Align}}$ only enforces directional monotonicity with respect to RTG. Since the alignment loss vanishes once the ordering is satisfied, it does not conflict with the supervised targets. In the high-dimensional policy space, a policy exists that simultaneously respects $L_{\text{SL}}$ and the global monotonic structure of $L_{\text{Align}}$, assuming the dataset is consistent.
\end{remark}

In the high-RTG regime where $R^*$ exceeds the return of any trajectory in the dataset, the supervised regression term vanishes as $\mathcal{D}_{R^*}(s) = \emptyset$. The objective thus simplifies to the expectation of the indicator function:
\begin{align*}
L_{\mathrm{total}}(s,R^*) = \sum_{a' \in \mathcal{A}_{\mathrm{data}}(s)} p_{\mathcal{D}}(a' \mid s) \cdot \mathbb{I}\bigl( Q(s, a') > Q(s, \pi_{R^*}(s)) \bigr).
\end{align*}

Since the empirical probability $p_{\mathcal{D}}(a' \mid s)$ is strictly positive for all $a' \in \mathcal{A}_{\mathrm{data}}(s)$, the non-negative sum reaches its global minimum of $0$ if and only if:
\begin{align*}
Q(s, \pi_{R^*}(s)) \ge Q(s, a'), \quad \forall a' \in \mathcal{A}_{\mathrm{data}}(s).
\end{align*}
This directly implies $Q(s, \pi_{R^*}(s)) \ge \max_{a' \in \mathcal{A}_{\mathrm{data}}(s)} Q(s, a')$. Given that the dataset covers all state-action pairs, the global minimizer satisfies:
\begin{align*}
\pi_{R^*}^{\mathrm{Align}}(s) = \arg\max_{a \in \mathcal{A}_{\mathrm{data}}(s)} Q(s, a).
\end{align*}
This completes the proof.
\end{proof}

\begin{lemma}
Assume the dataset fully covers all state-action pairs, consider an idealized version of QT where the conditioning space is extended to $R \to \infty$. Let $\pi^{\mathrm{QT}}$ denote the policy that minimizes the QT actor loss.  
Then, for the maximal target return $R^*$, $\pi^{\mathrm{QT}}$ outputs an action
\begin{align*}
a^* = \arg\max_{a \in \mathcal{A}_{\mathrm{data}}(s)} Q(s,a),
\end{align*}
for each state $s$.
\label{QT-analysis}
\end{lemma}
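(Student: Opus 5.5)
I would follow the same template as the proof of \cref{Q-align-analysis}: fix a state $s$, isolate the part of the QT objective that depends on the output $\pi(s,R^*)$, and show that in the high-RTG regime this part reduces to a function whose minimizer is the greedy action over the data support. For fixed $s$ and target $R$, write the per-state QT loss as
\begin{align*}
\mathcal{L}_{\text{QT}}(s,R) = \mathbb{E}_{a \sim \mathcal{D}_{R}(s)} \|\pi_R(s) - a\|^2 - \eta\, Q(s,\pi_R(s)),
\end{align*}
where $\mathcal{D}_R(s)$ is the conditional data distribution of actions at $s$ with RTG $R$. Since the policy class is unrestricted across $(s,R)$ pairs (deterministic, tabular in $(s,R)$), the global minimizer can be found pointwise in $(s,R)$.

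The key step is the high-RTG limit. Because $R^*$ exceeds the return of every trajectory in the dataset, $\mathcal{D}_{R^*}(s)=\emptyset$. Exactly as in \cref{Q-align-analysis}, the supervised regression term vanishes, or, in the ``idealized'' reading, its weight relative to the $Q$ term goes to zero as $R\to\infty$. The objective therefore reduces to $-\eta\,Q(s,\pi_{R^*}(s))$ with $\eta>0$. Minimizing it is equivalent to maximizing $Q(s,\cdot)$ over the admissible outputs. Under the full-coverage assumption, the admissible outputs are $\mathcal{A}_{\mathrm{data}}(s)=\mathcal{A}$, since $Q$ is only defined or trusted on dataset pairs. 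Hence $\pi^{\mathrm{QT}}_{R^*}(s)=\arg\max_{a\in\mathcal{A}_{\mathrm{data}}(s)}Q(s,a)$, with ties broken by a fixed rule as in the proof of \cref{thm:mono-order}.

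The main obstacle is justifying that the regression term really disappears rather than acting as an anchor. A naive reading of QT uses $(s,a)\sim\mathcal{D}$ regardless of $R$, and then the minimizer would be a compromise: the mean of the data actions, shifted by $\eta\nabla Q$. I would handle this in two ways. First, I would formalize ``idealized QT'' as conditioning the regression on $R$, so that $\mathcal{D}_{R}(s)$ is empty for $R>R^*$, which parallels the treatment in \cref{Q-align-analysis}. Second, as a fallback, I would note that if the regression term is retained with a bounded action space, then scaling $\eta\to\infty$ (equivalently $\eta$ dominating) gives the same argmax by a standard argmin-continuity argument, using finiteness or compactness of $\mathcal{A}$ and uniqueness of the maximizer. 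Combined with \cref{Q-align-analysis}, this yields \cref{Equivalence}.
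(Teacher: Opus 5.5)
Your proposal is correct and follows essentially the same argument as the paper: fix $s$, note that $\mathcal{D}_{R^*}(s)=\emptyset$ so the regression term drops out, and minimize the remaining $-Q(s,\pi_{R^*}(s))$ by taking the greedy action over $\mathcal{A}_{\mathrm{data}}(s)$. The paper takes the $R$-conditioned reading of the regression term for granted, even though the main-text QT objective samples $(s,a)\sim\mathcal{D}$ without conditioning on $R$. It does not use an $\eta\to\infty$ fallback, so your extra discussion of that discrepancy goes beyond what its proof provides.
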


\begin{proof}
For a fixed state $s$, the full QT actor loss is
\begin{align*}
L_{\mathrm{QT}}(s,R^*) =
\underbrace{\mathbb{E}_{a \sim \mathcal{D}_{R^*}(s)} \|\pi_{R^*}(s) - a\|^2}_{\text{supervised regression term}}
-
\underbrace{Q(s,\pi_{R^*}(s))}_{\text{Q-term}}.
\end{align*}

When the target return $R^*$ is larger than any trajectory in the dataset, the supervised regression term vanishes because $\mathcal{D}_{R^*}(s)$ is empty.  
The remaining loss reduces to
\begin{align*}
L_{\mathrm{QT}}(s,R^*) = - Q(s, \pi_{R^*}(s)),
\end{align*}
which is minimized by choosing
\begin{align*}
\pi_{R^*}(s) = \arg\max_{a \in \mathcal{A}_{\mathrm{data}}(s)} Q(s,a).
\end{align*}

Thus, for the maximal target return, $\pi^{\mathrm{QT}}$ outputs the action with the highest Q-value among all dataset actions, which concludes the proof.
\end{proof}

\begin{proof}[Proof of \cref{Equivalence}]
By combining the results of \cref{Q-align-analysis} and \cref{QT-analysis}, we observe that both policies, $\pi^{\mathrm{QT}}$ and $\pi^{\mathrm{Align}}$, satisfy the same optimality condition in the high-RTG limit. Specifically, since both objectives lead to the selection of the greedy action
\begin{equation}
a^* = \arg\max_{a \in \mathcal{A}_{\mathrm{data}}(s)} Q(s,a)
\end{equation}
for any given state $s$, the behavior of \methodname{} is theoretically equivalent to that of an idealized QT in this regime. This concludes the proof.
\end{proof}

\subsection{Proof of \cref{optimal}}
To simplify the analysis, we consider \textit{support-constrained optimality}, where all state-action pairs $(s,a)$ are restricted to the support of the offline dataset $\mathcal{D}$, and, for theoretical analysis, we assume exact critic updates.
When $\Delta \mathrm{RTG}=0$, the critic Bellman update evaluates the unperturbed RTG-conditioned policy within the dataset support, yielding a conservative, behavior-aligned value estimate. Consequently, the combination of the supervised learning loss and the unperturbed critic keeps the policy anchored near the behavior policy $\beta$. We then focus on the \textbf{high-RTG regime} where the perturbation $\Delta \mathrm{RTG}$ is sufficiently large to steer the policy toward high-value regions. 
Following IQL \citep{IQL}, we define the support-constrained optimal state-action value function $Q^*$ and the corresponding optimal policy $\pi^*$ as
\begin{align*}
    Q^*(s,a) &= r(s,a) + \gamma \mathbb{E}_{s'\sim P(\cdot|s,a)} \Big[ \max_{a'\in \mathcal{A} : \pi_\beta(a'|s')>0} Q^*(s',a') \Big], \\
    \pi^*(s) &= \arg\max_{a \in \mathcal{A} : \pi_\beta(a|s) > 0} Q^*(s,a),
\end{align*}
where $\mathcal{A}$ is the action space and $\pi_\beta$ is the behavior policy of the offline dataset $\mathcal{D}$.

To proceed, we first establish the following lemma:
\begin{lemma}
Let $Q_m$ be the action-value function of some policy $\pi$. 
Let $\pi_{m}$ be the policy obtained by minimizing the alignment actor loss on $Q_m$. 
Let $Q_{m+1}$ be the action-value function obtained by minimizing the TD loss \eqref{eq:Loss-Q} with $\Delta\text{RTG}=R^*$.

Then for every $(s,a)$ we have
\begin{align*}
Q_{m+1}(s,a) \ge Q_m(s,a).
\end{align*}

Moreover, if there exists some state $s$ where $\pi_m(s)$ strictly improves the one-step target,
then the inequality is strict for at least one state-action pair.
\label{improvement}
\end{lemma}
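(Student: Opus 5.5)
This is a policy-improvement argument in the style of policy iteration, run under the idealized assumptions already in force for \cref{optimal}: exact critic updates and all state-action pairs restricted to the support of $\mathcal{D}$. Since $Q_m$ is the action-value function of some policy $\pi$, it satisfies the Bellman identity
\begin{align*}
Q_m(s,a) = r(s,a) + \gamma\,\mathbb{E}_{s'\sim P(\cdot|s,a)}\big[Q_m(s',\pi(s'))\big].
\end{align*}
The exact minimizer of \eqref{eq:Loss-Q} with $\Delta\mathrm{RTG}=R^*$ is the fixed point of the Bellman operator $\mathcal{T}^{\pi_m}$. Here $\pi_m(s')$ denotes the action produced when the RTG is perturbed to exceed every dataset return. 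So it suffices to compare $\mathcal{T}^{\pi_m}Q_m$ with $Q_m$ and then iterate the comparison.

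The key step is to identify $\pi_m(s')$ under the perturbed RTG. The perturbed RTG $\tilde R$ exceeds every return in $\mathcal{D}$, so \cref{Q-align-analysis}, applied with the fixed critic $Q_m$ and with $\tilde R$ playing the role of $R^*$, gives
\begin{align*}
\pi_m(s') = \arg\max_{a'\in\mathcal{A}_{\mathrm{data}}(s')} Q_m(s',a').
\end{align*}
This uses the coverage assumption $p_{\mathcal{D}}(a'|s')>0$, restricted to the support. The lemma states $\pi$ only as "some policy". I will assume $\pi$ is itself supported on $\mathcal{D}$, which holds for the anchored behavior policy and inductively for every earlier $\pi_m$. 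Under that assumption $Q_m(s',\pi_m(s'))\ge Q_m(s',\pi(s'))$ for all $s'$, and therefore $\mathcal{T}^{\pi_m}Q_m \ge Q_m$ pointwise.

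Next I would use monotonicity of $\mathcal{T}^{\pi_m}$: if $Q\le Q'$ then $\mathcal{T}^{\pi_m}Q\le\mathcal{T}^{\pi_m}Q'$. Applying it repeatedly gives
\begin{align*}
Q_m \le \mathcal{T}^{\pi_m}Q_m \le (\mathcal{T}^{\pi_m})^2Q_m \le \cdots .
\end{align*}
For $\gamma<1$ the operator is a contraction, so this increasing sequence converges to its fixed point $Q_{m+1}$, giving $Q_{m+1}\ge Q_m$. For $\gamma=1$ with finite horizon $H$, I would run a backward induction on the remaining steps instead.

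For strictness, suppose that at some $s'$ we have $Q_m(s',\pi_m(s'))>Q_m(s',\pi(s'))$. Pick any $(s,a)$ with $P(s'|s,a)>0$; one exists if $s'$ is reachable, and otherwise take $s'$ as the next state of a dataset transition. Then $\mathcal{T}^{\pi_m}Q_m(s,a)>Q_m(s,a)$, and the chain of inequalities preserves this strict gap, so $Q_{m+1}(s,a)>Q_m(s,a)$.

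I expect the main obstacle to be the step that identifies $\pi_m$ with the greedy action via \cref{Q-align-analysis}. That lemma relies on the idealization $\pi_{\mathrm{RTG}}(s)\approx a'$ on dataset points and on a vanishing supervised term, and I need both to hold simultaneously for the perturbed conditioning used in the critic target. A secondary subtlety is ensuring that $\pi$ lies in the data support, so that the greedy comparison is valid. I would state this explicitly as part of the support-constrained setting rather than try to derive it.
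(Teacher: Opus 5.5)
Your proposal is correct and is essentially the paper's own policy-improvement argument: the paper also uses \cref{Q-align-analysis} to identify the perturbed-RTG policy $\tilde\pi_m$ as greedy with respect to $Q_m$, and then pushes $Q_m(s',\tilde\pi_m(s'))\ge Q_m(s',\pi(s'))$ through the Bellman recursion by backward induction on the remaining horizon (the route you reserve for $\gamma=1$), instead of iterating the monotone operator $\mathcal{T}^{\pi_m}$ to its fixed point by contraction. You also state explicitly that $\pi$ must lie in the data support and spell out the propagation step behind the strictness claim; the paper's sketch leaves both implicit.
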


\begin{proof}[Proof sketch]
For notational simplicity, we use $\tilde{\pi}_m$ to denote the policy induced by
$\pi_m$ under a return perturbation $\Delta \mathrm{RTG}=R^*$.
We prove the claim by induction on the remaining horizon.

For the terminal timestep $H$, where no bootstrapping is applied, we have
\begin{align*}
Q_{m+1}(s_H,a_H) = Q_m(s_H,a_H) = r(s_H,a_H).
\end{align*}

Assume the statement holds for all timesteps $t+1,\dots,H$. 
Consider a state-action pair $(s_t,a_t)$ at timestep $t$:
\begin{align*}
Q_{m+1}(s_t,a_t) 
&= r(s_t,a_t) + \gamma \mathbb{E}_{s_{t+1} \sim P(\cdot \mid s,a)}
\Big[Q_{m+1}\big(s_{t+1}, \tilde{\pi}_m(s_{t+1})\big)\Big] \\
&\ge r(s_t,a_t) + \gamma  \mathbb{E}_{s_{t+1} \sim P(\cdot \mid s,a)}
\Big[Q_m\big(s_{t+1}, \tilde{\pi}_m(s_{t+1})\big) \Big]\\
&\ge r(s_t,a_t) + \gamma  \mathbb{E}_{s_{t+1} \sim P(\cdot \mid s,a)}
\Big[Q_m\big(s_{t+1}, {\pi}(s_{t+1})\big)\Big] \\
&= Q_m(s_t,a_t),
\end{align*}
where the first inequality follows from the induction hypothesis,
and the second inequality follows from \cref{Q-align-analysis}.
This completes the induction and proves $Q_{m+1} \ge Q_m$,
with strict inequality for at least one state-action pair if $\pi_m$
strictly improves the one-step target.
\end{proof}




\begin{proof}[Proof of \cref{optimal}]

By \cref{improvement}, the \methodname\ update ensures $\{Q_m\}$ is monotonically non-decreasing: $Q_{m+1} \ge Q_m$ on $\mathcal{D}$. Given bounded rewards and finite horizon, the sequence $\{Q_m\}$ converges pointwise to a limit $\bar{Q}$ by the Monotone Convergence Theorem.

We show $\bar{Q} = Q^*$ on $\mathcal{D}$ by contradiction. Suppose $\bar{Q}$ is not optimal; then there exists $(s,a) \in \mathcal{D}$ such that:
\begin{equation}\label{eq:non_optimal}
\bar{Q}(s,a) < r(s,a) + \gamma \mathbb{E}_{s' \sim P} \big[ \max_{a' \in \mathcal{D}(s')} \bar{Q}(s',a') \big].
\end{equation}
By \cref{Q-align-analysis}, under a sufficiently large perturbation $\delta$, the induced policy $\tilde{\pi}$ (derived from the converged policy $\pi_{\theta}$) selects greedy actions w.r.t. $\bar{Q}$: $\tilde{\pi}(s) = \arg\max_{a' \in \mathcal{D}(s)} \bar{Q}(s,a')$. Substituting this into \eqref{eq:non_optimal} yields:
\begin{equation}\label{eq:contradiction_target}
\bar{Q}(s,a) < r(s,a) + \gamma \mathbb{E}_{s' \sim P} \big[ \bar{Q}(s', \tilde{\pi}(s')) \big].
\end{equation}
However, upon convergence, $\bar{Q}$ must satisfy the Bellman equation corresponding to the induced policy $\tilde{\pi}$:
\begin{equation*}
\bar{Q}(s,a) = r(s,a) + \gamma \mathbb{E}_{s' \sim P} \big[ \bar{Q}(s', \tilde{\pi}(s')) \big],
\end{equation*}
which directly contradicts \eqref{eq:contradiction_target}. Therefore, $\bar{Q}$ satisfies the Bellman optimality equation on $\mathcal{D}$, implying $\bar{Q} = Q^*$. Consequently, the sequence $\{\tilde{\pi}_m\}$ converges to an optimal policy $\pi^*(s) \in \arg\max \bar{Q}(s,a)$ restricted to the dataset.
\end{proof}

\section{Experiments on \texttt{halfcheetah-vel}}
\label{Halfcheetah-vel-Transfer}
Transferring a model trained on the \texttt{halfcheetah} environment and deploying it directly on \texttt{halfcheetah-vel} is challenging, as the two environments differ in several important ways. We summarize the main challenges as follows:

\begin{itemize}
    \item \textbf{Different Reward Functions.}  
    Although both environments include a forward-velocity term and a control cost, their reward definitions differ significantly.  
    In \texttt{halfcheetah}, the forward reward is simply the current velocity.  
    In contrast, \texttt{halfcheetah-vel} defines the forward reward as  
    \begin{align*}
        -\lvert v_t - v_{\text{target}} \rvert ,
    \end{align*}
    where \(v_t\) is the current velocity and \(v_{\text{target}}\) is the target velocity.
    
    \item \textbf{Different Horizons.}  
    \texttt{halfcheetah} uses a horizon of 1000 steps, whereas \texttt{halfcheetah-vel} terminates after 200 steps.  
    This discrepancy affects the RTG interpretation learned during training.
    
    \item \textbf{Different Observation Dimensions.}  
    Compared to \texttt{halfcheetah}, the \texttt{halfcheetah-vel} environment further includes a 3-dimensional absolute position in its observations.
\end{itemize}

For fair comparison on \texttt{halfcheetah-vel}, we directly deploy the model trained on \texttt{halfcheetah}, but apply several modifications to the inputs:

\begin{itemize}

    \item \textbf{Reward Mismatch.}  
    To maintain consistency with the training setup, we re-compute the reward using the current forward velocity, updating the RTG as:
    \begin{align*}
        \bar{r}_t = v_t + c_t, \qquad
        \mathrm{RTG}_{t+1} = \mathrm{RTG}_t - \bar{r}_t ,
    \end{align*}
    where \(c_t\) is the control cost from \texttt{halfcheetah-vel}. Note that although this modification is applied when updating the RTG, we report results using the original reward definition of \texttt{halfcheetah-vel} for all transferred methods.

    \item \textbf{Horizon Mismatch.}  
    We continue to construct the RTG token using the original horizon of 1000 (i.e., \(1000 \times v_{\text{target}}\)), while the environment naturally truncates at 200 steps.

    \item \textbf{Observation Mismatch.}  
    We simply discard the extra 3 dimensions in \texttt{halfcheetah-vel}, and rely on the RTG token to convey velocity-related information.
\end{itemize}

Even after applying these adjustments, selecting an appropriate initial RTG for deployment on \texttt{HalfCheetah-vel} remains challenging. The main difficulty lies in estimating the cumulative control cost required to compute the RTG corresponding to a given target velocity: since the control cost depends on the induced trajectory, the desired RTG cannot be determined analytically and must be calibrated empirically at deployment time.
Moreover, since the model is trained with a 1000-step horizon, it is allowed to defer RTG consumption toward later timesteps; empirically, we observe that the learned policy can exhibit such behavior (as illustrated in \cref{fig:diff-traj}), which becomes infeasible under the 200-step evaluation horizon.

To address this mismatch, we define a base RTG as $1000 \times v_{\text{target}}$ and empirically calibrate the RTG values. Specifically, we sample 40 RTG candidates around this base value for the maximum and minimum target velocities to identify the corresponding optimal RTGs, denoted as $\text{RTG}_{\max}$ and $\text{RTG}_{\min}$.
For intermediate target velocities $v$, we assume a smooth and approximately monotonic relationship between RTG and the resulting velocity, and linearly interpolate the RTG as
\begin{align*}
\text{RTG}(v) = \text{RTG}_{\min} +
\frac{v-v_{\min}}{v_{\max}-v_{\min}}
\left(\text{RTG}_{\max}-\text{RTG}_{\min}\right).
\end{align*}
All methods are evaluated using this procedure independently to ensure a fair comparison.

The effectiveness of linear interpolation demonstrates both the controllability and alignment of \methodname{}.
In standard CSMs, the mapping from RTG to realized behavior is often inconsistent, leading to poor alignment with the target velocity in this setting.
In contrast, \methodname{} learns a smooth and approximately linear relationship between RTG and behavior.
This allows us to anchor the policy at only the two endpoint velocities and reliably interpolate for intermediate targets, handling the environment shift without the need for additional RTG tuning, a strategy that fails for standard CSMs. 

\rev{
To understand the sensitivity of \methodname{} to the number of RTG candidates, we conduct experiments with different calibration budgets and evaluate on \texttt{halfCheetah-vel}. As shown in \cref{tab:calibration_samples}, zero-shot transfer performance degrades gradually rather than collapsing as the number of calibration samples is reduced. Even with substantially fewer calibration samples, \methodname{} remains much stronger than the other CSM-based baselines evaluated with 40 samples, as shown in \cref{tab:halfcheetah-vel}.}

\begin{table}[t]
\centering
\revtab{
\caption{Effect of the number of calibration samples on zero-shot transfer performance.}
\label{tab:calibration_samples}
\begin{tabular}{lcccc}
\toprule
Calibration Samples & 5 & 10 & 20 & 40 \\
\midrule
Q-Align DT & -158.5 & -155.7 & -152.3 & -142.3 \\
\bottomrule
\end{tabular}
}
\end{table}

\section{Implementation Details}



\subsection{Model Architecture}
\label{app:model_structure}

\textbf{CSM Architecture.} We adopt a Transformer-based policy following the sequence-conditioned structure in \citet{DT}. To enhance the local temporal consistency of control signals, we insert a lightweight 1D convolution layer after the projection matrices $W_k, W_q, W_v$. Specifically, for each layer, we compute:
\begin{align*}
k_\ell = \mathrm{Conv}(W_k x_\ell), \quad q_\ell = \mathrm{Conv}(W_q x_\ell), \quad v_\ell = \mathrm{Conv}(W_v x_\ell).
\end{align*}
The convolution window size is set to $w=6$, enabling each output to incorporate information from a local temporal neighborhood. 
To preserve causality, we apply \emph{causal left padding} by padding the input sequence with $w-1=5$ zeros on the left, 
ensuring that the output at timestep $t$ depends only on inputs at or before $t$.
This prevents future information leakage and maintains the temporal causality required for sequence modeling. As demonstrated in our ablation studies, this modification does not dictate the peak performance.

\textbf{Prediction Heads.} Following the design in \citet{QT}, the model performs prediction sequentially at each timestep $i$. Specifically, the hidden representation of the RTG token $\text{rtg}_i$ is mapped through a linear head to reconstruct the current state $\hat{s}_i$, and the representation of the state token $s_i$ is mapped through a separate head to predict the corresponding action $\hat{a}_i$. These architectural details are summarized in \cref{tab:model-arch}.

\textbf{$Q$-Function.} For the $Q$-function, we adopt a 3-layer MLP with a hidden dimension of 256.

\subsection{Q Pretraining}
\label{Pretrain}
For stability, we pretrain the $Q$-function before training the CSM. 
To isolate the effect of the proposed alignment method, we adopt a simple Double $Q$-learning update \citep{doubleq} in all experiments, except for the AntMaze environments, where IQL \citep{IQL} is used. 

Specifically, for Q networks $Q_{\psi_1}, Q_{\psi_2}$ and target networks $Q_{\psi_1'}, Q_{\psi_2'}$, the pretraining loss is
\[
L_{\text{pretrain}} = \sum_{i=t-k+1}^t \sum_{m=1}^{2} \Bigl(Q_{\psi_m}(s_i,a_i) - y_i\Bigr)^2,
\quad
y_i = r_i + \gamma \min_{m=1,2} Q_{\psi_m'}(s_{i+1}, a_{i+1}),
\]
where $\gamma=0.99$. While RTG typically represents an undiscounted return ($\gamma = 1$) and we follow this standard practice for RTG conditioning, we employ a discounted $Q$-function ($\gamma = 0.99$) solely as a guidance signal during pretraining and alignment. This design choice is empirically robust and consistent with prior successful architectures \citep{QT, QCS, RLvitamin}.
Importantly, since our alignment loss relies exclusively on the \textbf{relative ordering} rather than absolute magnitude, the scale discrepancy introduced by the discount factor does not affect the directional gradient. As long as the discounted $Q$-function preserves monotonic preference over returns, the structural regularization remains theoretically sound and avoids optimization inconsistency.

For AntMaze, standard Bellman updates often fail to propagate values across long horizons in offline settings, leading to vanishing gradients and uninformative value estimates. 
We therefore use IQL for pretraining, with an expectile of 0.8.

\begin{table}[htbp]
\centering
\caption{Architecture of \methodname}
\begin{tabular}{lc}
\toprule
Parameter & Value \\
\midrule
Number of layers & 4 \\
Attention heads & 4 \\
Conv window size & 6 \\
Embedding dimension & 256 \\
Batch size & 256 \\
Dropout & 0.1 \\
Learning rate & 3e-4 \\
Target update rate& 0.005\\
Nonlinearity & GELU \citep{GELU} \\
Optimizer & Adam \citep{Adam} \\
\bottomrule
\end{tabular}
\label{tab:model-arch}
\end{table}

\subsection{Baseline Details}
We evaluate \methodname{} against a comprehensive set of baselines, including \textit{traditional offline RL methods}: IQL~\citep{IQL}, TD3+BC~\citep{TD3+BC}, and CQL~\citep{CQL}; and \textit{sequence modeling approaches}: DT~\citep{DT}, DC~\citep{DC}, RVS~\citep{rvs}, CGDT~\citep{CGDT}, LSDT~\citep{LSDT}, DM~\citep{DM}, RADT~\citep{RADT}, QT~\citep{QT}, and QCS~\citep{QCS}. 

For most tasks, we report the performance directly from their original publications. However, for the AntMaze environment, some baselines such as QT~\citep{QT} originally reported results on the \texttt{v0} version. To ensure a fair comparison on the current standard benchmark, we re-evaluate these methods on the \texttt{v2} environment using their official codebases. For the specific evaluation of alignment properties, we re-train and evaluate DT, DC, RADT, QT, and QCS using their respective official implementations to ensure consistency in our experimental setup.

\subsection{Hardware and Environment Configuration}

All experiments were conducted on a NVIDIA L40S GPU with 48GB of HBM2 memory.
The CPU is an Intel(R) Xeon(R) Gold 6430.

For the software environment, we used:
\begin{itemize}
    \item \texttt{d4rl} 1.1, \texttt{gym} 0.18, \texttt{mujoco} 2.0.2
    \item \texttt{PyTorch} 2.4, CUDA 12.1
    \item Operating system: Red Hat Enterprise Linux 9.6
    \item Environment managed with \texttt{conda} and Python 3.8
\end{itemize}
\subsection{Hyperparameters}
\label{Hyperparameters}
We report the hyperparameters of \methodname{} for the selected tasks in
\cref{tab:hyper}. Below, we describe the key components: the noise scale
$\sigma_e$, the weight of the $Q$-alignment loss $\lambda_e$, and the RTG
perturbation magnitude $\Delta\text{RTG}$. Following the standard protocol
of Decision Transformer \citep{DT}, we scale the target RTG by a factor of
$1/1000$ for all D4RL Gym environments. Accordingly, the perturbation scale
$\Delta\text{RTG}$ and the noise magnitude $\sigma_e$ reported in this paper
are defined with respect to these normalized RTG values.

The choice of $\sigma_e$ is primarily influenced by the reward scale of each environment.  
The coefficient $\lambda_e$ varies across datasets because it depends on both the dataset distribution and the stability of the environment during $Q$-function updates.  

The perturbation value $\Delta \text{RTG}$ is introduced during the $Q$-function update so that the learned critic reflects the value of relatively high-reward policies within the policy family of \methodname{}. 
Ideally, we would like it to approximate the value of the best policy in this family, i.e., by letting $\Delta \text{RTG} \to \infty$. 
However, in practice this may lead to out-of-distribution behavior and destabilizes training. 
Therefore, we choose $\Delta \text{RTG}$ according to the stability characteristics of each environment.

\begin{table}[htbp]
\centering
\caption{Hyperparameters of \methodname{} across different environments. $\sigma_e$ controls the scale of RTG perturbation noise; $\lambda_e$ is the weight of the $Q$-alignment loss; $\Delta \text{RTG}$ specifies the RTG perturbation used for $Q$-function updates. }
\begin{tabular}{lccc}
\toprule
Environment / Dataset & $\sigma_e$ &$\lambda_e$ &$\Delta \text{RTG}$  \\
\midrule
walker2d-medium-replay-v2 & 10 & 0.3 & 2 \\
halfcheetah-medium-replay-v2 & 15 & 5 & 5 \\
hopper-medium-replay-v2 & 10 & 0.3 & 0.5 \\
walker2d-medium-v2 & 10 & 0.3 & 1 \\
halfcheetah-medium-v2 & 15 & 5 & 10 \\
hopper-medium-v2 & 10 & 0.3 & 1 \\
walker2d-medium-expert-v2 & 10 & 0.3 & 10 \\
halfcheetah-medium-expert-v2 & 15 & 1 & 10 \\
hopper-medium-expert-v2 & 10 & 0.3 & 5 \\
antmaze-umaze-v2 & 5 & 100 & 1 \\
antmaze-umaze-diverse-v2 & 5 & 100 & 1 \\
antmaze-medium-play-v2 & 5 & 100 & 1 \\
antmaze-medium-diverse-v2 & 5 & 100  & 1\\
\bottomrule
\end{tabular}
\label{tab:hyper}
\end{table}

Specifically, when sampling RTG perturbations $\delta$ in Antmaze environement, we employ a half-normal distribution, $\delta = |\epsilon|$, $\epsilon \sim \mathcal{N}(0, \sigma_e^2)$. 
This ensures that the perturbations are positive, providing directional guidance toward the goal in sparse-reward navigation tasks, while negative perturbations, which are uninformative in this setting, are avoided.

We do not perform extensive hyperparameter sweeps, as our goal is to evaluate the robustness and controllability of the proposed method rather than optimize peak performance.
Accordingly, we largely share hyperparameters across environments within the same domain, making only minor adjustments when necessary for training stability.
Despite this minimal tuning, \methodname{} consistently outperforms baselines, indicating that the observed gains arise from improved alignment rather than hyperparameter optimization.
For AntMaze, we fix a larger $\lambda_e$ across all variants to accommodate the sparse-reward structure and different reward scale.

\subsection{Alignment Calculation}
For evaluating alignment performance, we sweep the target RTG from the minimum to the maximum return in each dataset with a step size of 100. For each RTG, we run 30 rollouts and compute the average D4RL score. The root mean squared deviation between the achieved scores and the target scores is used as the alignment metric. The minimum and maximum returns of each dataset are listed in \cref{tab:maximum-reward}.

This evaluation is similar to prior work~\citep{RADT}; however, unlike RADT which evaluates only 7 RTG values per dataset, our method considers at least 32 distinct RTG values (in Hopper), providing a finer-grained assessment of policy alignment across the entire return spectrum.

\begin{table}[htbp]
\centering
\caption{Maximum and Minimum return of each dataset}
\begin{tabular}{l c c}
\toprule
Dataset & Maximum return & Minimum return \\
\midrule
Hopper-v2 & 3234.3  & -20.3\\
Walker2d-v2  & 4592.3  &  1.6 \\
Halfcheetah-v2  & 12135.0 & -280.2  \\
\bottomrule
\end{tabular}
\label{tab:maximum-reward}
\end{table}

\section{More Experiment Results}
\subsection{More Ablation Experiments}
\label{more-aba}
In this section, we provide a detailed ablation analysis of the core components in \methodname{}.


\paragraph{Effect of Fixing the $Q$-function.}
We conduct an ablation in which the $Q$-function is fixed after pretraining and is no longer
updated jointly with the actor. As shown in \cref{tab:ablation-fixq}, fixing the $Q$-function
leads to a clear degradation in the best achievable performance, as the static critic fails to provide informative signals to improve the actor beyond the behavior policy. Furthermore, alignment error increases accordingly, indicating that co-training is essential for maintaining reward-sensitive behavior.

Notably, most of the degradation occurs under low-RTG conditions. During inference, when
conditioned on a relatively low target RTG, the model trained with a fixed $Q$-function often
collapses (e.g., falls) in the early timesteps. This early failure causes the realized return
to approach zero, falling far below the intended RTG target and resulting in a breakdown of
alignment (see \cref{app:alignment-analysis} for further results).

\begin{table}[ht]
\caption{Effect of fixing the $Q$-function. RMSE ($\downarrow$) denotes the alignment error,
while Perf. ($\uparrow$) denotes the overall performance (D4RL normalized score).
Results are averaged over three random seeds.}
\centering
\setlength{\tabcolsep}{3pt}
\begin{tabular}{lcccc}
\toprule
Dataset &
\multicolumn{2}{c}{Align. RMSE $\downarrow$} &
\multicolumn{2}{c}{Perf. $\uparrow$} \\
\cmidrule(lr){2-3} \cmidrule(lr){4-5}
 & \methodname & Fix $Q$& \methodname & Fix $Q$\\
\midrule
Medium-Replay & 11.31 & 24.45 & 86.45 & 80.32 \\
Medium  & 9.19  & 28.04 & 87.70 & 79.72 \\
Medium-Expert & 6.80  & 16.90 & 111.37 & 106.83 \\
\bottomrule
\end{tabular}
\label{tab:ablation-fixq}
\end{table}




\paragraph{Asymmetric vs.\ Symmetric Indicator Functions.}
We investigate the necessity of the \textbf{asymmetric indicator function} by considering a symmetric variant:
\begin{equation*}
\mathbb{I}_{\text{sym}} = 
\begin{cases} 
1, & \text{if } \text{sgn}(\delta) \big( Q_{\psi}(s_i, \hat{a}_i^\delta) -Q^{\perp}_{\psi}(s_i, \hat{a}_i) \big) < 0, \\
-1, & \text{otherwise}.
\end{cases}
\end{equation*}
Unlike our original formulation, which only penalizes constraint violations, this variant explicitly encourages the actor to further maximize the value gap even when the ranking is already correct. 

As shown in \cref{tab:loss-aba}, this ``active alignment'' approach does not generally improve performance. On the contrary, it significantly increases training instability and even leads to behavioral collapse in environments like Hopper and Walker2d. This suggests that the primary role of the alignment loss should be \textit{correcting} misaligned rankings rather than continuously pushing actions beyond the critic's reliable regions, which may introduce detrimental gradient noise.

\paragraph{Sensitivity to the Alignment Loss Form.}
We further study the sensitivity of \methodname{} to the functional form of the alignment loss.
Specifically, we replace the default linear penalty with a Squared Penalty,
\begin{align*}
\big( Q_{\psi}(s_i, \hat{a}_i^\delta) -  Q^{\perp}_{\psi}(s_i, \hat{a}_i)\big)^2,
\end{align*}
which imposes a quadratic cost on directional inconsistencies.

As shown in \cref{tab:loss-aba}, this variant yields comparable performance across all tasks, with only minor deviations from the default formulation.
Empirically, introducing higher-order penalties does not yield consistent improvements, which leads us to adopt the simplest formulation, viewable as a first-order approximation of the value difference under RTG perturbations.

\begin{table}[htbp]
\centering
\caption{Ablation of the alignment loss structure. We compare the default \methodname{} with its symmetric and squared (L2) variants. }
\begin{tabular}{l ccc}
\toprule
Dataset & \methodname{} (Ours) & Symmetric Penalty &Square Penalty\\
\midrule
Hopper-medium & 102.5 & 42.9 (collapse) &98.7\\
Walker2d-medium & 95.1 & 14.3 (collapse)&93.4 \\
Halfcheetah-medium & 65.5 & 63.4 &68.7\\
\bottomrule
\end{tabular}
\label{tab:loss-aba}
\end{table}

\paragraph{Sensitivity to RTG $\sigma_e$.}
We evaluate the sensitivity of \methodname{} to the perturbation magnitude $\sigma_e$, which is defined relative to the normalized RTG range. 
\Cref{tab:sigma-aba} reports the performance on the \texttt{halfcheetah-medium} task across a wide range of $\sigma_e$ values.

The results indicate that \methodname{} is highly robust to the choice of $\sigma_e$ within a reasonable range (from 1 to 50). 
Even a large perturbation ($\sigma_e = 50$) does not lead to performance collapse, suggesting that the directional signal remains informative. 
By contrast, an excessively small perturbation (e.g., $\sigma_e = 0.5$) limits the actor's ability to explore the value landscape, resulting in suboptimal alignment.

\begin{table}[htbp]
\centering
\caption{Sensitivity analysis of the perturbation scale $\sigma_e$ on \texttt{halfcheetah-medium}. The noise is applied to the normalized RTG (scaled by $1/1000$). \methodname{} exhibits strong robustness across an order of magnitude of noise levels.}
\vspace{2mm}
\begin{tabular}{l cccccc}
\toprule
$\sigma_e$ & 0.5 & 1.0 & 5.0 & 10.0 & 15.0 & 50.0 \\
\midrule
D4RL Score & 43.3 & \textbf{66.7} & 65.7 & 65.3 & 65.5 & 64.7 \\
\bottomrule
\end{tabular}
\label{tab:sigma-aba}
\end{table}
\paragraph{Sensitivity to RTG $\lambda_e$.}
We further evaluate the sensitivity of \methodname{} to the alignment loss weight $\lambda_e$. As shown in \cref{tab:lambda-aba}, we report the performance on \texttt{halfcheetah-medium} across a wide spectrum of weights. 

The results reveal a clear phase transition: at $\lambda_e = 0$, where the directional guidance is absent, the model achieves a score of only $42.84$, consistent with standard supervised learning baselines. However, with even a small weight ($\lambda_e = 1.0$), the performance surges to $61.7$. Crucially, for $\lambda_e \ge 3.0$, the performance enters a stable plateau, with scores remaining consistently above $64.0$ regardless of the specific weight chosen. While the peak performance reaches $68.2$ at $\lambda_e = 100.0$, the marginal gains across the $3.0$ to $100.0$ range suggest that \methodname{} is highly robust to this hyperparameter. This insensitivity allows for a ``plug-and-play'' deployment without the need for exhaustive, task-specific fine-tuning.

\paragraph{Sensitivity to $Q$ Function Accuracy} 

We investigate the sensitivity of \methodname{} to inaccuracies in the learned $Q$ function. Specifically, we perturb the target critic $Q_{\psi'}$ by injecting Gaussian noise $\epsilon \sim \mathcal{N}(0, \sigma_n^2)$ to simulate potential estimation errors. 

We conduct experiments on \texttt{halfcheetah-medium} and \texttt{walker2d-medium} with noise scales $\sigma_n \in \{1, 10, 50\}$, and report the results in Table~\ref{tab:qacc-aba}. Notably, \methodname{} exhibits remarkable resilience, maintaining stable performance even under substantial perturbations. This indicates that the method is  insensitive to the exact accuracy of the $Q$-function. We attribute this robustness to the alignment objective, which emphasizes the \emph{relative ordering} and \emph{directional consistency} of $Q$-values rather than their absolute magnitudes. As long as the critic preserves coarse rank-ordering of actions, the alignment loss provides sufficient guidance for effective policy extraction, making the learning process tolerant to estimation noise.

\begin{table}[htbp]
\centering
\caption{Sensitivity analysis of the value function. We report the normalized D4RL scores under varying noise scales $\sigma_n$.} 
\vspace{2mm}
\begin{tabular}{l cccc}
\toprule
$\sigma_n$ & 0 & 1.0 &  10 & 50.0  \\
\midrule
halfcheetah-medium& 65.5 & 65.1 & 64.4 & 63.4 \\
walker2d-medium&95.1&91.1&91.8&90.7\\
\bottomrule
\end{tabular}
\label{tab:qacc-aba}
\end{table}

\begin{table}[htbp]
\centering
\caption{Sensitivity analysis of the perturbation scale $\lambda_e$ on \texttt{halfcheetah-medium}. }
\vspace{2mm}
\begin{tabular}{l cccccc}
\toprule
$\lambda_e$ & 0 & 1.0 & 3.0 & 5.0 & 10.0 & 100.0 \\
\midrule
D4RL Score & 42.84 & 61.7 & 64.42 & 65.5 & 65.3 & \textbf{68.2} \\
\bottomrule
\end{tabular}
\label{tab:lambda-aba}
\end{table}

\subsection{Extended Generalization on \texttt{HalfCheetah-Vel}}

While the standard \texttt{halfcheetah-vel} task focuses on target velocities within the $[0,3]$ range, our model leverages the diverse transitions in the \texttt{medium-expert} dataset to acquire a behavioral repertoire capable of reaching velocities exceeding 10. To fully explore the limits of its controllability, we evaluate \methodname{} on extended target velocities that far exceed those typically used in Meta-RL benchmarks. 

We follow the evaluation protocol in \cref{Halfcheetah-vel-Transfer}, extending the episode horizon to 1000 steps to allow sufficient time for the agent to accelerate to these higher targets. Since the original \texttt{halfcheetah-vel} dataset is restricted to low-velocity samples, we use CSMs trained on the broader \texttt{halfcheetah-medium-expert} dataset as baselines. As reported in \cref{tab:halfcheetah-vel-rtg-high}, \methodname{} demonstrates a remarkably wide effective range of controllability, accurately tracking high-velocity targets even when trained under a singular and straightforward reward objective (i.e., where reward simply scales with velocity).

\begin{table}[htbp]
\centering
\caption{
\textbf{Performance under high target velocities on \texttt{HalfCheetah-vel}.}
Maximum episode return is reported for each target velocity
(horizon = 1000).
}
\label{tab:halfcheetah-vel-rtg-high}
\vspace{2mm}

\begin{tabular}{c r r r r r}
\toprule
Target Vel. & DC & DT & RADT & QT & \methodname{} \\
\midrule
4.0  & -1749.01 & -1744.94 & -1709.06 & -1899.36 & -1289.17 \\
5.0  & -1332.75 & -1573.92 & -2452.64 & -2716.32 & -1246.46  \\
6.0  & -1061.06 & -1094.22 & -1085.82 & -2247.88 & -1203.69 \\
7.0  & -3666.29 & -3627.29 & -1026.00 & -3967.86 & -1441.51 \\
8.0  & -3519.51 & -4101.90 & -2100.74 & -4781.54 & -2366.28 \\
9.0  & -2850.36 & -3249.43 & -3888.88 & -4188.40 & -2422.39 \\
10.0 & -2162.07 & -2412.16 & -3261.80 & -3528.83 & -1493.27 \\
\midrule
\textbf{Mean} 
     & -2334.44 & -2543.41 & -2217.85 & -3332.88 & \textbf{-1637.54} \\
\bottomrule
\end{tabular}
\end{table}


\subsection{Behavioral Analysis on \texttt{Ant}}
\label{Ant-Analysis}
We also evaluate \methodname{} on \texttt{ant-medium} to examine whether the agent can modulate its behavior under different target RTGs. As shown in \cref{fig:ant-behavior-analysis}, the agent demonstrates similar trends to \texttt{halfcheetah}: higher target RTGs induce faster locomotion, while lower RTGs lead to more conservative movement. These results suggest that, even when trained only on scalar returns, the actor can learn distinct behaviors that are controllable via the RTG tokens, indicating that the structured mapping from RTG to actions generalizes across different environments.

\begin{figure}[ht]
    \centering
    \begin{subfigure}[b]{\linewidth}
        \centering
        \includegraphics[height=4cm,keepaspectratio]{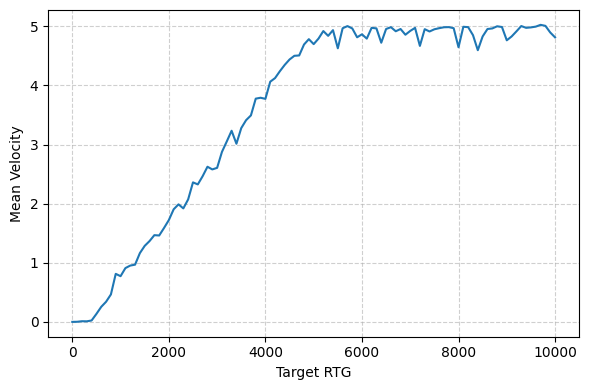}%
        \hspace{0.02\linewidth}%
        \includegraphics[height=4cm,keepaspectratio]{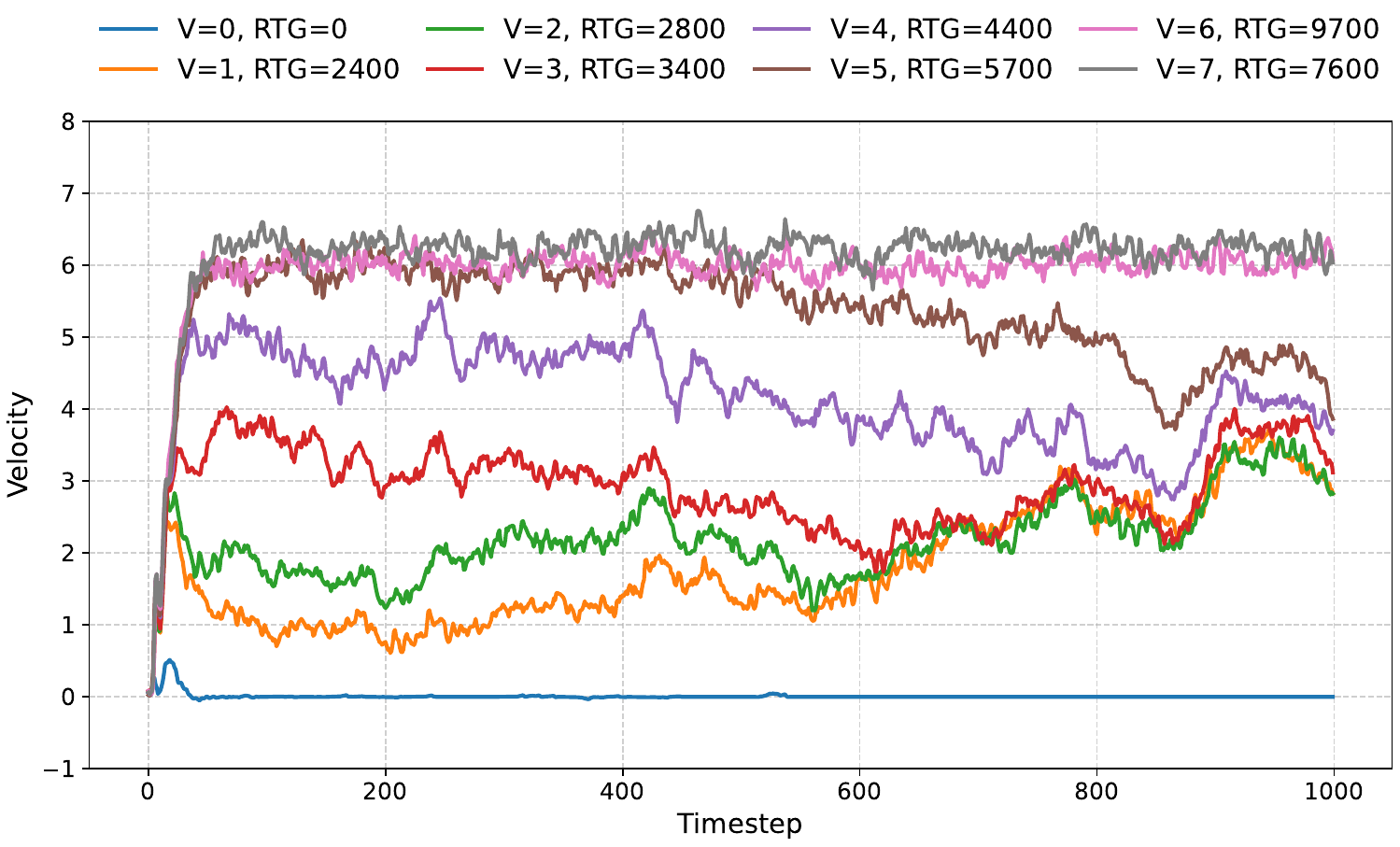}%
    \end{subfigure}

    \caption{Behavioral modulation of the agent on \texttt{ant-medium} under varying target RTGs. (Left) The relationship between target RTG and realized mean velocity; (Right) Velocity trajectories across different RTG levels averaged over 30 rollouts.}
    \label{fig:ant-behavior-analysis}
\end{figure}

\subsection{Additional Experiments}
\rev{We first show the last iteration performance of DT, QT and \methodname{} in \cref{tab:gym_results_last}}

\begin{table}[t]
\centering
\revtab{
\caption{ Last-iteration performance comparison on D4RL Gym tasks. }
\label{tab:gym_results_last}
\begin{tabular}{lccc}
\toprule
Task & DT & QT & Q-Align DT  \\
\midrule
HalfCheetah-medium  & 42.20  & 50.5  & $64.8 \pm 0.7$ \\
HalfCheetah-medium-replay & 38.91  & 48.4  & $52.3 \pm 1.5$ \\
HalfCheetah-medium-expert & 91.55  & 94.0  & $96.6 \pm 1.7$ \\
Hopper-medium  & 65.10  & 63.2  & $98.7 \pm 3.4$ \\
Hopper-medium-replay & 81.77  & 99.4  & $99.9 \pm 1.3$ \\
Hopper-medium-expert & 110.44 & 108.2 & $112.5 \pm 1.4$ \\
Walker2d-medium  & 67.63  & 86.5  & $92.5 \pm 2.9$ \\
Walker2d-medium-replay & 59.86  & 88.5  & $100.8 \pm 1.8$ \\
Walker2d-medium-expert & 107.11 & 110.5 & $115.8 \pm 1.9$ \\
\midrule
Sum   & 664.6  & 749.2 & 828.8 \\
\bottomrule
\end{tabular}
 }
\end{table}

\rev{We report a detailed ablation of the architectural change and the alignment objective in \cref{tab:arch-full-res}. To make the attribution clearer, we
compare vanilla DT, DT with 1D convolution, DT with the alignment loss, and Q-Align DT in terms of alignment MSE. As shown in \cref{tab:arch-full-res}, 1D convolution alone slightly reduces the alignment MSE, while the alignment loss provides the dominant improvement. Combining both components achieves the lowest alignment MSE, indicating that the main gain comes from the proposed alignment objective rather than the architectural modification alone.}

\begin{table}[t]
\centering
\revtab{
\caption{Ablation study on the effect of 1D convolution and the alignment loss, measured by alignment MSE. Lower is better.}
\label{tab:arch-full-res}
\begin{tabular}{lcccc}
\toprule
Dataset & DT & DT + 1DConv & DT + Align Loss & Q-Align DT \\
\midrule
Medium-Replay & 20.32 & 18.07 & 13.37 & 11.31 \\
Medium        & 25.33 & 20.32 & 14.22 & 9.19  \\
Medium-Expert & 18.53 & 19.43 & 10.70 & 6.80  \\
\midrule
Mean          & 21.39 & 19.27 & 12.76 & 9.12  \\
\bottomrule
\end{tabular}
}
\end{table}

\rev{We further report the training cost of DT, QT, and \methodname{} in
\cref{tab:training_time}, where each method is trained for 100 epochs with
1000 steps per epoch. QT and \methodname{} are slower than DT because both use
an additional Q-function. Compared with QT, \methodname{} only adds extra
forward computation for perturbed-action generation, without introducing
additional backward-pass overhead. As a result, its practical training cost
remains comparable to QT.}

\begin{table}[tbh]
\centering
\revtab{
\caption{Wall-clock training time under the same setting of 100 epochs and 1000 steps per epoch.}
\label{tab:training_time}
\begin{tabular}{lccc}
\toprule
Method & DT & QT & \methodname{} \\
\midrule
Training Time & $\sim$4 h 36 min & $\sim$6 h 7 min & $\sim$6 h 22 min \\
\bottomrule
\end{tabular}
}
\end{table}
\begin{table}[tbh]
\centering
\revtab{
\caption{Spearman correlation between predicted $Q$-values and realized returns across environments.}
\label{tab:spearman_corr}
\begin{tabular}{lc}
\toprule
Environment & Spearman Corr. \\
\midrule
Hopper-medium          & 0.94 \\
AntMaze-umaze          & 0.36 \\
AntMaze-umaze-diverse  & 0.26 \\
\bottomrule
\end{tabular}
}
\end{table}
\rev{To better understand the different behavior of \methodname{} on Gym and
AntMaze tasks, we further compute the Spearman correlation between predicted
$Q$-values and realized returns, as shown in \cref{tab:spearman_corr}.
The correlation is much lower on AntMaze than on dense-reward Gym tasks (e.g., 0.94 on Hopper-medium vs. 0.26 on AntMaze-umaze-diverse). This suggests that the $Q$-guided alignment signal is substantially weaker in sparse-reward environments, especially on the diverse split. We view this phenomenon as a limitation of \methodname{} in sparse-reward settings and leave further
improvement to future work.}

\subsection{Further Analysis on Alignment Stability}
\label{app:alignment-analysis}

We first present the complete alignment curves for \methodname{} across all nine Gym tasks in \cref{fig:gym_tasks}, demonstrating consistently robust alignment. 

Beyond overall performance, we examine scenarios in which alignment degrades, 
specifically under a fixed critic or when the RTG offset is near zero. 
As shown in \cref{fig:No_update_Q}, when the $Q$-function is fixed, the agent’s performance 
under low-RTG targets effectively collapses to zero. A stable response only emerges 
once the target RTG exceeds a certain threshold, indicating a sharp transition in behavior. 
In \cref{fig:Vel_No_update_Q}, we further report the agent’s velocity at each timestep, 
which shows that the misalignment in the low-RTG regime is caused by early timesteps collapse.


We further examine the impact of RTG offsets in \cref{fig:Diff_offset}. Notably, even with synchronized training, a near-zero offset replicates the failure mode of the fixed-critic baseline. This is because a sufficient $\Delta\mathrm{RTG}$ is essential to ensure that the $Q$-function evaluates the better-performing policies within the evolving policy family. Without this positive offset, the alignment objective is guided by value estimates of mediocre or failing behaviors. Consequently, the policy lacks the structural guidance needed to remain stable under low-RTG targets, leading to the observed performance collapse where realized returns drop to zero.

\begin{figure}[tbh]
    \centering
    \begin{subfigure}[b]{0.32\textwidth}
        \includegraphics[width=\textwidth]{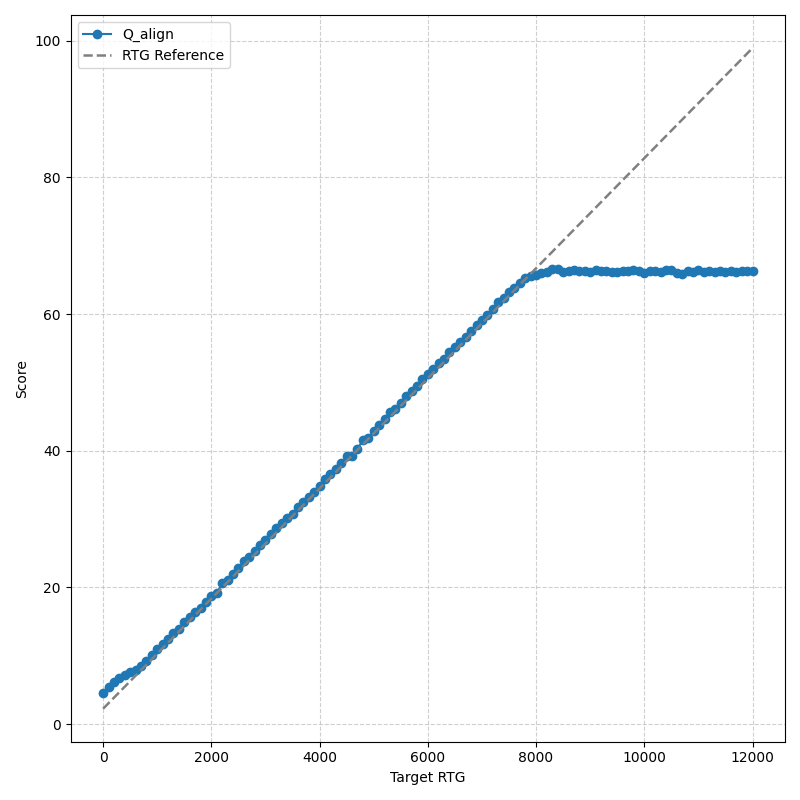}
        \caption{HalfCheetah Medium}
    \end{subfigure}
    \begin{subfigure}[b]{0.32\textwidth}
        \includegraphics[width=\textwidth]{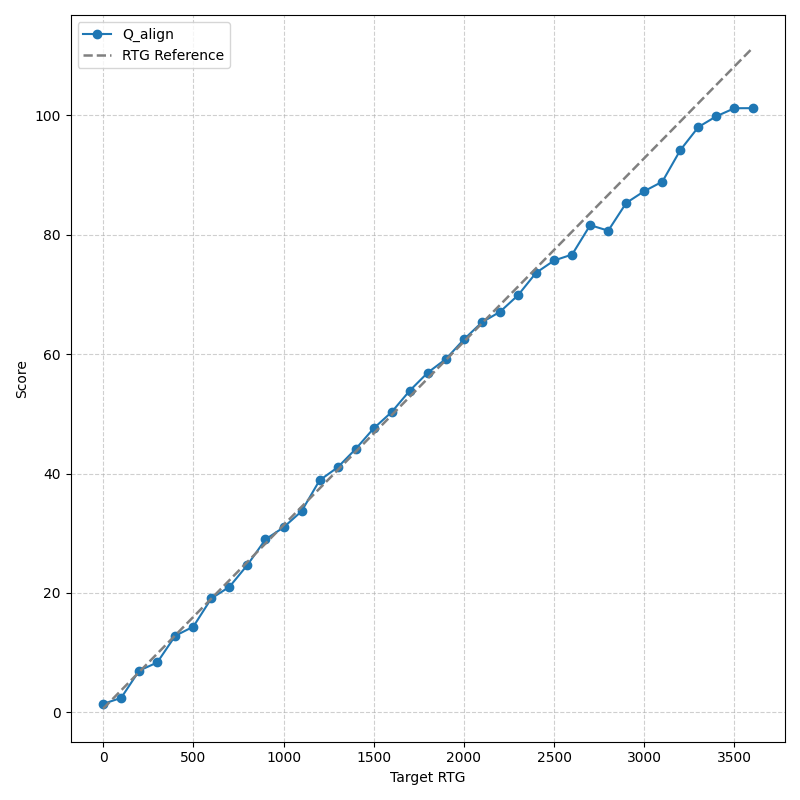}
        \caption{Hopper Medium}
    \end{subfigure}
    \begin{subfigure}[b]{0.32\textwidth}
        \includegraphics[width=\textwidth]{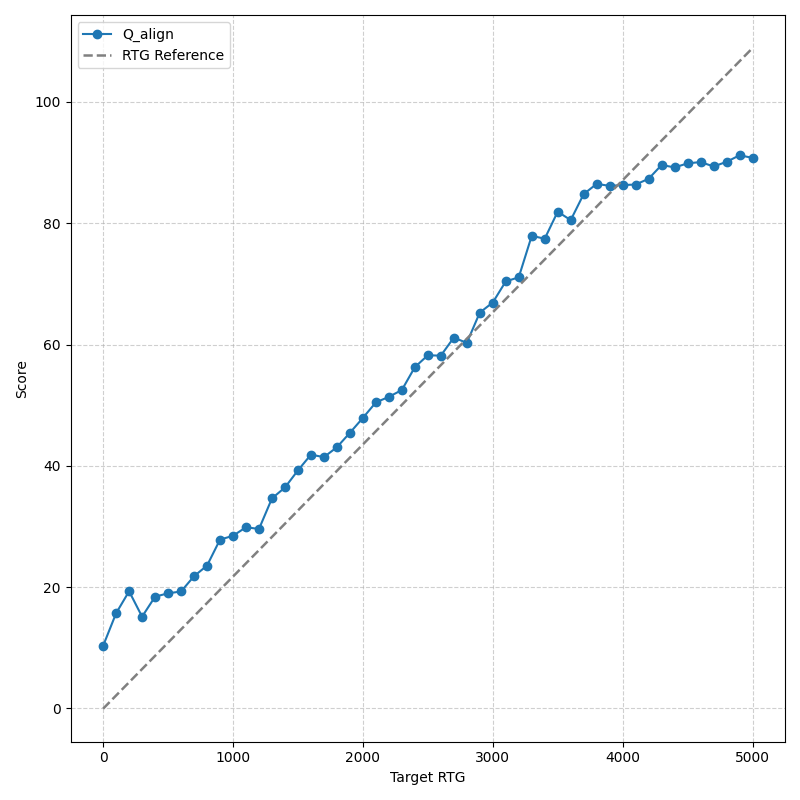}
        \caption{Walker2d Medium}
    \end{subfigure}

    \begin{subfigure}[b]{0.32\textwidth}
        \includegraphics[width=\textwidth]{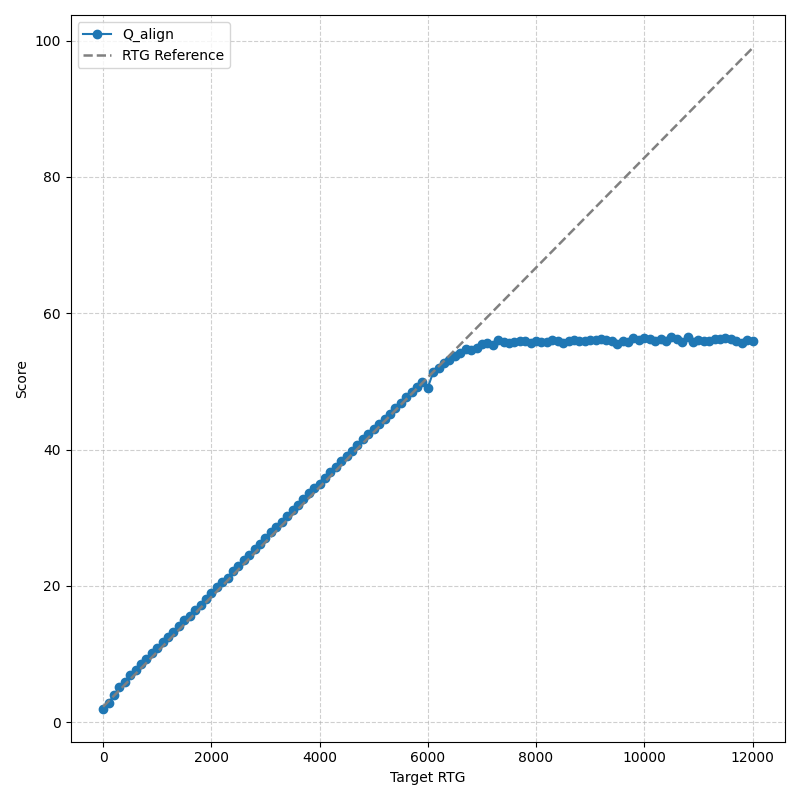}
        \caption{HalfCheetah Medium-Replay}
    \end{subfigure}
    \begin{subfigure}[b]{0.32\textwidth}
        \includegraphics[width=\textwidth]{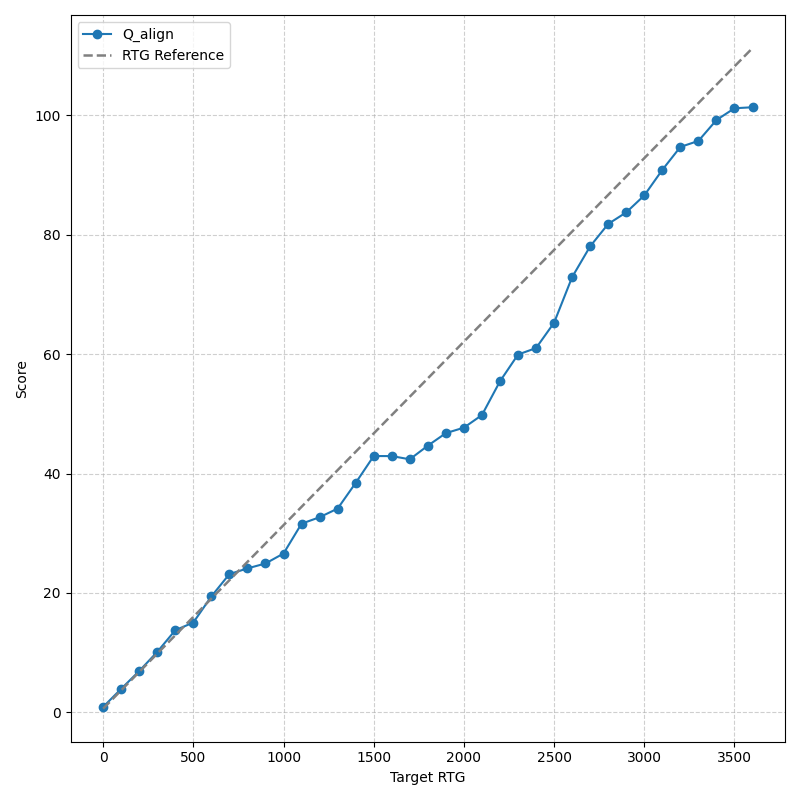}
        \caption{Hopper Medium-Replay}
    \end{subfigure}
    \begin{subfigure}[b]{0.32\textwidth}
        \includegraphics[width=\textwidth]{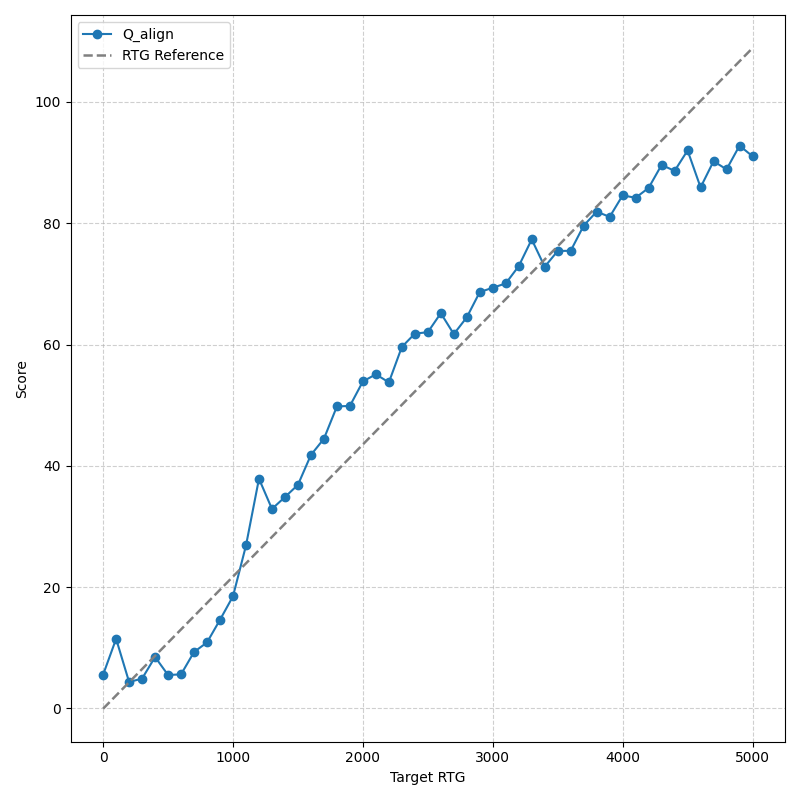}
        \caption{Walker2d Medium-Replay}
    \end{subfigure}

    \begin{subfigure}[b]{0.32\textwidth}
        \includegraphics[width=\textwidth]{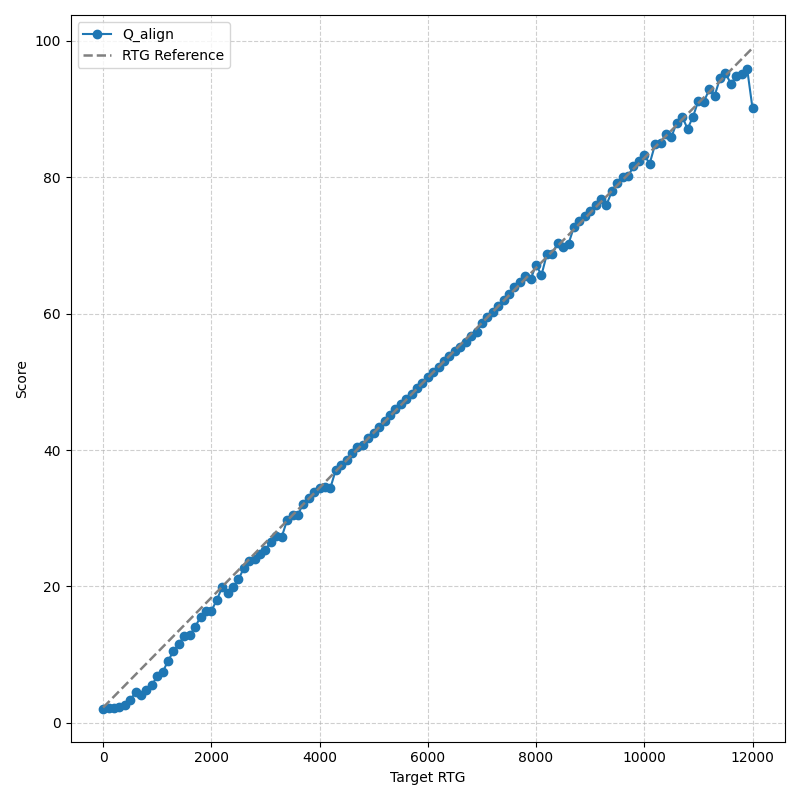}
        \caption{HalfCheetah Medium-Expert}
    \end{subfigure}
    \begin{subfigure}[b]{0.32\textwidth}
        \includegraphics[width=\textwidth]{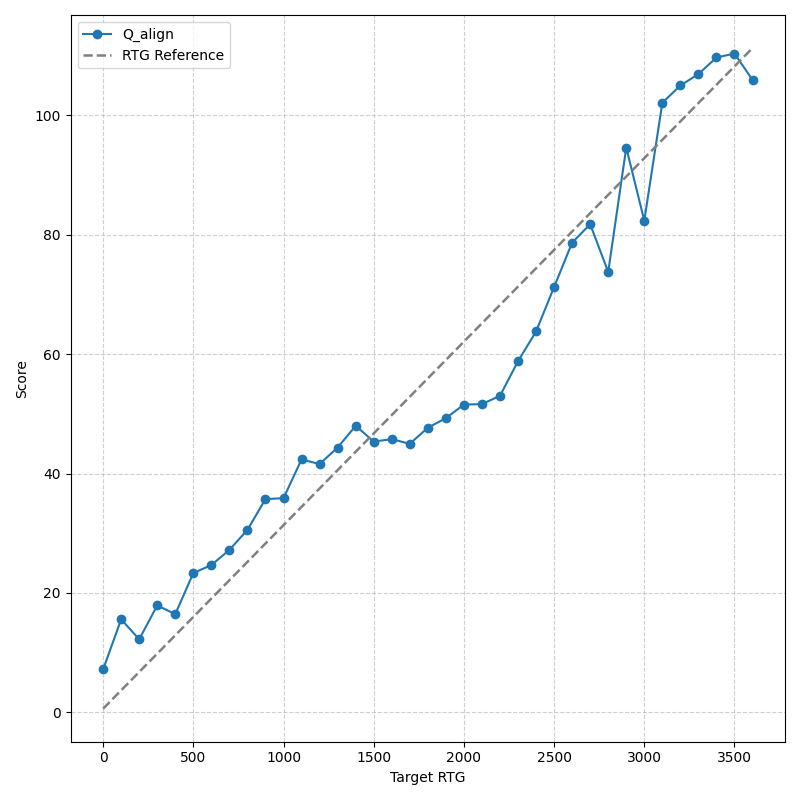}
        \caption{Hopper Medium-Expert}
    \end{subfigure}
    \begin{subfigure}[b]{0.32\textwidth}
        \includegraphics[width=\textwidth]{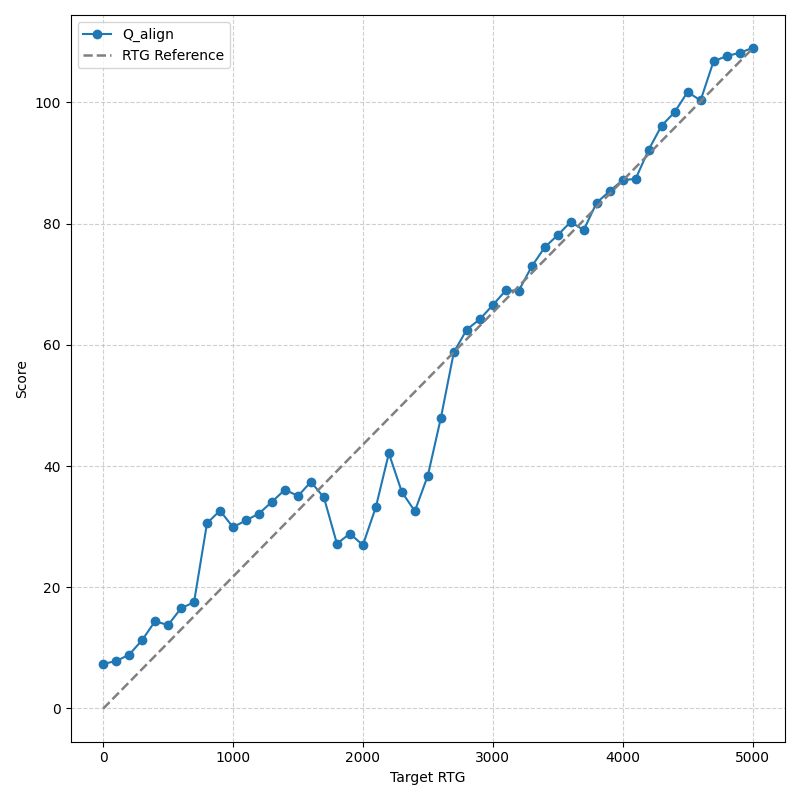}
        \caption{Walker2d Medium-Expert}
    \end{subfigure}

    \caption{Performance of \methodname~across Gym locomotion tasks. Each row corresponds to one dataset type, and each column corresponds to one environment.}
    \label{fig:gym_tasks}
\end{figure}

\begin{figure}[ht]
    \centering
    \begin{subfigure}[b]{0.4\textwidth}
        \includegraphics[width=\textwidth]{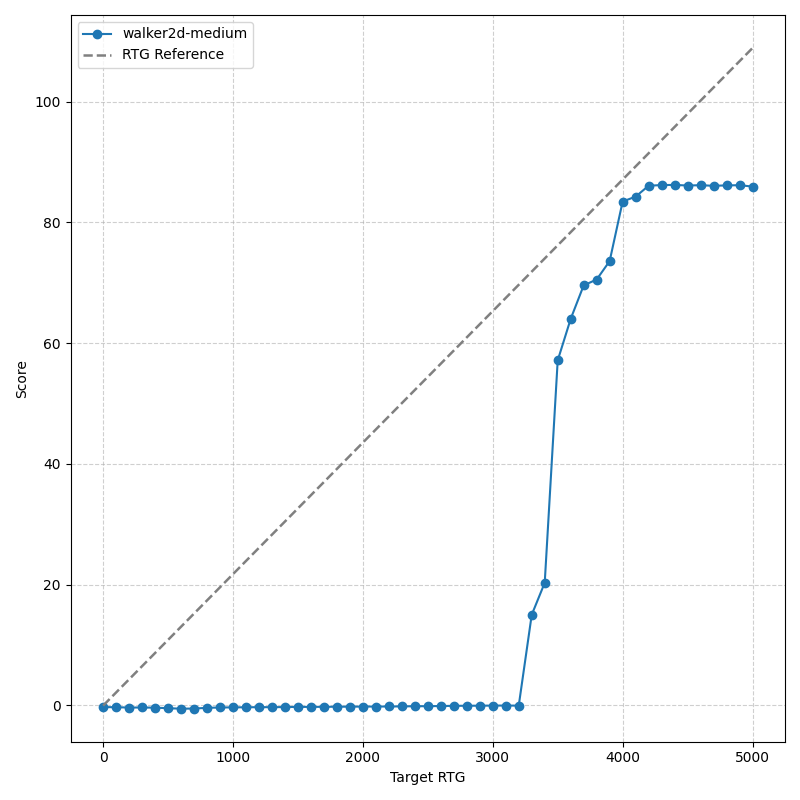}
        \caption{Walker2d Medium}
    \end{subfigure}
    \begin{subfigure}[b]{0.4\textwidth}
        \includegraphics[width=\textwidth]{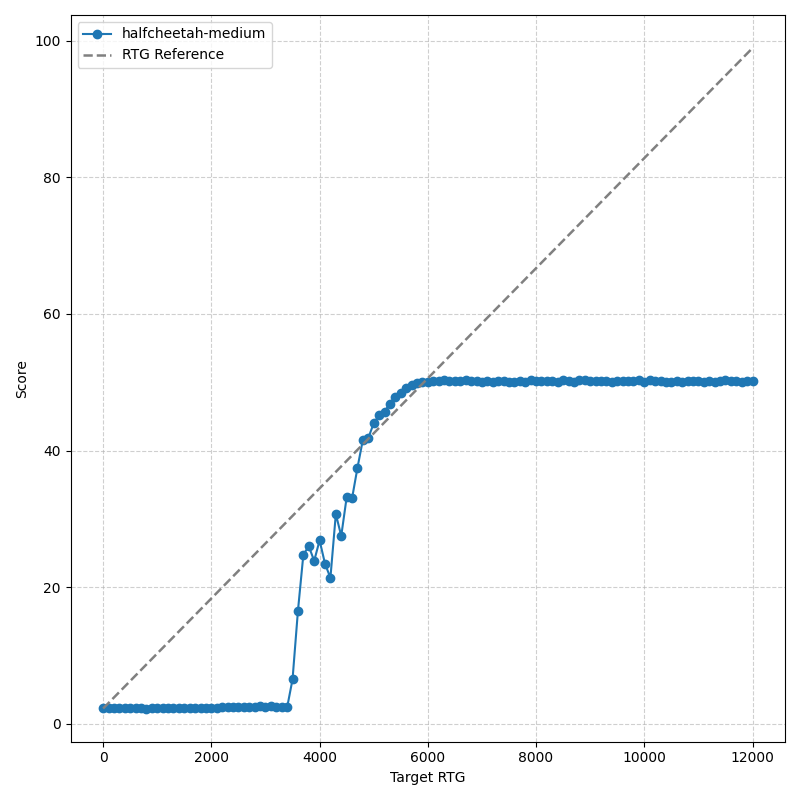}
        \caption{Halfcheetah Medium}
    \end{subfigure}
   
    \caption{Results of \texttt{walker2d-medium} and \texttt{halfcheetah-medium} when $Q$ is fixed.}
    \label{fig:No_update_Q}
\end{figure}
\begin{figure}[ht]
    \centering
    \begin{subfigure}[b]{0.4\textwidth}
        \includegraphics[width=\textwidth]{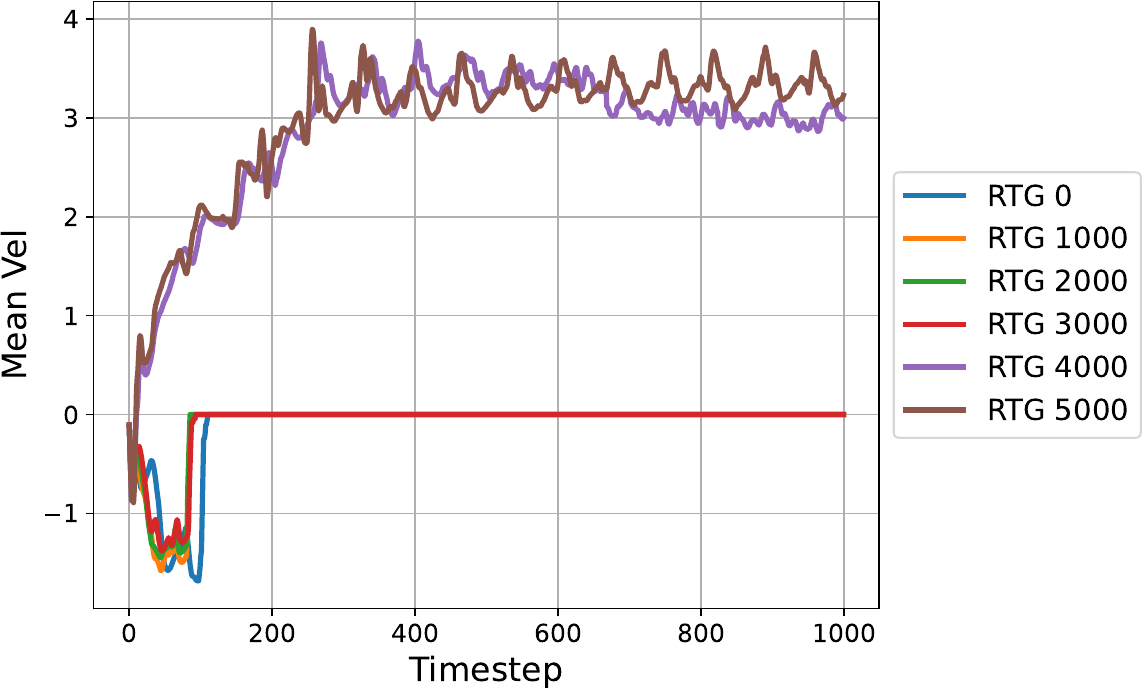}
        \caption{Walker2d Medium}
    \end{subfigure}
    \begin{subfigure}[b]{0.4\textwidth}
        \includegraphics[width=\textwidth]{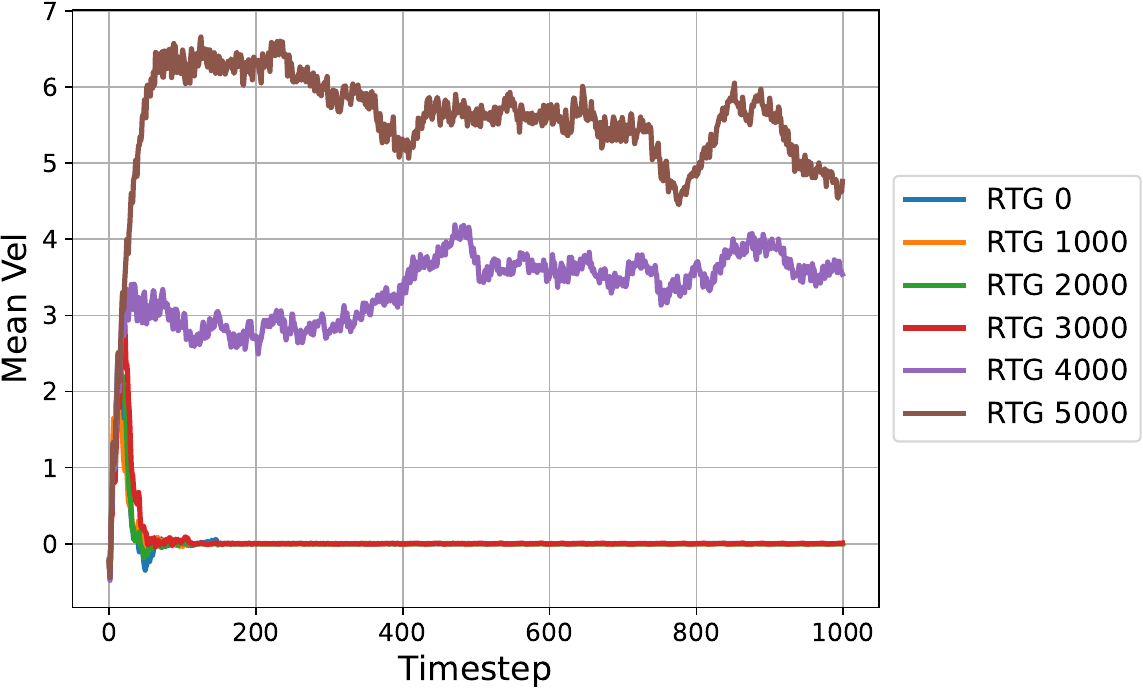}
        \caption{Halfcheetah Medium}
    \end{subfigure}
   
    \caption{Velocity on \texttt{walker2d-medium} and \texttt{halfcheetah-medium} with a fixed $Q$. At lower RTG targets, the model consistently \textbf{collapses} within the first few timesteps, rendering it \textbf{unresponsive} to RTG changes. A meaningful behavioral response to the conditioning only emerges after the RTG exceeds a critical threshold, allowing the model to escape early collapse.}
    \label{fig:Vel_No_update_Q}
\end{figure}
\begin{figure}[ht]
    \centering
    \begin{subfigure}[b]{0.32\textwidth}
        \includegraphics[width=\textwidth]{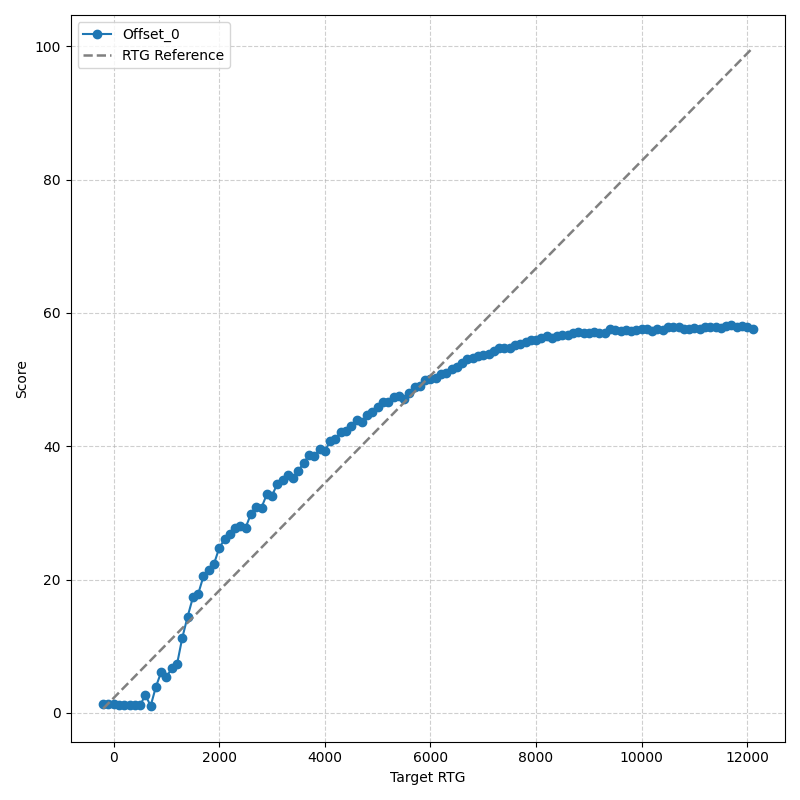}
        \caption{Offset=0}
    \end{subfigure}
    \begin{subfigure}[b]{0.32\textwidth}
        \includegraphics[width=\textwidth]{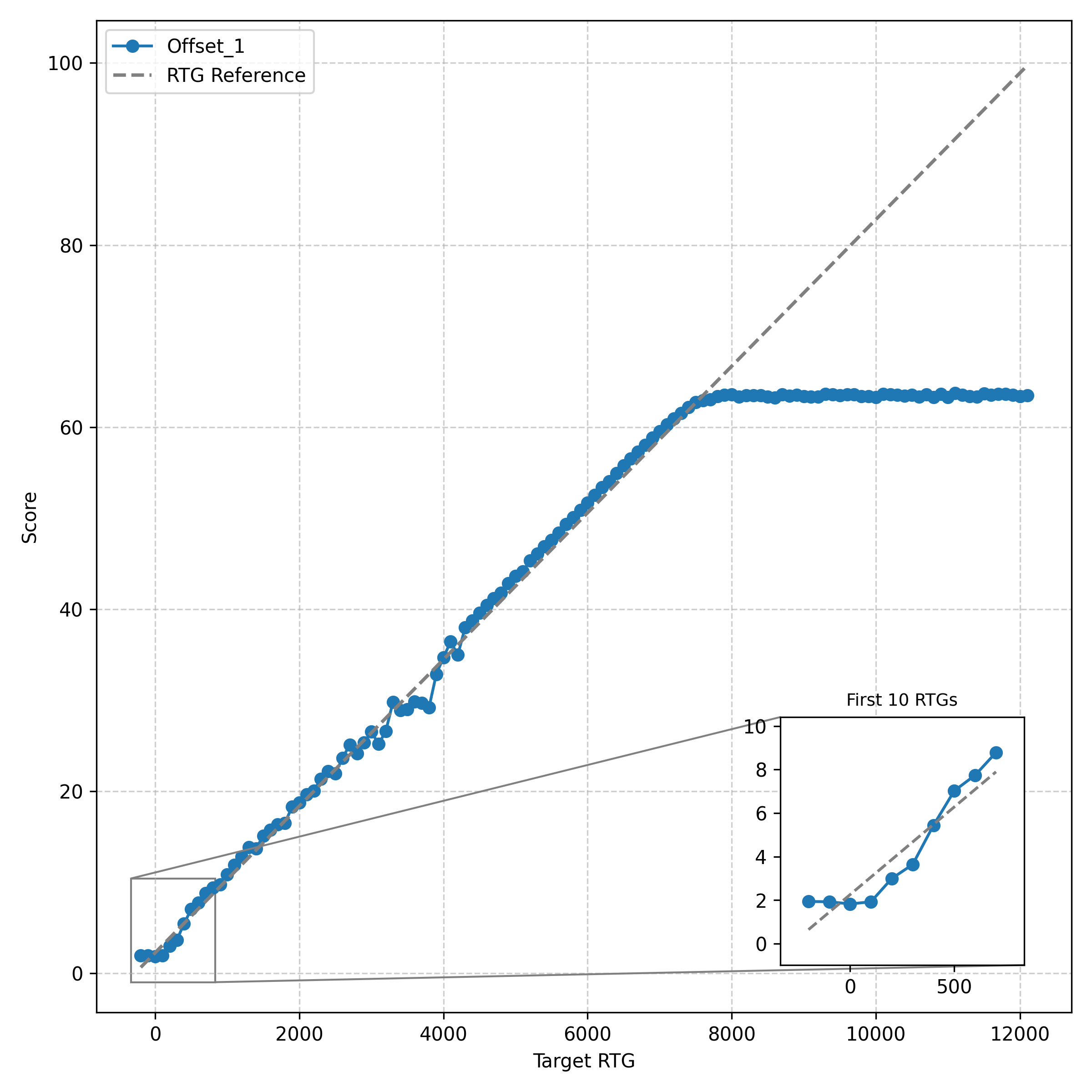}
        \caption{Offset=1}
    \end{subfigure}
    \begin{subfigure}[b]{0.32\textwidth}
        \includegraphics[width=\textwidth]{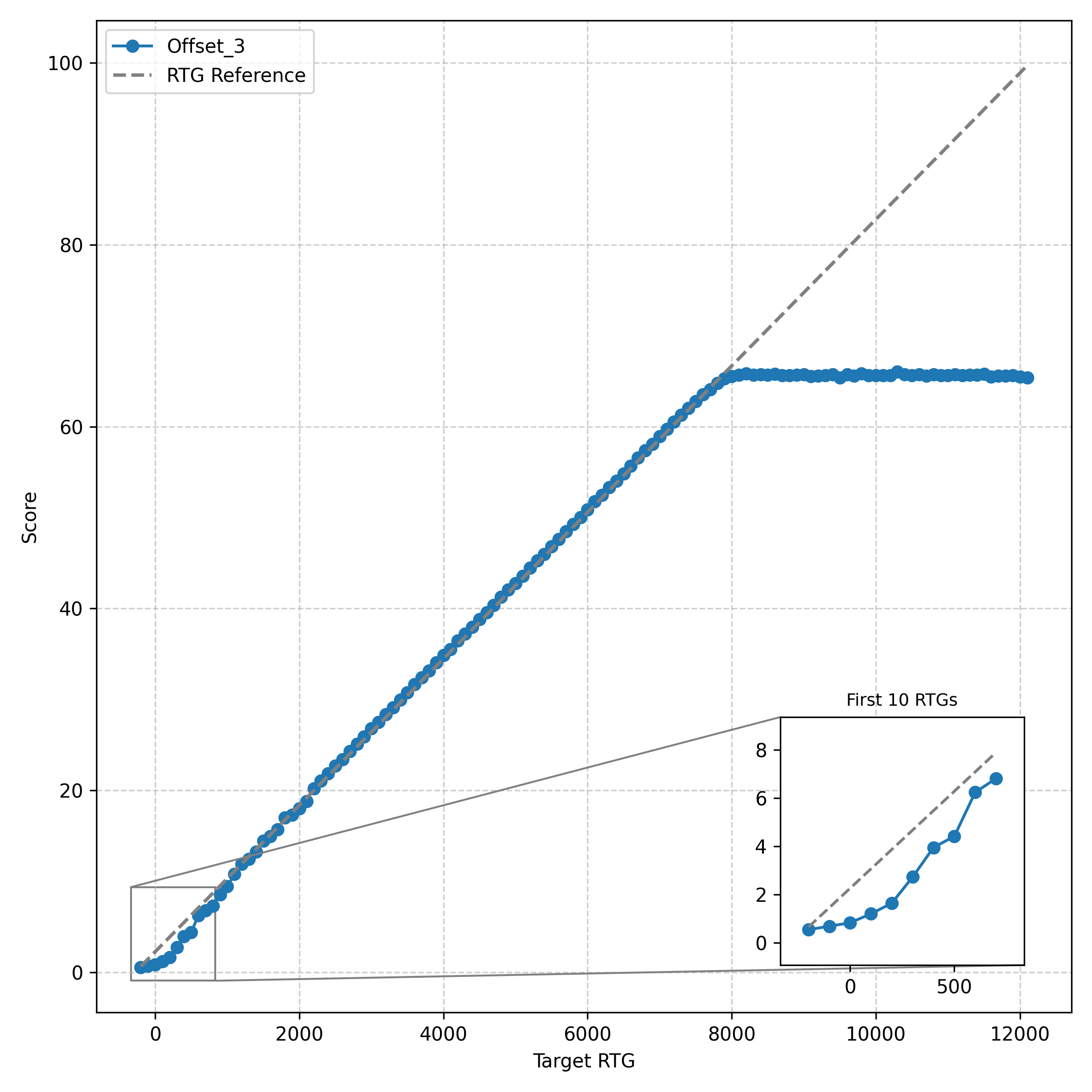}
        \caption{Offset=3}
    \end{subfigure}
    \caption{Results \texttt{halfcheetah-medium}'s alignment curve under different RTG offset.}
        \label{fig:Diff_offset}

\end{figure}

\end{document}
